\newcommand\cut[1]{{}}  
\xpatchcmd{\algorithmic}{\itemsep\z@}{\itemsep=.5ex plus.5pt}{}{}
\newcommand{\vI}{{\mathbf{I}}}
\newcommand{\vP}{{\mathbf{P}}}
\newcommand{\vU}{{\mathbf{U}}}
\newcommand{\vV}{{\mathbf{V}}}
\newcommand{\sA}{{\mathscr{A}}}
\newcommand{\cA}{{\mathcal{A}}}
\newcommand{\cB}{{\mathcal{B}}}
\newcommand{\cE}{{\mathcal{E}}}
\newcommand{\cF}{{\mathcal{F}}}
\newcommand{\cM}{{\mathcal{M}}}
\newcommand{\cO}{{\mathcal{O}}}
\newcommand{\cQ}{{\mathcal{Q}}}
\newcommand{\cS}{{\mathcal{S}}}
\newcommand{\cV}{{\mathcal{V}}}
\newcommand{\cX}{{\mathcal{X}}}
\newcommand{\cY}{{\mathcal{Y}}}
\newcommand{\RR}{\mathbb{R}}
\newcommand{\NN}{\mathbb{N}}
\newcommand{\Proj}{{\mathrm{Proj}}}
\renewcommand{\iota}{I}
\DeclareMathOperator*{\argmax}{argmax}
\newcommand{\bc}{\begin{center}}
\newcommand{\ec}{\end{center}}
\newcommand{\bdm}{\begin{displaymath}}
\newcommand{\edm}{\end{displaymath}}
\newcommand{\beq}{\begin{equation}}
\newcommand{\eeq}{\end{equation}}
\newcommand{\bfl}{\begin{flushleft}}
\newcommand{\efl}{\end{flushleft}}
\newcommand{\bt}{\begin{tabbing}}
\newcommand{\et}{\end{tabbing}}
\newcommand{\beqn}{\begin{align}}
\newcommand{\eeqn}{\end{align}}
\newcommand{\beqs}{\begin{align*}} 
\newcommand{\eeqs}{\end{align*}}  
\providecommand{\keywords}[1]
{
  \small	
  \textbf{\textit{Keywords---}} #1
}
\newtheorem{theorem}{Theorem}
\newtheorem{assumption}{Assumption}
\newtheorem{definition}{Definition}
\newtheorem{corollary}{Corollary}
\newtheorem{lemma}{Lemma}
\newtheorem{proposition}{Proposition}
\newtheorem{problem}{Problem}
\renewcommand*\env@matrix[1][\arraystretch]{%
  \edef\arraystretch{#1}%
  \hskip -\arraycolsep
  \let\@ifnextchar\new@ifnextchar
  \array{*\c@MaxMatrixCols c}}
\newcommand{\dist}{\mathrm{dist}}
\newcommand{\poly}{\mathrm{poly}}
\newcommand{\lin}[1]{\textcolor{red}{[Lin: #1]}}
\newcommand{\fei}[1]{\textcolor{orange}{[Fei: #1]}}
\title{How Does an Approximate Model Help in Reinforcement Learning?}
\author[1]{Fei Feng\thanks{fei.feng@math.ucla.edu}}
\author[1]{Wotao Yin\thanks{ wotaoyin@math.ucla.edu}}
\author[2]{Lin F. Yang\thanks{linyang@ee.ucla.edu, corresponding author.}}
\affil[1]{Department of Mathematics, UCLA}
\affil[2]{Department of  Electrical and Computer Engineering, UCLA}
\date{}
\begin{document}
\maketitle

\begin{abstract}
One of the key approaches to save samples in reinforcement learning (RL) is to use knowledge from an approximate model such as its simulator.
However, \emph{how much does an approximate model help to learn a near-optimal policy of the true unknown model}?
Despite numerous empirical studies of transfer reinforcement learning, an answer to this question is still elusive.
In this paper,
we study the sample complexity of RL while an approximate model of the environment is provided.
For an unknown Markov decision process (MDP), we show that the approximate model can effectively reduce the complexity by eliminating sub-optimal actions from the policy searching space.
In particular, we provide an algorithm that uses $\widetilde{O}(N/(1-\gamma)^3/\varepsilon^2)$ samples in a generative model to learn an $\varepsilon$-optimal policy, where $\gamma$ is the discount factor and $N$ is the number of near-optimal actions in the approximate model. This can be much smaller than the learning-from-scratch complexity $\widetilde{\Theta}(SA/(1-\gamma)^3/\varepsilon^2)$, where $S$ and $A$ are  the sizes of state and action spaces respectively.
We also provide a lower bound showing that the above upper bound is nearly-tight if the value gap between near-optimal actions and sub-optimal actions in the approximate model is sufficiently large. Our results provide a very precise characterization of how an approximate model helps reinforcement learning when no additional assumption on the model is posed.

\end{abstract}

\keywords{Transfer Reinforcement Learning, Value Gap, Sample Complexity, TV-distance}

\section{Introduction}
Reinforcement learning (RL) is the framework of learning to control an unknown system through trial and error. Recently, RL achieves phenomenal empirical successes, e.g, AlphaGo \citep{silver2016mastering} defeated the best human player in Go, and OpenAI used RL to precisely and robustly control a robotic arm \citep{NIPS2017_7090}.
The RL framework is general enough such that it can capture a broad spectrum of topics, including health care, traffic control, and experimental design \citep{sutton1992reinforcement, esteva2019guide, si2001online, wiering2000multi}. However,
\emph{successful} applications of RL in these domains are still rare.
The major obstacle that prevents RL being widely used is its high sample complexity:
both the AlphaGo and OpenAI arm took nearly a thousand years of human-equivalent experiences to achieve good performances.

One way to reduce the number of training samples is to mimic how human beings learn -- borrow knowledge from previous experiences.
In robotics research, a robot may need to accomplish different tasks at different times.
Instead of learning every task from scratch, a more ideal situation is that the robot can utilize the similarities between the underlying models of these tasks and adapt them to future new jobs quickly.
Another example is that RL agents are often trained in simulators and then applied to the real world \citep{ng2006autonomous,itsuki1995soccer,dosovitskiy2017carla}.
It is still desirable to have their performance improved after seeing samples collected from the real world.
One might hope that agents from simulators (approximate models) can adapt to the real world (true model) faster than knowing nothing. Both examples lead to a natural question:
\begin{quote}
\centering
    \textit{How does an approximate model help in RL?}
\end{quote}
This paper focuses on answering the above question. Suppose the true unknown model is a Markov Decision Process (MDP)  $\cM$ and the RL agent is provided with a prior model $\cM_0$ with the same state and action spaces as $\cM$ but different transition and reward functions. In particular, we have
$$\dist(\cM_0, \cM)\le \beta,$$ where $\dist(\cdot, \cdot)$ is a statistical distance and  $\beta$ is a small scalar. We would like to study the sample complexity of learning a
policy $\pi$ for $\cM$ such that its error\footnote{The error of a policy is the difference between the values of the policy and the optimal policy.} is at most $\varepsilon$.

In this paper, we consider one of the most natural choices for $\dist(\cdot, \cdot)$, the total-variation (TV) distance between the transition kernels of $\cM_0$ and $\cM$ (see the formal definition in Equation \eqref{eq:TV}). Under such a distance, we establish both upper and lower sample complexity bounds. Specifically, we utilize the fact that if two models are close under TV-distance, their optimal value functions are close under $\|\cdot\|_{\infty}$. Based on this, given the prior knowledge, for every state in $\cM$ we can split its available actions into two sets: one contains all \emph{potential-optimal} actions and the other contains all \emph{sub-optimal} actions. An action is potential-optimal if it has a chance to be optimal in $\cM$; otherwise, it is sub-optimal.
We develop a transfer RL algorithm that first identifies potential-optimal actions using the prior knowledge, then focuses exploration only on them. 
To obtain an $\varepsilon$-optimal policy, our algorithm enjoys a sample bound $\widetilde{\cO}\big(\frac{\overline{N}}{(1-\gamma)^3\varepsilon^2}\log(1/\delta)\big)$, where $\overline{N}$ equals the total number of potential-optimal actions over all states. $\overline{N}$ depends on the distance parameter $\beta$, the accuracy requirement $\varepsilon$, and the optimal value function of $\cM_0$. When $\overline{N}$ is smaller than the total number of state-action pairs in $\cM$, transfer learning achieves a smaller sample complexity than learning-from-scratch.

To complement our result, we further develop a sample lower bound  $\Omega\big(\frac{\underline{N}}{(1-\gamma)^3\varepsilon^2}\log(1/\delta)\big)$, where $\underline{N}$ characterizes the number of actions we must explore or we fail to learn a near-optimal policy with high probability for some $\cM\in B_{\text{TV}}(\cM_0, \beta)$. Roughly speaking, the more near-optimal actions there are in $\cM_0$, the more we shall consider and the larger $\underline{N}$ is. In particular, we have $\overline{N}\approx \underline{N}$ when there exists a large value gap in $\cM_0$. The reader is referred to Section \ref{sec:meet} for more discussion.
This lower bound essentially shows that our lower upper bound is nearly tight.
To achieve the lower bound, we leverage techniques for proving hardness in the bandit literature (e.g. \citealt{mannor2004sample}) and reinforcement learning (e.g. \citealt{azar2013minimax}).

Our results do not rely on additional structure assumptions but only the intrinsic value property of MDPs. To the best of our knowledge, this is the first systematic theoretical answer to the aforementioned question in this setting.
\cut{

Reinforcement learning (RL) is the framework of learning to control an unknown system through trial and error. Recently, RL achieves phenomenal empirical successes, e.g, AlphaGo \citep{silver2016mastering} defeated the best human player in Go, 
and OpenAI used RL to precisely and robustly control a robotic arm \citep{NIPS2017_7090}.
The RL framework is general enough such that it can capture a broad spectrum of topics, including health care, traffic control, and experimental design \citep{sutton1992reinforcement, esteva2019guide, si2001online, wiering2000multi, denil2016learning}.
However,
\emph{successful} applications of RL in these domains are still rare.
The major obstacle that prevents RL being widely used is its high sample complexity:
both the AlphaGo and OpenAI arm took nearly a thousand years of human-equivalent experiences to achieve good performances.

One way to reduce the number of training samples is to mimic how human beings learn -- borrow knowledge from previous experiences. 
In robotics research, a robot may need to accomplish different tasks at different time. 
Instead of learning every task from scratch, a more ideal situation is that the robot can utilize the similarities between the underlying models of these tasks and adapt them to future new jobs quickly.
Another example is that RL agents are often trained in simulators and then applied to real-world \citep{ng2006autonomous,itsuki1995soccer,dosovitskiy2017carla}.
It is still desirable to have their performance improved after seeing samples collected from the real-world.
One might hope that agents from simulators (approximate models) can adapt to the real world (true model) faster than knowing nothing.
Both examples lead to a natural question:
\begin{quote}
\centering
    \textit{How does an approximate model help in RL?}
\end{quote}


This paper focuses on answering the above question.
Suppose the true unknown model is a Markov Decision Process (MDP)  $\cM$ and the RL agent is provided with an approximate model $\cM_0$ with
$$\dist(\cM_0, \cM)\le \beta,$$ where $\dist(\cdot, \cdot)$ is a statistic distance and  $\beta$ is a small scalar.
We would like to study the sample complexity of learning a
policy $\pi$ for $\cM$ such that its error\footnote{The error of a policy is the difference between the values of the policy and the optimal policy.} is at most $\varepsilon$. 

We consider one of the most natural choices for $\dist(\cdot,\cdot)$, the total-variation (TV) distance between the transition kernels of $\cM_0$ and $\cM$ (see the formal definition in Section \ref{sec:pre}). Then, from e.g. \cite{puterman2014markov}, we can deduce that the optimal action-value functions of $\cM_0$ and $\cM$ are at most $\cO(\beta/(1-\gamma)^2)$ apart. Thus, given the full knowledge of $\cM_0$, we can compute a preliminary estimate of the action values of $\cM$. Such prior knowledge helps to eliminate sub-optimal actions and in later learning stage, one only needs to explore the remaining potential-optimal actions. Denote by $N$ the number of remaining actions. Given $\cM_0$, $\beta$, and $\varepsilon$, we can build estimate of $N$, i.e. $N\in[N^1_{\cM_0}, N^2_{\cM^0}]$. On one hand, we combine the prior knowledge with some mini-max optimal RL algorithm (e.g. \cite{sidford2018variance}) to establish the upper sample complexity bound as $\widetilde{\cO}(N^2_{\cM_0}/(1-\gamma)^3/\varepsilon^2)$ when a generative model is provided; on the other hand, we show a general lower bound $\Omega(N^1_{\cM_0}/(1-\gamma)^3/\varepsilon^2)$ no matter what algorithm or learning setting is applied. In particular, we have $N^2_{\cM_0}\approx N^1_{\cM_0}$ when the value gap between near-optimal actions and sub-optimal actions in $\cM_0$ is large and therefore, the upper bound is nearly tight. Our results include two extreme cases: 1. $N^2_{\cM_0}=0$, then no learning is needed and direct policy transfer is optimal; 2. $N^1_{\cM_0}\approx SA$, then the prior knowledge is basically useless and the sample complexity is equivalent to learning-from-scratch.

Our analysis does not rely on any additional structural assumptions but only the intrinsic value property of $\cM_0$. The proof of the upper bound is straightforward. We leverage techniques for proving hardness in the bandit literature (e.g. \citealt{mannor2004sample}) and reinforcement learning (e.g. \citealt{azar2013minimax}) to achieve the lower bound. To the best of our knowledge, this is the first systematic theoretical answer to the aforementioned question.
}

\cut{To complement the lower bound, we further investigate the possible structural information of a model that provably helps knowledge transferring.
We show that if the unknown model is in the convex hull of a set of $K$ known base models, we are able to obtain a high-precision control with a number of samples significantly fewer than that of learning from scratch.
Specifically, the number of samples is proportional to \[\cO(\poly(K))\] rather than the much larger $|\cS|$, the number of states in the model.}

\subsection{Related Work}
Transfer learning is an important strategy to reduce sample complexity in RL.
There are many different learning regimes, e.g, multi-task RL \citep{wilson2007multi, lazaric:inria-00475214, brunskill2013sample, ammar2014online, calandriello2014sparse}, lifelong RL \citep{tanaka1997approach, brunskill2014pac, abel2018policy, zhan2017scalable}, and meta-RL \citep{schweighofer2003meta, al-shedivat2018continuous, gupta2018meta, saemundsson2018meta, rakelly2019efficient}. Please also see surveys in \citealt{taylor2009transfer, lazaric2012transfer} and \citealt{yang2020transfer}.

In the above settings, often more than one prior model (task) is considered. These models are assumed to share structural similarity with the to-be-learned model, or they are all generated from a common distribution. For instance, in \citealt{brunskill2013sample}, all models are assumed to be drawn from a finite set of MDPs;
in \citealt{abel2018policy}, all models share the same transition dynamics but reward functions change with a hidden distribution;
in \citealt{modi2019sample} and \citealt{ayoub2020model}, every model's transition kernel and reward function lie in the linear span of several known base models.

In terms of how one approximate model can help, there are several theoretical works. In \citealt{jiang2018pac},
the authors use the number of incorrect state-action pairs to characterize the difference between two models, which is another interesting direction to look at besides the TV-distance as we adopt. However, the statistical distance from such a model to the true model can be arbitrarily large. As the authors show, even if the difference is only one state-action pair, the approximate model could still be information-theoretically useless. Additional conditions are needed to achieve positive transfer. In \citealt{mann2012directed}, the authors analyzed action-value transfer via inter-task mappings to guide exploration. They do not measure the difference between models but show that as long as the optimal action function of the source task is (almost) larger than that of the target task component-wisely, one can use the former as an initialization for value-based RL on the true model and achieve a smaller sample complexity compared compared with learning-from-scratch. Their working mechanism is similar to ours: eliminate sub-optimal state-action pairs from later exploration. The difference is we directly identify and remove sub-optimal actions and they achieve this implicitly through value functions. No lower bound is established there.


Compared with the aforementioned works, our results do not rely on additional assumptions and are more complete with both upper and lower bounds. In particular, our upper and lower bounds enjoy a unifying structure that is explicitly characterized by the value function, the TV-distance parameter, and the learning precision parameters.

\cut{\lin{I found it is a bit hard to understand what's going on in the following paragraphs. I have rewritten it above.}
As for successfully sharing knowledge, our assumption \lin{what assumpiton?} controls the number of parameters to learn. Such an idea of restricting models to a simpler space is adopted by a large body of works. In \citealt{brunskill2013sample}, there are only finitely many models showing up repetitively\lin{I don't understand this.}. During learning, one can first identify the new model, then borrow previous samples. In \citealt{calandriello2014sparse}, all tasks' value functions can be accurately represented in a linear approximation space and the coefficients are jointly sparse. In \citealt{modi2019sample}, every model's transition and reward lies in the linear span of $K$ base models, where the coefficients depend on some state-related feature space. Our setting is a special example of \citealt{modi2019sample}. While we study infinite horizon MDPs with a generative model to take samples and our objective function uses $l_2-$distance instead of $l_1$.}


\cut{
Reducing sample complexity is a core research goal in RL.
Many related sub-branches of RL, e.g., multi-task RL \citep{brunskill2013sample, ammar2014online, calandriello2014sparse}, lifelong RL \citep{abel2018policy, brunskill2014pac}, and meta-RL \citep{al2017continuous}, provide different schemes to utilize experiences from previous tasks. Please also see a survey paper \citep{taylor2009transfer} for more related works.
However, these results focus on different special cases of knowledge utilization rather than understanding the fundamental question of whether an approximate model is useful for policy learning and what guarantees we can have.

In the area of Sim-to-Real\footnote{It stands for simulator-to-real-environment}, some works point out that an imperfect approximate model may degrade the performance of learning and efforts have been made to address this issue, e.g \citep{kober2013reinforcement, buckman2018sample,kalweit2017uncertainty,kurutach2018model}. There has been active empirical research, but little in theory is known. A more related work is \citealt{jiang2018pac},
who shows that even if the approximate model differs from the real environment in a single state-action pair (but which one is unknown), such an approximate model could still be information-theoretically useless. 
This is another interesting direction to look at. However, the statistic distance from such a model to the true model can be arbitrarily large and hence the policy of the approximate model does not have a guarantee on the true model.
The limitation of the benefit that an approximate model could bring can also be found in \citealt{jiang2015doubly}, where the authors build a policy value estimator and use the approximate model to reduce variance. However, they demonstrate that, if no extra knowledge is provided, only the part of variance arising from the randomness in policy can be eliminated rather than the stochasticity in state transitions.

In order to take more advantage of the previous experiences, additional structure information is needed. A number of structure settings have been studied in the literature.
For instance, in \citealt{brunskill2013sample}, all models are assumed to be drawn from a finite set of MDPs with identical state and action spaces, but different reward and/or transition probabilities; in \citealt{abel2018policy}, one study case requires that all models share the same transition dynamics and only reward functions change with a hidden distribution; in \citealt{mann2012directed}, a special mapping between the approximate model and the true model is assumed such that the approximate model can provide a good action-value initialization for the true model;
in \citealt{calandriello2014sparse}, all tasks can be accurately represented in a linear approximation space and the weight vectors are jointly sparse; in \citealt{modi2019sample}, every model's transition kernel and reward function lie in the linear span of $K$ known base models.
To complement our lower bound, we study an MDP model that shares a similar information structure as in  \citealt{modi2019sample}.
In contrast to \citealt{modi2019sample}, our model is of  infinite-horizon and the loss function is also  different.
Although not the main focus of this paper, our proposed model and algorithm provide another effective approach that supports knowledge transferring.
}
\cut{
As for successfully sharing knowledge, our assumption \lin{what assumpiton?} controls the number of parameters to learn. Such an idea of restricting models to a simpler space is adopted by a large body of works. In \citealt{brunskill2013sample}, there are only finitely many models showing up repetitively\lin{I don't understand this.}. During learning, one can first identify the new model, then borrow previous samples. In \citealt{calandriello2014sparse}, all tasks' value functions can be accurately represented in a linear approximation space and the coefficients are jointly sparse. In \citealt{modi2019sample}, every model's transition and reward lies in the linear span of $K$ base models, where the coefficients depend on some state-related feature space. Our setting is a special example of \citealt{modi2019sample}. While we study infinite horizon MDPs with a generative model to take samples and our objective function uses $l_2-$distance instead of $l_1$.}


\paragraph{Outline}
This paper is organized as follows.
In Section~\ref{sec:pre} we introduce the necessary background and the formal setting.
In Section~\ref{sec:main}, we formalize the problem and state the main results.
In Section~\ref{sec:analysis}, we provide analysis to establish the main results.
We have more graphical illustration in Section~\ref{sec:discussion} and we conclude in Section~\ref{sec:conclusion}.

\section{Preliminary}\label{sec:pre}
Given an integer $K$, we use $[K]$ to represent the set $\{1,2,\dots,K\}$. We use $|\cdot|$ to denote the cardinality of a set.
We
denote by $\cO, \Omega$ and $\Theta$ to denote leading orders in upper, lower, and minimax lower bounds, respectively; and we use $\widetilde{\cO}, \widetilde{\Omega}$ and $\widetilde{\Theta}$ to hide the polylog factors.

\paragraph{Markov Decision Process}
We consider an infinite-horizon discounted Markov Decision Process (MDP), $\cM:=(\cS, \{\cA^s\}_{s\in\cS}, p,r, \gamma)$, 
where $\mathcal{S}$ is a finite state space, $\cA^s$ is the set of available actions for state $s$, $p(s'|s,a)$ is the transition function, $r(s,a,s')\in [0,1]$ is the reward function, and $\gamma\in(0,1)$ is a discount factor. We denote by $\cS'$ the set of states with more than one available action.


At step $t$, the controller observes a state $s_t$ and selects an action $a_t \in \mathcal{A}^{s_t}$ according to a \emph{policy} $\pi$, which maps a state to one of its available action. The environment then transitions to a new state $s_{t+1}$ with probability $p(s_{t+1}|s_t,a_t)$ and the controller receives an instant reward $r(s_{t},a_t,s_{t+1})$. Given a policy $\pi$, we define its value function as: $V^{\pi}(s) := \mathbb{E}^\pi\big[\sum_{t=0}^\infty \gamma^{t} r(s_t ,a_t, s_{t+1})~|~s_0=s\big],$
where the expectation is taken over the trajectory following $\pi$. The objective of RL is to learn a policy $\pi^*$ that can maximize the value function, i.e., $\forall~ \pi, s\in\cS:~V^*:=V^{\pi^*}(s)\geq V^{\pi}(s).$
A policy $\pi$ is said to be \emph{$\varepsilon$-optimal} if
$~V^{\pi}(s) \geq V^*(s) - \varepsilon$ for all $s\in\cS$.
The action-value function (or $Q$-function) of a policy $\pi$ is defined as $$Q^{\pi}(s,a) := \sum_{s'\in\cS}p(s'|s,a) \cdot \big( r(s,a,s') +\gamma V^{\pi}(s')\big).$$
The optimal $Q$-function is denoted by $Q^* := Q^{\pi^*}$. By Bellman Optimality Equation, we have $$V^*(s) = \max_{a\in\cA^s} Q^*(s,a) = \max_{a\in\cA^s} \sum_{s'\in\cS}p(s'|s,a) \cdot \big( r(s,a,s') +\gamma V^{*}(s')\big).$$
\cut{\paragraph{Value Gap}
Given an MDP,
for every state $s$ we define a vector $q^{s}$ with
\begin{align}
   q^s[1]&=\max_{a\in\cA^s}Q^*(s,a), \text{~and}\\
   q^s[i]&= \max \{Q^*(s,a)~|~ Q^*(s,a)< q^s_{\cM}[i-1], a\in\cA^s\}, ~ 1<i\leq |\cA^s|.
\end{align}
By definition, we have $\max_{a\in \cA^s} Q^*(s,a) = q^s[1]>q^s[2]>\cdots>q^s[k^s]=\min_{a\in\cA^s} Q^*(s,a),$ where $k^s$ equals the number of different values in $\{Q^*(s,a), a\in\cA^s\}$. Next, for every state $s$ we define the value-gap vector $\Delta^s$ such that
$$\Delta^s[i] = q^s[1]-q^s[i], ~1\leq i \leq k^s.$$
It holds that $0 = \Delta^s[1]<\Delta^s[2]<\cdots<\Delta^s[k^s]=\max_{a\in\cA^s} Q^*(s,a)-\min_{a\in\cA^s} Q^*(s,a).
$
Lastly, }
Given an MDP $\cM$ and a constant $c$, for each state $s$, we define the following set \begin{align}\label{eq:As}
    \cA^s_{\cM}(c):=\begin{cases}
    \{a~|~V_{\cM}^*(s) - Q_{\cM}^*(s,a) < c\}, &c>0;\\
    \argmax_a Q^*_{\cM}(s,a), &c\leq 0,
    \end{cases}
\end{align}
where $V^*_{\cM}$ and $Q^*_{\cM}$ denote the optimal value function and $Q$-function of $\cM$, respectively. For any value of $c$, $\cA^s_{\cM}(c)$ is well-defined and always contains all optimal actions for state $s$ in $\cM$. If $C>c$, then $\cA^s_{\cM}(C)\supseteq\cA^s_{\cM}(c)$. If $c>1/(1-\gamma)$, $\cA_{\cM}^s(c)=\cA^s_{\cM}$, i.e., all available actions for state $s$ in $\cM$.

\paragraph{TV-distance for MDPs}
Given $\cM_0:=(\cS,\{\cA^s\}_{s\in\cS},p_0,r_0,\gamma)$ and $\cM:=(\cS,\{\cA^s\}_{s\in\cS},p,r,\gamma)$, we define
\begin{align}\label{eq:TV}
   d_{\mathrm{TV}}(\cM_0, \cM) = \max\big\{
   \max_{s\in\cS,a\in\cA^s}\|p_0(\cdot|s,a)-p(\cdot|s,a)\|_1, ~ \|r_0-r\|_{\infty}~\big\}.
\end{align}
$d_{\text{TV}}(\cdot,\cdot)$ is a metric between MDPs with the same state/action space and the discount factor. The name TV comes from that $\|p_0(\cdot|s,a)-p(\cdot|s,a)\|_1$ is equal to the \emph{total variation} distance between distributions $p_0(\cdot|s,a)$ and $p(\cdot|s,a)$.
We denote by $\cM\in B_{\mathrm{TV}}(\cM_0,\beta)$ if $d_{\mathrm{TV}}(\cM_0,\cM)\leq \beta$.

\paragraph{Generative Model}
Generative model is a special sample oracle. It allows any state-action pair $(s,a)$ as input, where $s\in\cS$ and $a\in\cA^s$, and outputs $(s',r(s,a,s'))$ with probability $p(s'|s,a)$.
\paragraph{Near-Optimal RL Algorithm} An RL algorithm is near-optimal if without any prior knowledge, it returns an $\varepsilon$-optimal policy for an MDP $\cM:=(\cS, \{\cA^s\}_{s\in\cS}, p,r, \gamma)$ with probability at least $1-\delta$ using $\widetilde{O}\left(\frac{\sum_{s\in\cS}|\cA^s|}{(1-\gamma)^3\varepsilon^2}\log(1/\delta)\right)$ samples. Near-optimal RL algorithms often require a generative model. Examples can be found in \citealt{azar2013minimax} and \citealt{NIPS2018_7765}.

\cut{

Given an integer $K$, we use $[K]$ to represent the set $\{1,2,\dots,K\}$. We use $|\cdot|$ to denote the cardinality of a set.
We
denote by $\cO, \Omega$ and $\Theta$ to denote leading orders in upper, lower, and minimax lower bounds, respectively; and we use $\widetilde{\cO}, \widetilde{\Omega}$ and $\widetilde{\Theta}$ to hide the polylog factors.

\paragraph{Markov Decision Process}
We consider the setting of an infinite-horizon discounted Markov Decision Process (MDP), $\cM:=(\cS, \cA, p,r, \gamma)$, 
where $\mathcal{S}$ is a finite state space, $\mathcal{A}$ is a finite action space, $p(s'|s,a)$ is the transition function, $r: \cS\times \cA\times\cS\rightarrow [0,1]$ is the reward function, and $\gamma\in(0,1)$ is a discount factor. We denote by $\cA^s$ the set of available actions for state $s$ and $\cS'$ the set of states with more than one available action.
We use $N$ for the total number of state-action pairs.

At step $t$, the controller observes a state $s_t$ and selects an action $a_t \in \mathcal{A}^{s_t}$ according to a \emph{policy} $\pi$, which maps a state to an available action. The environment then transitions to a new state $s_{t+1}$ with probability $p(s_{t+1}|s_t,a_t)$ and the controller receives an instant reward $r(s_{t},a_t,s_{t+1})$. Given a policy $\pi: \mathcal{S} \rightarrow \mathcal{A}$, we define its value function $V^{\pi}:\cS\rightarrow \RR$ as:
\begin{equation}\label{eq:Vdef}
V^{\pi}(s) := \mathbb{E}^\pi\bigg[\sum_{t=0}^\infty \gamma^{t} r(s_t ,a_t, s_{t+1})|s_0=s\bigg],
\end{equation}
where the expectation is taken over the trajectory following $\pi$. The objective of RL is to learn an optimal policy $\pi^*$ such that its value (on any state $s\in \cS$) is maximized over all policies, i.e.,
\[
\forall~ \pi, s\in\cS:\quad V^*:=V^{\pi^*}(s)\geq V^{\pi}(s).
\]
A policy $\pi$ is said to be \emph{$\varepsilon$-optimal} if it achieves near-optimal value for every state, i.e. $\forall~ s\in \cS:\quad V^{\pi}(s) \geq V^*(s) - \varepsilon.$
The action-value function (or $Q$-function) of a policy $\pi$ is defined as
\begin{equation}\label{eq:Qdef}
Q^{\pi}(s,a) := \sum_{s'\in\cS}p(s'|s,a) \cdot \big( r(s,a,s') +\gamma V^{\pi}(s')\big).
\end{equation}
The optimal $Q$-function is denoted by $Q^* := Q^{\pi^*}$. By Bellman Optimality Equation, we have
\begin{align}
    V^*(s) = \max_{a\in\cA^s} Q^*(s,a) = \max_{a\in\cA^s} \sum_{s'\in\cS}p(s'|s,a) \cdot \big( r(s,a,s') +\gamma V^{*}(s')\big).
\end{align}
\paragraph{Value Gap}
We extend the value gap notion in multi-armed bandits to MDPs. Given an MDP,
for every state we define a vector $q^{s}$ such that
\begin{align}
   q^s[1] &=\max_{a\in\cA^s}Q^*(s,a),\\
   q^s[i] &= \max \{Q^*(s,a)~|~ Q^*(s,a)< q^s_{\cM}[i-1], a\in\cA^s\}, \quad 1<i\leq |\cA^s|.
\end{align}
By definition, we have $\max_{a\in \cA^s} Q_{\cM}^*(s,a) = q^s[1]>q^s[2]>\cdots>q^s[k^s]=\min_{a\in\cA^s} Q^*(s,a),$ where $k^s$ equals the number of different values in $\{Q^*(s,a), a\in\cA^s\}$. Then for every state we define the value-gap vector $\Delta^s$ as
$$\Delta^s[i] = q^s[1]-q^s[i], \quad 1\leq i \leq k^s.$$
It holds that $0 = \Delta^s[1]<\Delta^s[2]<\cdots<\Delta^s[k^s]=\max_{a\in\cA^s} Q^*(s,a)-\min_{a\in\cA^s} Q^*(s,a).
$
Lastly, given any $c>0$, we denote by \begin{align}\label{eq:As}
    \cA^s(c):=\{a~|~\max_{a'} Q^*(s,a') - Q^*(s,a) < c\}.
\end{align}
Note that $\cA^s(c)$ is always nonempty. Given $C>c>0$, we have $\cA^s(C)\supseteq\cA^s(c)$ and if $c>1/(1-\gamma)$, $\cA^s(c)=\cA^s$.

\paragraph{TV-distance for MDPs}
To measure the closeness of two MDPs, we introduce a metric $d_{\mathrm{TV}}(\cdot, \cdot)$. Given $\cM_0:=(\cS,\cA,p_0,r_0,\gamma)$ and $\cM:=(\cS,\cA,p,r,\gamma)$, we define 
\begin{align}\label{eq:TV}
   d_{\mathrm{TV}}(\cM_0, \cM) = \max\Big\{
   \max_{(s,a)\in\cS\times\cA^s}\|p_0(\cdot|s,a)-p(\cdot|s,a)\|_1, ~ \|r_0-r\|_{\infty}~\Big\}.
\end{align}
The name TV comes from that $\|p_0(\cdot|s,a)-p(\cdot|s,a)\|_1$ is equal to the \emph{total variation} distance between distributions $p_0(\cdot|s,a)$ and $p(\cdot|s,a)$. Note that the distance is only valid between MDPs with the same state/action space and the discount factor.   We denote by $\cM\in B_{\mathrm{TV}}(\cM_0,\beta)$ if $d_{\mathrm{TV}}(\cM_0,\cM)\leq \beta$ for some $\beta>0$.

\paragraph{Generative Model}
Generative model is a special sample oracle which allows any $(s,a)\in\cS\times\cA^s$ as input and outputs $(s',r(s,a,s'))$ with probability $p(s'|s,a)$.}

\section{Main Results}\label{sec:main}
We formalize our problem of knowledge transferring as below.
\begin{problem}\label{problem}
Suppose the target unknown model is $\cM:=(\cS, \{\cA^s\}_{s\in\cS}, p, r, \gamma)$ and an agent is provided with the full knowledge of an approximate model $\cM_0:=(\cS, \{\cA^s\}_{s\in\cS}, p_0, r_0, \gamma)$ satisfying $\cM\in B_{\mathrm{TV}}(\cM_0, \beta)$, where $\beta >0$ is a known constant.
How many samples does it take to learn an $\varepsilon$-optimal policy for $\cM$ with probability at least $1-\delta$?
\end{problem}

Without additional structure assumption, we answer Problem \ref{problem} by exploiting value functions. Our idea is based on the fact that if $d_{\text{TV}}(\cM, \cM_0)\leq \beta$, then their optimal $Q$-functions are apart for at most
$\beta/(1-\gamma)^2$ (see Lemma \ref{lemma:Qbound}). Given full knowledge of $\cM_0$, we can then obtain upper and lower bounds on every entry of the optimal $Q$-function of $\cM$. The upper and lower bounds depict how optimal and sub-optimal an action could be at some state in $\cM$. Based on this information, for every state we can split its available actions into two sets: one contains all \emph{potential-optimal} actions and the other are all \emph{sub-optimal} actions. An action is potential-optimal if it has a chance to be optimal in $\cM$; otherwise, it is sub-optimal. For the learning stage, we remove all sub-optimal actions and apply an RL algorithm only on potential-optimal ones. A rigorous definition will be given in Section \ref{sec:analysis}.
This learning process is described in Algorithm \ref{alg:main}.


In Algorithm \ref{alg:main}, we first use a planning algorithm to get the optimal $Q$-function of $\cM_0$, $Q^*_0$. 
Then, for each state $s\in\cS$, we construct a set $\cA^s_{\cM_0}(\overline{C})$ (recall the definition in Equation \eqref{eq:As}). Later we will show that $\cA^s_{\cM_0}(\overline{C})$ consists of all potential-optimal actions.
By replacing the original action space $\{\cA^s\}_{s\in\cS}$ by $\{\cA^s_{\cM_0}(\overline{C})\}_{s\in\cS}$, we form a new MDP $\cM^c$ (Line \ref{line:Mc}), where $p^c(s'|s,a)=p(s'|s,a)$ and $r^c(s,a,s')=r(s,a,s)$ for every $s\in\cS, a\in\cA^s_{\cM_0}(\overline{C})$ and $s'\in\cS$. Note that $\cM^c$ is simply $\cM$ with a contracted action space. Finally, we apply a near-optimal RL algorithm to learn a policy of $\cM^c$. In Theorem \ref{thm:main_upper}, we show that the output policy is $\varepsilon$-optimal for $\cM$ with probability at least $1-\delta$ and the sample complexity therein is our upper bound result to Problem \ref{problem}. 

\begin{algorithm}[t]
	\caption{Transfer RL from an Approximate Model}
	\label{alg:main}
	\begin{algorithmic}[1]
		\State \textbf{Input:} full knowledge of $\cM_0$; a generative model of $\cM$; a near-optimal RL algorithm $\sA_{\text{opt}}$;
		\State \textbf{Parameters:} $\overline{C}>0$, $\varepsilon>0$, $\delta\in(0,1)$.
		\State Apply any planning algorithm on $\cM_0$ to get $Q^*_0$;
		\State Construct $\cA^s_{\cM_0}(\overline{C})$ with $Q_0^*$ (see Equation \eqref{eq:As});
		\State Form a new MDP $\cM^c:=(\cS,\{\cA^s_{\cM_0}(\overline{C})\}_{s\in\cS}, p^c, r^c, \gamma);$\label{line:Mc}
		\State Apply $\sA_{\text{opt}}$ to learn an $\varepsilon/2$-optimal policy $\pi$ of $\cM^c$ with probability at least $1-\delta$.
		\State \textbf{Output:} $\pi$.
	\end{algorithmic}
\end{algorithm}

\begin{theorem}[Main Result -- Upper Bound]\label{thm:main_upper}
Given $\varepsilon>0$ and $\delta\in(0,1)$, with probability at least $1-\delta$, Algorithm \ref{alg:main} returns an $\varepsilon$-optimal policy for $\cM$ using samples
\begin{align}
    \widetilde{\cO}\bigg(\frac{\sum_{s\in\cS'}|\cA^s_{\cM_0}(\overline{C})|}{(1-\gamma)^3\varepsilon^2}\log\Big(\frac{1}{\delta}\Big)\bigg),
\end{align}
where $\overline{C}:=\min \{2/(1-\gamma), 2\beta/(1-\gamma)^2\} - \varepsilon(1-\gamma)/2$.
\end{theorem}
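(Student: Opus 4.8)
The plan is to decouple the statement into a \emph{correctness} part, showing that the contracted MDP $\cM^c$ retains almost all of the optimal value of $\cM$, and a \emph{sample-complexity} part, which is a direct invocation of the guarantee of $\sA_{\text{opt}}$. The glue is the fact that any policy supported on the retained action sets $\{\cA^s_{\cM_0}(\overline C)\}_s$ has identical value in $\cM^c$ and in $\cM$, since $\cM^c$ inherits the transition and reward functions of $\cM$. Concretely, I will prove $V^*_{\cM^c}(s)\ge V^*_{\cM}(s)-\varepsilon/2$ for every $s$; granting this, the $\varepsilon/2$-optimal policy $\pi$ returned in Line~6 satisfies $V^{\pi}_{\cM}(s)=V^{\pi}_{\cM^c}(s)\ge V^*_{\cM^c}(s)-\varepsilon/2\ge V^*_{\cM}(s)-\varepsilon$, i.e.\ $\pi$ is $\varepsilon$-optimal for $\cM$.

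The heart of the argument, and the step I expect to be the main obstacle, is a per-state claim: for every $s$ there is a retained action $a\in\cA^s_{\cM_0}(\overline C)$ with $Q^*_{\cM}(s,a)\ge V^*_{\cM}(s)-\varepsilon(1-\gamma)/2$. Set $\eta:=\min\{\beta/(1-\gamma)^2,\,1/(1-\gamma)\}$, so that Lemma~\ref{lemma:Qbound} together with the trivial bound $Q^*\in[0,1/(1-\gamma)]$ gives $\|Q^*_{\cM}-Q^*_{\cM_0}\|_\infty\le\eta$, and $2\eta=\min\{2\beta/(1-\gamma)^2,2/(1-\gamma)\}$, whence $\overline C=2\eta-\varepsilon(1-\gamma)/2$. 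Fix $s$, let $a_\star\in\argmax_a Q^*_{\cM}(s,a)$ and $a_0\in\argmax_a Q^*_{\cM_0}(s,a)$; note $a_0$ is always retained (its $\cM_0$-gap is $0$, and for $\overline C\le0$ the set $\cA^s_{\cM_0}(\overline C)$ is $\argmax_a Q^*_{\cM_0}(s,a)\ni a_0$ by Equation~\eqref{eq:As}). If $a_\star$ is itself retained the claim is immediate. Otherwise $V^*_{\cM_0}(s)-Q^*_{\cM_0}(s,a_\star)\ge\overline C$, and transferring twice through $\cM_0$ gives
\begin{align}
Q^*_{\cM}(s,a_0) &\ge Q^*_{\cM_0}(s,a_0)-\eta = V^*_{\cM_0}(s)-\eta \ge Q^*_{\cM_0}(s,a_\star)+\overline C-\eta \\
&\ge Q^*_{\cM}(s,a_\star)+\overline C-2\eta = V^*_{\cM}(s)+\overline C-2\eta = V^*_{\cM}(s)-\tfrac{\varepsilon(1-\gamma)}{2}.
\end{align}
The crucial mechanism is this second case: the true optimal action can be discarded only when its $\cM_0$-gap is large, and that very gap forces $a_0$ to beat $a_\star$ by at least $\overline C$ in $\cM_0$, a margin that survives the $2\eta$ round-trip transfer error \emph{precisely} because $\overline C=2\eta-\varepsilon(1-\gamma)/2$. (When $\overline C\le0$, i.e.\ $2\eta\le\varepsilon(1-\gamma)/2$, the first two steps already give $Q^*_{\cM}(s,a_0)\ge V^*_{\cM_0}(s)-\eta\ge V^*_{\cM}(s)-2\eta$, which suffices.)

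To pass from a good action at each state to a global value bound, let $\pi'$ be the policy on $\cM^c$ that selects at each $s$ the action furnished by the claim. Since $\cM^c$ and $\cM$ share $p$ and $r$, applying the $\cM$-Bellman operator $T^{\pi'}$ to $V^*_{\cM}$ yields $(T^{\pi'}V^*_{\cM})(s)=Q^*_{\cM}(s,\pi'(s))\ge V^*_{\cM}(s)-\varepsilon(1-\gamma)/2$ for all $s$. Iterating the monotone $\gamma$-contraction $T^{\pi'}$ and summing $1+\gamma+\gamma^2+\cdots=1/(1-\gamma)$ gives $V^{\pi'}_{\cM}\ge V^*_{\cM}-\tfrac{\varepsilon(1-\gamma)/2}{1-\gamma}\vone=V^*_{\cM}-\tfrac{\varepsilon}{2}\vone$. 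As $\pi'$ uses only retained actions, $V^*_{\cM^c}\ge V^{\pi'}_{\cM^c}=V^{\pi'}_{\cM}\ge V^*_{\cM}-\varepsilon/2$, which is exactly the correctness claim used in the first paragraph.

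For the sample complexity, note that computing $Q^*_{\cM_0}$ and the sets $\cA^s_{\cM_0}(\overline C)$ is exact planning on the fully known $\cM_0$: it costs no samples and contributes no failure probability. Running $\sA_{\text{opt}}$ on $\cM^c$ to accuracy $\varepsilon/2$ and confidence $1-\delta$ draws, by definition of a near-optimal algorithm, $\widetilde{\cO}\big(\sum_{s\in\cS}|\cA^s_{\cM_0}(\overline C)|/((1-\gamma)^3(\varepsilon/2)^2)\log(1/\delta)\big)$ samples, with the constant from $(\varepsilon/2)^2$ absorbed into $\widetilde{\cO}$. States in $\cS\setminus\cS'$ have a single available action, so no exploration is needed to identify their forced optimal action and they drop out of the count, leaving $\sum_{s\in\cS'}|\cA^s_{\cM_0}(\overline C)|$ in the numerator. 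The only randomness resides in $\sA_{\text{opt}}$, so the $\varepsilon$-optimality conclusion holds with probability at least $1-\delta$, which is the claimed bound.
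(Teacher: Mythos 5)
Your proof is correct and follows essentially the same route as the paper: it uses Lemma~\ref{lemma:Qbound} to establish the per-state claim that $\cA^s_{\cM_0}(\overline C)$ contains an $\varepsilon(1-\gamma)/2$-optimal action of $\cM$ (the paper's Lemma~\ref{lemma:aselect}), lifts this to a value bound on $\cM^c$ via the contraction argument (the paper's Lemma~\ref{lemma:epsQ}), and then invokes the near-optimal algorithm on $\cM^c$ with accuracy $\varepsilon/2$. The only differences are cosmetic (your case split on whether $a_\star$ is retained versus the paper's split on which of two maxima dominates), so no further comment is needed.
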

In Theorem \ref{thm:main_upper}, the complexity is summed over $\cS'$ since no action exploration is needed for single-action states. Further, we have the lower bound result to Problem \ref{problem} in Theorem \ref{thm:main_lower}.

\begin{theorem}[Main Result -- Lower Bound]\label{thm:main_lower}
Let $\varepsilon\in(0, \varepsilon_0)$ and $\delta\in(0,\delta_0)$. The sample complexity for Problem \ref{problem} is
\begin{align}
    \Omega\bigg(\frac{\sum_{s\in\cS'}|\cA^s_{\cM_0}(\underline{C}_s)|}{(1-\gamma)^3\varepsilon^2}\log\Big(\frac{1}{\delta}\Big)\bigg),
\end{align}
where $\varepsilon_0= \frac{\beta\gamma\min_{s\in\cS'}V^*(s)^2}{16}\min (\beta/2, \frac{1-\gamma}{3\gamma})$, $\delta_0=1/40$ and
\begin{align}
    \underline{C}_s=\begin{cases}V^*(s) - \frac{9}{12(1-\gamma)-64(1-\gamma)^2\varepsilon+4.5\beta\gamma}, \quad &\text{if~} \beta/2+\frac{4\gamma-1}{3\gamma} \geq 1; \\
V^*(s) - \frac{V^*(s)^2}{V^*(s) +\beta\gamma V^*(s)^2+4\varepsilon(1+\gamma\beta V^*(s) /2)^2}, \quad &\text{otherwise}.
\end{cases}
\end{align}
\end{theorem}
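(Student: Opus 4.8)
The plan is to prove the lower bound by the standard recipe of constructing a family of hard instances inside the ball $B_{\mathrm{TV}}(\cM_0,\beta)$ and reducing the task of learning an $\varepsilon$-optimal policy to a collection of best-action-identification problems, one at each multi-action state $s\in\cS'$. For each such $s$ and each action $a\in\cA^s_{\cM_0}(\underline{C}_s)$, I would exhibit a perturbed model $\cM_{s,a}\in B_{\mathrm{TV}}(\cM_0,\beta)$ in which $a$ becomes the unique optimal action at $s$ by a margin of order $\varepsilon$, while keeping $\cM_{s,a}$ statistically indistinguishable from a base instance unless $(s,a)$ is queried many times. Since a successful algorithm must commit to a near-optimal action at every state, it cannot avoid probing each candidate in $\cA^s_{\cM_0}(\underline{C}_s)$, and the per-action sample costs aggregate across all of $\cS'$.

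Concretely, I would attach to each candidate action $a$ at state $s$ a gadget of Azar--Munos--Kappen type \citep{azar2013minimax}: with probability close to $1$ the action enters a high-reward self-loop and otherwise falls into a zero-reward absorbing state, so that $Q^*(s,a)$ depends on the loop-return probability $p_a$ through a factor $\sim 1/(1-\gamma p_a)$. Because this map is sensitive near $p_a\approx 1$ (its derivative scales like $1/(1-\gamma)^2$), a Bernoulli perturbation of size $\delta\sim\varepsilon(1-\gamma)^2$ shifts the value by $\Theta(\varepsilon)$, exactly enough to flip the identity of the optimal action. The TV budget enters here: reallocating mass of size $\delta$ between the good and bad successor states costs $\|p_0(\cdot|s,a)-p(\cdot|s,a)\|_1=2\delta$, so feasibility requires $\delta\le\beta/2$, which is the source of the $\min(\beta/2,\cdot)$ factor in $\varepsilon_0$. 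The quantity $\underline{C}_s$ is then \emph{defined} as the largest margin $V^*_{\cM_0}(s)-Q^*_{\cM_0}(s,a)$ for which such a flip is realizable within the budget; its two cases reflect whether $\beta$ is large enough to fully reallocate the loop probability (the saturated regime $\beta/2+(4\gamma-1)/(3\gamma)\ge 1$) or whether the reachable value change is jointly limited by $V^*(s)$ and $\beta$. Tracking how the allowed perturbation translates into a value margin, via Lemma~\ref{lemma:Qbound}, is what pins down the exact expression.

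For the testing step I would use the generative-model observation law: each query to $(s,a)$ returns one draw from $p(\cdot\mid s,a)$ with its reward. Between the base instance and the one promoting $a$, these are Bernoulli laws differing by $\delta$, so the per-sample Kullback--Leibler divergence is $\Theta\big(\delta^2/(1-\gamma)\big)=\Theta\big(\varepsilon^2(1-\gamma)^3\big)$, using $1-p_a\sim 1-\gamma$. A Le Cam change-of-measure argument (or Fano's inequality over the $|\cA^s_{\cM_0}(\underline{C}_s)|$ competing hypotheses at $s$) then forces any algorithm with error below $\delta$ to draw $\Omega\big(\log(1/\delta)/(\varepsilon^2(1-\gamma)^3)\big)$ samples from that pair. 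Since the gadgets at distinct states, and the distinct promoted actions within a state, perturb disjoint coordinates of the kernel, samples informative for one test are useless for another, so the counts add; summing over all $a\in\cA^s_{\cM_0}(\underline{C}_s)$ and all $s\in\cS'$ yields $\Omega\big(\sum_{s\in\cS'}|\cA^s_{\cM_0}(\underline{C}_s)|/((1-\gamma)^3\varepsilon^2)\cdot\log(1/\delta)\big)$. The hypotheses $\varepsilon<\varepsilon_0$ and $\delta<\delta_0=1/40$ are exactly the ranges keeping the gadget parameters valid probabilities and the two-point bound non-degenerate.

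The hardest part will be the simultaneous calibration of the second paragraph: choosing loop probabilities and reward perturbations so that (i) every instance provably lies in $B_{\mathrm{TV}}(\cM_0,\beta)$, (ii) the induced margin is large enough that a wrong guess incurs an $\varepsilon$-error yet small enough to keep the hypotheses statistically close, and (iii) the feasibility condition on the margin reproduces the two-case formula for $\underline{C}_s$. Forcing all three to hold at once, and in particular extracting the precise constants in $\underline{C}_s$ and $\varepsilon_0$ rather than order-of-magnitude scalings, is the technical core; the information-theoretic bound and the additivity across state-action pairs are comparatively routine once the family is constructed.
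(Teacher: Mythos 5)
Your proposal follows essentially the same route as the paper: the same Azar--Munos--Kappen self-loop gadget, the same $\beta/2$ probability perturbation of size $\Theta(\varepsilon(1-\gamma)^2)$ yielding per-sample information of order $\varepsilon^2(1-\gamma)^3$, a change-of-measure argument forcing $\Omega\big(\log(1/\delta)/((1-\gamma)^3\varepsilon^2)\big)$ samples per candidate action, and additivity over the disjoint gadgets, with $\underline{C}_s$ pinned down exactly as you describe---as the largest value margin for which the flip is realizable within the TV budget. The only ingredient you do not mention is the martingale (Doob) concentration event the paper introduces to make the likelihood-ratio bound valid under adaptive sampling, but that is a detail of the same argument rather than a different approach.
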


The lower bound shares a similar structure as the upper bound except that $\underline{C}_s \leq \overline{C}$. In Section \ref{sec:meet}, we have more discussion about $\underline{C}_s$ and $\overline{C}$.
For each state $s\in\cS'$, exploring the set of actions $\cA^s_{\cM_0}(\overline{C})$ is \emph{sufficient} to obtain a near-optimal action in $\cM$, likewise,
exploring the actions in $\cA^s_
{\cM_0}(\underline{C}_s)$ is \emph{necessary} to find a near-optimal action in $\cM$.
Note that the sample complexity of learning-from-scratch is $ \widetilde{\Theta}\Big(\frac{\sum_{s\in\cS'}|\cA^s|}{\varepsilon^2(1-\gamma)^3}\log\big(\frac{1}{\delta}\big)\Big)
$ (see \citealt{azar2013minimax}). For the upper bound, if $|\cA^s_{\cM_0}(\overline{C})|\ll |\cA^s|$, knowledge transferring achieves a significantly smaller sample complexity than learning-from-scratch. The best case, as in Corollary \ref{coro:best}, is when $|\cA_{\cM_0}^s(\overline{C})|=1, ~\forall s\in\cS$ and the policy is transferable. For the lower bound, if $|\cA^s_{\cM_0}(\underline{C}_s)|$ is close to $|\cA^s|$, the prior knowledge does not offer much help. The worst case, as in Corollary \ref{coro:worst}, is when $|\cA^s_{\cM_0}(\underline{C}_s)|=|\cA^s|$, then with or without the prior knowledge, it has the same order of complexity. In particular, if $|\cA^s_{\cM_0}(\overline{C})| \approx |\cA^s_{\cM_0}(\underline{C}_s)|$, the upper bound is tight up to a log factor.


\begin{corollary}[The Best Scenario]\label{coro:best}
Suppose for every state $s\in\cS'$,
$|\argmax_a Q^*_{\cM_0}(s,a)|=1$ and $$\big|V^*_{\cM}(s) - \max\big\{Q^*_{\cM}(s,a)~|~Q^*_{\cM}(s,a)<V_{\cM}^*(s), a\in\cA^s\big\}\big|\geq 2\beta/(1-\gamma)^2-\varepsilon/(1-\gamma).$$ Then $|\cA_{\cM_0}^s(\overline{C})|=1, ~\forall s\in\cS$ and the optimal policy for $\cM_0$ is also optimal for $\cM$.
\end{corollary}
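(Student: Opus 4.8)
The plan is to treat the $Q$-function closeness from Lemma~\ref{lemma:Qbound}, $\|Q^*_{\cM}-Q^*_{\cM_0}\|_{\infty}\le \beta/(1-\gamma)^2$, as the only bridge between the two models and to transport the value gap assumed in $\cM$ across this bridge into a separation in $\cM_0$. Throughout, fix $s\in\cS'$, abbreviate $\eta:=\beta/(1-\gamma)^2$, and let $a_0^s:=\argmax_{a}Q^*_{\cM_0}(s,a)$, which is a single action by the first hypothesis. I would establish the two conclusions in the order (i) the singleton property $|\cA^s_{\cM_0}(\overline{C})|=1$ and then (ii) the optimality transfer, because (ii) reads off cleanly once (i) is in hand.

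For (i), take any $a\ne a_0^s$; the target is $V^*_{\cM_0}(s)-Q^*_{\cM_0}(s,a)\ge\overline{C}$, which removes $a$ from $\cA^s_{\cM_0}(\overline{C})$ and leaves only $a_0^s$. The closeness bound yields the two one-sided estimates $V^*_{\cM_0}(s)\ge V^*_{\cM}(s)-\eta$ and $Q^*_{\cM_0}(s,a)\le Q^*_{\cM}(s,a)+\eta$, so it suffices to control the corresponding $\cM$-quantity $V^*_{\cM}(s)-Q^*_{\cM}(s,a)$ from below and then absorb the $2\eta$ cost of the transport. Here the hypothesis enters: any action that is suboptimal in $\cM$ lies below $V^*_{\cM}(s)$ by at least the assumed $2\beta/(1-\gamma)^2-\varepsilon/(1-\gamma)$, so I would plug this in, compare against $\overline{C}=\min\{2/(1-\gamma),2\beta/(1-\gamma)^2\}-\varepsilon(1-\gamma)/2$ on each branch of the $\min$, and use the uniqueness of $a_0^s$ to dispose of the remaining case in which $a$ happens to tie for optimality in $\cM$ (uniqueness of the $\cM_0$-maximizer forces a strict $\cM_0$-separation there).

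For (ii), with $\cA^s_{\cM_0}(\overline{C})=\{a_0^s\}$ at every $s\in\cS'$, I would invoke the property that makes Algorithm~\ref{alg:main} correct in Theorem~\ref{thm:main_upper}: the threshold $\overline{C}$ is chosen so that $\cA^s_{\cM_0}(\overline{C})$ always retains an action that is near-optimal for $\cM$ at $s$. Since that set is now the single action $a_0^s$, this near-optimal action must be $a_0^s$ itself; the assumed gap then precludes any action that is near-optimal yet not optimal in $\cM$, forcing $a_0^s$ to be an exactly optimal action of $\cM$ at $s$. States with one available action are optimal automatically, so the (unique) optimal policy $s\mapsto a_0^s$ of $\cM_0$ is optimal for $\cM$.

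The crux is the constant bookkeeping in step (i): transporting a gap from $\cM$ to $\cM_0$ costs exactly $2\eta=2\beta/(1-\gamma)^2$, and both the assumed gap and the threshold $\overline{C}$ sit at this same $\beta/(1-\gamma)^2$ scale, so the estimate only closes thanks to the $\varepsilon$-slack terms and the strict inequalities built into the definitions of $\cA^s_{\cM_0}(\cdot)$ and of the value gap. I would therefore track strict versus non-strict inequalities with care, handle both regimes of the $\min$ defining $\overline{C}$ (including the degenerate case $2\beta/(1-\gamma)^2\ge 2/(1-\gamma)$, where $\cA^s_{\cM_0}(\overline{C})$ need not shrink at all), and rule out ties at the top of $Q^*_{\cM}(s,\cdot)$ via the uniqueness of the $\cM_0$-maximizer before concluding.
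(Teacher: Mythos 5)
The paper states this corollary without proof, so there is no official argument to compare against; judged on its own terms, your plan has a genuine quantitative hole at its decisive step, and your remark that ``the estimate only closes thanks to the $\varepsilon$-slack terms'' gets the sign of those terms backwards. In the main regime $\beta<1-\gamma$ we have $\eta=\beta/(1-\gamma)^2$ and $\overline{C}=2\beta/(1-\gamma)^2-\varepsilon(1-\gamma)/2$. For an action $a\neq a_0^s$ that is suboptimal in $\cM$, your transport gives
\[
V^*_{\cM_0}(s)-Q^*_{\cM_0}(s,a)\;\ge\;\bigl(V^*_{\cM}(s)-Q^*_{\cM}(s,a)\bigr)-2\eta\;\ge\;\Bigl(\frac{2\beta}{(1-\gamma)^2}-\frac{\varepsilon}{1-\gamma}\Bigr)-\frac{2\beta}{(1-\gamma)^2}\;=\;-\frac{\varepsilon}{1-\gamma},
\]
whereas excluding $a$ from $\cA^s_{\cM_0}(\overline{C})$ requires this quantity to be at least $\overline{C}\approx 2\beta/(1-\gamma)^2>0$. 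You are short by essentially the entire main term: the assumed gap and the transport cost both equal $2\beta/(1-\gamma)^2$ to leading order, and the $\varepsilon$-correction in the hypothesis is \emph{subtracted} from the gap, so it works against you rather than for you. The tie sub-case is also not closed: if $a\neq a_0^s$ happens to be optimal in $\cM$, uniqueness of the $\cM_0$-maximizer only gives $V^*_{\cM_0}(s)-Q^*_{\cM_0}(s,a)>0$, which is far from the required $\ge\overline{C}$.

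The same deficit breaks step (ii). The transport argument places $a_0^s$ within $2\eta=2\beta/(1-\gamma)^2$ of $\cM$-optimal at $s$; to conclude that $a_0^s$ is \emph{exactly} optimal you need the suboptimality gap in $\cM$ to strictly exceed $2\beta/(1-\gamma)^2$, but the hypothesis only grants $2\beta/(1-\gamma)^2-\varepsilon/(1-\gamma)$, which is strictly smaller. For the argument to go through as you outline it, the gap hypothesis would need to sit at $2\beta/(1-\gamma)^2$ \emph{plus} slack (for the optimality transfer) and at roughly $4\beta/(1-\gamma)^2$ (for the singleton claim via transport), or alternatively the separation would have to be hypothesized directly on $Q^*_{\cM_0}$ at level $\ge\overline{C}$. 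As written, neither conclusion follows from the stated constants, and your proposal does not supply the missing margin.
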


\begin{corollary}[The Worst Scenario]\label{coro:worst}
Let $\varepsilon\in\big(0, \varepsilon_0)$ and $\delta\in(0, \delta_0)$, where $\varepsilon_0$ and $\delta_0$ are as defined in Theorem \ref{thm:main_lower}. Suppose for every state $s\in\cS'$, $\argmax_a Q^*_{\cM_0}(s,a)=\cA^s$. Then the sample complexity for Problem \ref{problem} has the same order as that of learning-from-scratch.
\end{corollary}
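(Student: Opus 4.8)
The plan is to read the conclusion off directly from Theorems~\ref{thm:main_upper} and~\ref{thm:main_lower}, once we establish that under the worst-scenario hypothesis the contracted action sets appearing in both bounds collapse to the full action sets. Concretely, I would first show that for every $s\in\cS'$ and \emph{any} threshold $c$, the assumption $\argmax_a Q^*_{\cM_0}(s,a)=\cA^s$ forces $\cA^s_{\cM_0}(c)=\cA^s$. This follows from the structural property recorded right after Equation~\eqref{eq:As}: for any value of $c$, the set $\cA^s_{\cM_0}(c)$ is well-defined and always contains all optimal actions of $\cM_0$ at state $s$. Hence $\cA^s_{\cM_0}(c)\supseteq\argmax_a Q^*_{\cM_0}(s,a)=\cA^s$, while the reverse inclusion $\cA^s_{\cM_0}(c)\subseteq\cA^s$ is immediate, giving equality. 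Notably this argument is insensitive to the sign of $c$, so it applies uniformly to $c=\overline{C}$ and to each $c=\underline{C}_s$ without having to inspect the piecewise formula for $\underline{C}_s$.

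With $|\cA^s_{\cM_0}(\overline{C})|=|\cA^s_{\cM_0}(\underline{C}_s)|=|\cA^s|$ for every $s\in\cS'$, I would then substitute into the two main results. Theorem~\ref{thm:main_upper} yields an achievable upper bound $\widetilde{\cO}\big(\sum_{s\in\cS'}|\cA^s|/((1-\gamma)^3\varepsilon^2)\cdot\log(1/\delta)\big)$, and Theorem~\ref{thm:main_lower}, valid on the stated ranges $\varepsilon\in(0,\varepsilon_0)$ and $\delta\in(0,\delta_0)$, yields a matching lower bound $\Omega\big(\sum_{s\in\cS'}|\cA^s|/((1-\gamma)^3\varepsilon^2)\cdot\log(1/\delta)\big)$. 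Both coincide in order with the learning-from-scratch complexity $\widetilde{\Theta}\big(\sum_{s\in\cS'}|\cA^s|/((1-\gamma)^3\varepsilon^2)\cdot\log(1/\delta)\big)$ cited from \citet{azar2013minimax}, so the sample complexity of Problem~\ref{problem} has the same order, completing the proof.

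The content here is essentially a bookkeeping reduction to the already-proven bounds, so there is no deep obstacle. The only point requiring care is to justify the equality $\cA^s_{\cM_0}(c)=\cA^s$ \emph{without} assuming a sign for $c$: under the worst scenario $\underline{C}_s$ may be nonpositive, in which case the definition in Equation~\eqref{eq:As} switches to the $\argmax$ branch, which still equals $\cA^s$ by hypothesis. Invoking the ``always contains all optimal actions'' property, rather than the inequality-threshold description, is what lets the argument sidestep this case split cleanly and handle $\overline{C}$ and $\underline{C}_s$ in one stroke.
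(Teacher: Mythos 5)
Your proposal is correct and follows essentially the same route as the paper, which simply observes that with no value gap in $\cM_0$ every set $\cA^s_{\cM_0}(c)$ equals $\cA^s$ (so $L_k=|\cA|$ in the hard construction) and then reads the matching bounds off Theorems \ref{thm:main_upper} and \ref{thm:main_lower}. Your treatment is in fact slightly more careful than the paper's, which only asserts the identity for $c>0$, whereas you correctly note that the $\argmax$ branch of Equation \eqref{eq:As} handles a possibly nonpositive $\underline{C}_s$ under the stated hypothesis.
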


\cut{\begin{figure}[t]
    \centering
    \vspace{-15pt}
    \includegraphics[width=.7\textwidth]{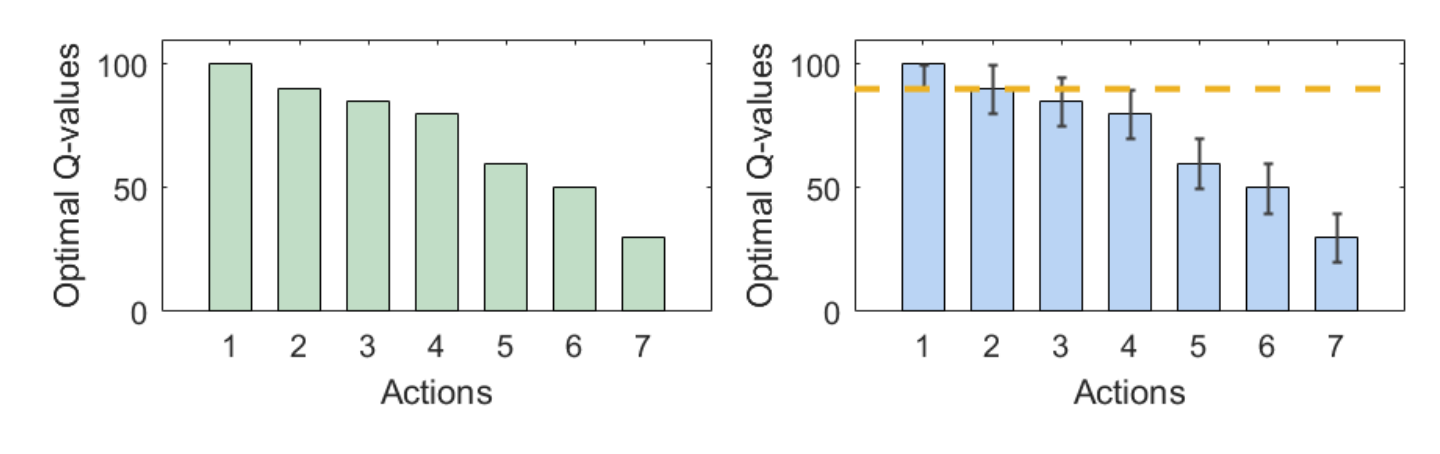}
    \vspace{-15pt}
    \caption{An illustration of how an approximate model helps RL. Given $Q^*_{\cM_0}(s,\cdot)$ (the left graph), we have interval estimation of $Q^*_{\cM}(s,\cdot)$ (the error bars in the right graph). Actions that are potentially optimal in $\cM$ should have the upper bound higher than the lower bound of the best action in $\cM_0$. Therefore, only actions $1, 2$ and $3$ are considered as valid candidates.}
    \label{fig:gap}
\end{figure}}

\section{Analysis}\label{sec:analysis}
In this section, we prove the main results. We defer all proofs to Appendix \ref{app:proof}.

\subsection{Proofs of the Upper Bound}\label{sec:upperbdd}
We follow the notations used in Problem \ref{problem} and always use $Q^*_0$ and $Q^*$ for the optimal $Q$-functions of $\cM_0$ and $\cM$, respectively. Our first result is
\begin{lemma}\label{lemma:Qbound}
$\|Q^*_0-Q^*\|_{\infty}\leq \min\{1/(1-\gamma), \beta/(1-\gamma)^2\}$.
\end{lemma}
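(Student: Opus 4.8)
The plan is to prove the two quantities inside the minimum as separate upper bounds and then combine them.

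The bound $\|Q^*_0-Q^*\|_{\infty}\le 1/(1-\gamma)$ is immediate: since all rewards lie in $[0,1]$, every action-value (for either $\cM$ or $\cM_0$) is a $\gamma$-discounted sum of per-step rewards and therefore takes values in $[0,1/(1-\gamma)]$. Hence $|Q^*_0(s,a)-Q^*(s,a)|$ is at most the length of this interval, i.e. $1/(1-\gamma)$, for every $(s,a)$.

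For the sharper bound $\beta/(1-\gamma)^2$ I would run the standard fixed-point (simulation-lemma) argument. Introduce the Bellman optimality operators $\cT$ and $\cT_0$ acting on $Q$-functions of $\cM$ and $\cM_0$ respectively, namely $(\cT Q)(s,a)=\sum_{s'}p(s'|s,a)\big(r(s,a,s')+\gamma\max_{a'}Q(s',a')\big)$ and similarly for $\cT_0$. Each is a $\gamma$-contraction in $\|\cdot\|_{\infty}$ with fixed points $Q^*$ and $Q^*_0$. Adding and subtracting $\cT_0 Q^*$ gives
\[
\|Q^*_0-Q^*\|_{\infty}=\|\cT_0 Q^*_0-\cT Q^*\|_{\infty}\le \|\cT_0 Q^*_0-\cT_0 Q^*\|_{\infty}+\|\cT_0 Q^*-\cT Q^*\|_{\infty}\le \gamma\|Q^*_0-Q^*\|_{\infty}+\|(\cT_0-\cT)Q^*\|_{\infty},
\]
so that $\|Q^*_0-Q^*\|_{\infty}\le \frac{1}{1-\gamma}\,\|(\cT_0-\cT)Q^*\|_{\infty}$. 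It then remains to bound the one-step discrepancy $\|(\cT_0-\cT)Q^*\|_{\infty}$, evaluated at the \emph{same} value function $V^*=\max_a Q^*(\cdot,a)$.

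For a fixed $(s,a)$, write $(\cT_0-\cT)Q^*(s,a)$ as a reward part plus a transition part. The reward part is controlled by $\|r_0-r\|_{\infty}\le\beta$. For the transition part $\sum_{s'}\big(p_0(s'|s,a)-p(s'|s,a)\big)\big(r+\gamma V^*(s')\big)$, the crucial fact is that $\sum_{s'}\big(p_0(s'|s,a)-p(s'|s,a)\big)=0$, so one may subtract any constant before applying Hölder; bounding by $\|p_0(\cdot|s,a)-p(\cdot|s,a)\|_1\le\beta$ against $\gamma V^*$ with $\|V^*\|_{\infty}\le 1/(1-\gamma)$ contributes $\gamma\beta/(1-\gamma)$. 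Feeding the reward contribution $\beta$ and the transition contribution $\gamma\beta/(1-\gamma)$ through the $\frac{1}{1-\gamma}$ prefactor collapses, via $(1-\gamma)+\gamma=1$, to $\frac{1}{1-\gamma}\big(\beta+\frac{\gamma\beta}{1-\gamma}\big)=\frac{\beta}{(1-\gamma)^2}$, and taking the minimum of the two bounds finishes the proof.

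The main obstacle is the constant bookkeeping rather than any conceptual difficulty: one must combine the reward and transition perturbations so they sum to exactly $\beta/(1-\gamma)^2$ instead of a looser multiple. The two facts that make this work are the zero-sum property $\sum_{s'}(p_0-p)=0$ (so only the range/span of $r+\gamma V^*$ enters, not its magnitude) and the identity that telescopes $\frac{\beta}{1-\gamma}+\frac{\gamma\beta}{(1-\gamma)^2}$ to $\frac{\beta}{(1-\gamma)^2}$. The only point needing care is that the reward here depends on $s'$, so the transition perturbation also acts on $r$; this cross term is handled by the same centering step and is lower order, leaving the leading $\beta/(1-\gamma)^2$ intact.
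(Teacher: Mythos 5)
Your proof is correct and reaches the same constant, but by a genuinely different (and slightly more economical) route than the paper. The paper fixes an \emph{arbitrary} policy $\pi$, derives the self-bounding recursion $\|Q_0^{\pi}-Q^{\pi}\|_\infty \leq \beta + \gamma\beta/(1-\gamma) + \gamma\|Q_0^{\pi}-Q^{\pi}\|_\infty$ to get the per-policy bound $\|Q_0^{\pi}-Q^{\pi}\|_\infty\leq\beta/(1-\gamma)^2$, and then transfers this to the optimal $Q$-functions via the sandwich $Q_0^{\pi^*}-Q^{\pi^*}\leq Q_0^*-Q^*\leq Q_0^{\pi_0^*}-Q^{\pi_0^*}$ evaluated at the two optimal policies. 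You instead exploit that the Bellman \emph{optimality} operators $\cT_0,\cT$ are themselves $\gamma$-contractions with fixed points $Q_0^*,Q^*$, so a single triangle inequality at the fixed point gives $\|Q_0^*-Q^*\|_\infty\leq\frac{1}{1-\gamma}\|(\cT_0-\cT)Q^*\|_\infty$ with no sandwich step; the one-step discrepancy bound $\beta+\gamma\beta/(1-\gamma)$ and the telescoping to $\beta/(1-\gamma)^2$ are then identical in both arguments. What the paper's route buys is the marginally stronger conclusion that the bound holds for every policy's $Q$-function, not only the optimal one; what yours buys is avoiding the sandwich step entirely. One shared loose end worth noting: with rewards of the form $r(s,a,s')$, the expected-reward difference picks up the cross term $\sum_{s'}(p_0-p)\,r$ on top of $\sum_{s'}p_0\,(r_0-r)$; the paper silently absorbs it by writing $r(s,a)$ in its appendix proof, while your centering remark controls it only up to a constant (recovering exactly $\beta/(1-\gamma)^2$ this way needs $\gamma\geq 1/2$, or a next-state-independent reward). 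Neither treatment affects the order of the bound anywhere it is used downstream.
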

As explained before, we use Lemma \ref{lemma:Qbound} to estimate $Q^*$ and then identify potential-optimal actions. Next, we give a rigorous definition of these actions. Before that, we introduce the following lemma.
\begin{lemma}\label{lemma:epsQ}
Given $\epsilon>0$, for every state $s\in\cS$, we denote by $a^s$ an action in $\cA^s$ such that $Q^*(s,a^s)\geq \max_{a} Q^*(s,a) -\epsilon(1-\gamma)$. Then $\pi(s)=a^s$ is an $\epsilon$-optimal policy for $\cM$.
\end{lemma}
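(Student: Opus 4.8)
The plan is to show directly that the policy $\pi$ defined by $\pi(s)=a^s$ satisfies $V^*(s)-V^\pi(s)\le\epsilon$ for every $s$, which is exactly the definition of $\epsilon$-optimality. I would introduce the pointwise suboptimality gap $\Delta(s):=V^*(s)-V^\pi(s)$, which is nonnegative everywhere because $V^*$ dominates the value of any policy, and reduce the whole statement to proving $\max_{s}\Delta(s)\le\epsilon$.

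The core is a one-step decomposition. Since $V^\pi(s)=Q^\pi(s,\pi(s))$, I would insert $Q^*(s,\pi(s))$ and split
\[
V^*(s)-V^\pi(s)=\underbrace{\big(V^*(s)-Q^*(s,\pi(s))\big)}_{\text{(I)}}+\underbrace{\big(Q^*(s,\pi(s))-Q^\pi(s,\pi(s))\big)}_{\text{(II)}}.
\]
Term (I) is at most $\epsilon(1-\gamma)$ by the defining property of $a^s=\pi(s)$, namely $Q^*(s,\pi(s))\ge\max_a Q^*(s,a)-\epsilon(1-\gamma)=V^*(s)-\epsilon(1-\gamma)$. For term (II), both $Q$-values share the same action $\pi(s)$ and the same one-step reward, so using the definitions of $Q^*$ and $Q^\pi$ the rewards cancel and only the discounted continuation values remain, giving
\[
Q^*(s,\pi(s))-Q^\pi(s,\pi(s))=\gamma\sum_{s'\in\cS}p(s'|s,\pi(s))\,\big(V^*(s')-V^\pi(s')\big)=\gamma\sum_{s'\in\cS}p(s'|s,\pi(s))\,\Delta(s').
\]
Combining the two bounds yields the recursive inequality $\Delta(s)\le\epsilon(1-\gamma)+\gamma\sum_{s'}p(s'|s,\pi(s))\,\Delta(s')$ valid for all $s$.

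To close the recursion I would take the maximum over states: let $s^\star\in\argmax_s\Delta(s)$. Since $\sum_{s'}p(s'|s^\star,\pi(s^\star))\Delta(s')$ is a convex combination of the values $\Delta(s')$, each of which is at most $\Delta(s^\star)$, it is bounded by $\Delta(s^\star)$. Substituting gives $\Delta(s^\star)\le\epsilon(1-\gamma)+\gamma\,\Delta(s^\star)$, i.e. $(1-\gamma)\Delta(s^\star)\le\epsilon(1-\gamma)$, hence $\max_s\Delta(s)=\Delta(s^\star)\le\epsilon$. This is precisely $V^\pi(s)\ge V^*(s)-\epsilon$ for all $s$, so $\pi$ is $\epsilon$-optimal.

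The main obstacle is purely in setting up term (II) correctly: I must be careful that the expansion of $Q^*$ versus $Q^\pi$ keeps the \emph{same} policy action $\pi(s)$ at the current state (so the immediate rewards cancel exactly) while differing only through the future value functions $V^*$ versus $V^\pi$. The rest is the standard $\gamma$-contraction argument, and the only assumption I rely on is $\Delta\ge 0$, which justifies bounding the transition-weighted average by the maximum.
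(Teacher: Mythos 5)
Your proof is correct and follows essentially the same route as the paper: the identical decomposition $V^*(s)-V^\pi(s)=\bigl(V^*(s)-Q^*(s,a^s)\bigr)+\bigl(Q^*(s,a^s)-Q^\pi(s,a^s)\bigr)$ leading to the recursive inequality $\Delta(s)\le\epsilon(1-\gamma)+\gamma\sum_{s'}p(s'|s,a^s)\Delta(s')$. The only cosmetic difference is how the recursion is closed — you take the maximizing state and solve $(1-\gamma)\Delta(s^\star)\le\epsilon(1-\gamma)$, while the paper unrolls it into the geometric series $\sum_n\epsilon(1-\gamma)\gamma^n=\epsilon$; both are valid.
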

Therefore, to construct an $\epsilon$-optimal policy, we only need to find actions satisfying the condition in Lemma \ref{lemma:epsQ}. Instead of searching over the whole action space, we use Lemma \ref{lemma:Qbound} to shrink the searching range. Specifically, we have the next lemma.
\begin{lemma}\label{lemma:aselect}
Given $\beta>0$ and $\epsilon>0$, let $C(\beta,\epsilon) := 2\cdot\min\{1/(1-\gamma), \beta/(1-\gamma)^2\}-\epsilon(1-\gamma)$. For every state $s\in\cS$, we can definitely find an action satisfying the condition in Lemma \ref{lemma:epsQ} from the set $\cA^s_{\cM_0}(C(\beta,\epsilon))$.
\end{lemma}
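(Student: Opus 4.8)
The goal is to show that for every state $s$, the set $\cA^s_{\cM_0}(C(\beta,\epsilon))$ with $C(\beta,\epsilon) = 2\cdot\min\{1/(1-\gamma),\beta/(1-\gamma)^2\} - \epsilon(1-\gamma)$ contains at least one action $a$ satisfying $Q^*(s,a)\geq \max_{a'}Q^*(s,a') - \epsilon(1-\gamma)$, which by Lemma \ref{lemma:epsQ} suffices to build an $\epsilon$-optimal policy. The natural plan is to exhibit a concrete witness: take $a^\star$ to be an optimal action for the true model $\cM$ at state $s$, i.e. $Q^*(s,a^\star) = V^*(s) = \max_{a'}Q^*(s,a')$. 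Such an action trivially satisfies the condition in Lemma \ref{lemma:epsQ}, so the only thing left is to verify that $a^\star$ actually lies in $\cA^s_{\cM_0}(C(\beta,\epsilon))$, which is a statement about the \emph{approximate} model's $Q$-function $Q^*_0$.

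Concretely, I would unpack the definition in Equation \eqref{eq:As}: membership $a^\star\in\cA^s_{\cM_0}(C(\beta,\epsilon))$ (in the case $C>0$) requires $V^*_0(s) - Q^*_0(s,a^\star) < C(\beta,\epsilon)$, where $V^*_0$ and $Q^*_0$ refer to $\cM_0$. The plan is to bound this quantity by transferring back and forth between the two models using Lemma \ref{lemma:Qbound}. Writing $\eta := \min\{1/(1-\gamma),\beta/(1-\gamma)^2\}$, Lemma \ref{lemma:Qbound} gives $\|Q^*_0 - Q^*\|_\infty \leq \eta$, and since $V^*_0(s) = \max_{a'}Q^*_0(s,a')$ and $V^*(s) = \max_{a'}Q^*(s,a')$ are maxima of entrywise-close vectors, we also get $|V^*_0(s) - V^*(s)|\leq \eta$. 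I would then estimate
\begin{align}
V^*_0(s) - Q^*_0(s,a^\star) &= \big(V^*_0(s) - V^*(s)\big) + \big(V^*(s) - Q^*(s,a^\star)\big) + \big(Q^*(s,a^\star) - Q^*_0(s,a^\star)\big).
\end{align}
The middle term vanishes because $a^\star$ is optimal in $\cM$, and the first and third terms are each bounded by $\eta$ in absolute value, giving $V^*_0(s) - Q^*_0(s,a^\star) \leq 2\eta$.

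The remaining subtlety is the strict inequality and the $-\epsilon(1-\gamma)$ slack in the definition of $C(\beta,\epsilon)$: the bound $2\eta$ exactly matches $C(\beta,\epsilon)+\epsilon(1-\gamma)$, so a naive optimal action only yields $V^*_0(s) - Q^*_0(s,a^\star)\leq 2\eta$, which is not strictly less than $C(\beta,\epsilon) = 2\eta - \epsilon(1-\gamma)$. This is where I expect the main (though minor) obstacle to lie, and I would resolve it exactly as Lemma \ref{lemma:epsQ} allows: rather than insisting on a truly optimal $a^\star$, I would choose the witness $a^s$ to be an action that is optimal in $\cM$, and use the $\epsilon(1-\gamma)$ budget to absorb the boundary. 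More carefully, one should track whether the action we finally certify needs to satisfy the Lemma \ref{lemma:epsQ} condition with strict or non-strict inequality; since Lemma \ref{lemma:epsQ} only requires $Q^*(s,a^s)\geq \max_a Q^*(s,a) - \epsilon(1-\gamma)$ (non-strict), it is enough to show that the set $\cA^s_{\cM_0}(C(\beta,\epsilon))$ contains \emph{some} action whose true value is within $\epsilon(1-\gamma)$ of optimal. I would argue that if the strictly-optimal $a^\star$ happens to sit exactly on the boundary $V^*_0(s)-Q^*_0(s,a^\star)=2\eta$, one can instead certify membership by noting that any action $a$ with $V^*(s)-Q^*(s,a)\leq \epsilon(1-\gamma)$ automatically satisfies $V^*_0(s)-Q^*_0(s,a) \leq \epsilon(1-\gamma) + 2\eta - 2\eta\cdot(\text{slack})$; cleanly, the intended reading is that plugging a genuinely $\cM$-optimal action into the display above yields the non-strict bound $2\eta$, and the strict threshold in \eqref{eq:As} together with the $\epsilon(1-\gamma)$ correction is designed precisely so that the $\epsilon$-suboptimal actions of $\cM$ are captured. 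I would therefore finish by confirming the inequality chain produces a value strictly below $C(\beta,\epsilon)$ for at least one admissible witness, completing the proof.
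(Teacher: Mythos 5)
Your proposal correctly identifies the crux, but it does not resolve it, and the resolution you sketch would fail. The decomposition
$V^*_0(s)-Q^*_0(s,a^\star)=\bigl(V^*_0(s)-V^*(s)\bigr)+\bigl(V^*(s)-Q^*(s,a^\star)\bigr)+\bigl(Q^*(s,a^\star)-Q^*_0(s,a^\star)\bigr)$
with $a^\star$ optimal in $\cM$ gives only $2\eta$, whereas membership in $\cA^s_{\cM_0}(C(\beta,\epsilon))$ requires a bound strictly below $2\eta-\epsilon(1-\gamma)$. Your proposed fix --- replacing $a^\star$ by an action whose true value is merely within $\epsilon(1-\gamma)$ of optimal --- goes in the wrong direction: for such an action the middle term is $\leq\epsilon(1-\gamma)$ rather than $0$, so the bound on $V^*_0(s)-Q^*_0(s,a)$ becomes $2\eta+\epsilon(1-\gamma)$, i.e.\ \emph{weaker}. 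The $\epsilon(1-\gamma)$ budget from Lemma~\ref{lemma:epsQ} is a tolerance on the \emph{true} $Q$-values of the action you ultimately output; it cannot be spent to tighten a membership certificate for $\cA^s_{\cM_0}(\cdot)$, which is a statement about $Q^*_0$. The displayed inequality ending in ``$2\eta\cdot(\text{slack})$'' is not a valid step, and the paragraph never produces an admissible witness.

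The missing idea is a case split, which is exactly how the paper argues. Let $c_0:=\min\{1/(1-\gamma),\beta/(1-\gamma)^2\}$ and let $a_0^*\in\argmax_a Q^*_0(s,a)$, which lies in $\cA^s_{\cM_0}(c)$ for \emph{every} value of $c$ by the definition in Equation~\eqref{eq:As}. If $\max_a Q^*(s,a)-\epsilon(1-\gamma)\leq V^*_0(s)-c_0$, then $Q^*(s,a_0^*)\geq Q^*_0(s,a_0^*)-c_0=V^*_0(s)-c_0\geq\max_a Q^*(s,a)-\epsilon(1-\gamma)$, so $a_0^*$ itself satisfies the condition of Lemma~\ref{lemma:epsQ} and you are done without ever certifying $a^\star$. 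Otherwise $\max_a Q^*(s,a)-\epsilon(1-\gamma)>V^*_0(s)-c_0$, and this reversed inequality is precisely the extra $\epsilon(1-\gamma)$ you were missing: for the $\cM$-optimal $a^\star$,
$Q^*_0(s,a^\star)\geq Q^*(s,a^\star)-c_0=\max_a Q^*(s,a)-c_0>V^*_0(s)-2c_0+\epsilon(1-\gamma)$,
hence $V^*_0(s)-Q^*_0(s,a^\star)<2c_0-\epsilon(1-\gamma)=C(\beta,\epsilon)$ strictly. In either case the set contains an $\epsilon(1-\gamma)$-optimal action of $\cM$; this also covers $C(\beta,\epsilon)\leq 0$, where the set degenerates to $\argmax_a Q^*_0(s,a)$ and the first case applies. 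Without this dichotomy your argument, as written, does not close.
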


Actions in the set $\cA^s_{\cM_0}(C(\beta,\epsilon))$ are the potential-optimal actions. Next, we form a new MDP $\cM^c:=(\cS,\{\cA^s_{\cM_0}(C(\beta,\epsilon))\}_{s\in\cS}, p^c, r^c,\gamma)$, where $p^c(s'|s,a)=p(s'|s,a)$ and $r^c(s,a,s')=r(s,a,s)$ for every $s\in\cS, a\in\cA^s_{\cM_0}(\overline{C})$ and $s'\in\cS$. By Lemma \ref{lemma:aselect}, an optimal policy of $\cM^c$ is at least $\epsilon$-optimal for $\cM$. Then we use a near-optimal RL algorithm to learn an $\varepsilon/2$-optimal policy of $\cM^c$. Thus, the output policy of Algorithm \ref{alg:main} is $\epsilon+\varepsilon/2$-optimal for $\cM$. By the definition of an near-optimal RL algorithm and letting $\epsilon=\varepsilon/2$, we obtain Theorem \ref{thm:main_upper}.


%

\cut{Lemma \ref{lemma:Qbound} does not require additional structure assumptions. Based on it, given the full information of $\cM_0$, we can estimate $Q_0^*$ using any planning algorithm and then get interval estimates of every component of $Q^*$. For example, if we compute a function $\widetilde{Q}_0^*$ such that $\big\|\widetilde{Q}_0^*-Q_0^*\big\|_{\infty}\leq \epsilon$, then for every $(s,a)\in\cS\times\cA$,
$$Q^*(s,a)\in \left[\max\big\{0,\widetilde{Q}_0^*(s,a)-C_\beta-\epsilon\big\}, \min\big\{1/(1-\gamma),\widetilde{Q}_0^*(s,a)+C_\beta+\epsilon\big\}\right],$$
where $C_\beta=\beta/(1-\gamma)^2$. With such interval estimates, we can eliminate sub-optimal actions. The idea is straightforward. }

\subsection{Proofs of the Lower Bound}\label{sec:lowerbdd}
Next, we prove Theorem \ref{thm:main_lower}.
We first give a definition about the correctness of RL algorithms.

\begin{definition}(($\cM_0, \beta, \varepsilon, \delta$)-correctness)\label{def:epsilonalgorithm}
Given $\beta>0$ and a prior model $\cM_0$, we say that an RL algorithm $\sA$ is $(\cM_0, \beta, \varepsilon, \delta)$-correct if for every $\cM\in B_{\mathrm{TV}}(\cM_0, \beta)$, $\sA$ can output an $\varepsilon$-optimal policy with probability at least $1-\delta$.
\end{definition}

Next, we construct a class of hard MDPs. We will then select one model $\cM_0$ from the class as prior and  show that if an RL algorithm $\sA$ learns with samples significantly fewer than the lower bound, there would always exist an MDP $\cM\in B_{\mathrm{TV}}(\cM_0, \beta)$ such that $\sA$ cannot be $(\cM_0, \beta, \varepsilon, \delta)$-correct.
\cut{we construct a class of hard MDPs which is a generalization of the simple example in Sec. \ref{sec:toyexample1}. We select one model $\cM_0$ from the class and use its full information as the prior knowledge. Then, we consider any algorithm that is $(\cM_0, \beta, \varepsilon, \delta)$-correct.}

\cut{learning $\varepsilon$-optimal policies for

another two models $\cM^1$ and $\cM^2$ that are both $\beta-$close to $\cM_0$ under TV-distance. For the later two models, we assume access to their generative models respectively. Note that these two learning tasks (for $\cM^1$ or $\cM^2$) should have the same sample complexity since they are provided with the same prior knowledge and sample accesses. Then, we show that when $\varepsilon<\varepsilon_0(\beta)$, if the sample number is less than $C\frac{N}{(1-\gamma)^3\varepsilon^2}\log(\frac{N}{\delta})$, at least one of the learning tasks cannot be accomplished. Therefore, a lower bound is achieved.}

\cut{If there is an $(\cM_0, \beta,\varepsilon, \delta, Q)$-correct algorithm $\sA$, then it can return $\varepsilon$-optimal $Q$-values for both $\cM^1$ and $\cM^2$ with probability at least $1-\delta$. We then restrict $\varepsilon$ to be very small and select $\cM^1$ and $\cM^2$ smartly such that these two models are super close to each other but still the $Q$-values differ by more than 2$\varepsilon$. Then the difficulty of the problem is equivalent to distinguishing two very close values where the $\beta$ information is useless. By then, we are able to show that with number of samples significantly less than $C\frac{|\cS||\cA|}{(1-\gamma)^3\varepsilon^2}\log(\frac{|\cS|\cA|}{\delta})$, at least for one of $\cM^1$ and $\cM^2$, $\sA$ cannot produce an $\varepsilon$-optimal $Q$-values. Therefore, the lower bound result is obtained. }

\paragraph{Construction of the Hard Case}
We define a family of MDPs $\mathbb{M}$ with a structure as in Figure \ref{fig:mdp}. The state space $\cS$ consists of three disjoint subsets $\cX$ (gray nodes), $\cY_1$ (green nodes), and $\cY_2$ (blue nodes). The set $\cX$ includes $K$ states $\{x_1, x_2, \dots, x_K\}$ and each of them has $L$ available actions $\{a_1,a_2,\dots,a_L\}=:\cA$. States in $\cY_1$ and $\cY_2$ are all of single-action. For state $x\in\cX$, by taking action $a\in\cA$, it transitions to a state $y_1(x,a)\in\cY_1$ with probability 1. Note that such a mapping is one-to-one from $\cX\times\cA$ to $\cY_1$. For state $y_1(x,a)\in\cY_1$, it transitions to itself with probability $p_{\cM}(x,a)\in(0,1)$ and to a corresponding state $y_2(y_1) \in \cY_2$ with probability $1-p_{\cM}(x,a)$. $p_{\cM}(x,a)$ can be different for different models. All states in $\cY_2$ are absorbing. The reward function $R(s,a,s')=1$ if $s'\in\cY_1$; otherwise 0.

$\mathbb{M}$ is a generalization of a multi-armed bandit problem used in \citealt{mannor2004sample} to prove a lower bound on bandit learning. A similar example is also shown in \citealt{azar2013minimax}.
For an MDP $\cM\in\mathbb{M}$, it is fully determined by the parameter set $\{p_{\cM}(x_k,a_l), k\in[K], l\in[L]\}$. And its $Q$-function is
$Q_{\cM}(x,a) = 1/(1-\gamma p_{\cM}(x,a)), \forall~ (x,a)\in\cX\times\cA.$

\begin{figure}[t]
  \centering
  \vspace{-10pt}
  \includegraphics[width=.9\textwidth]{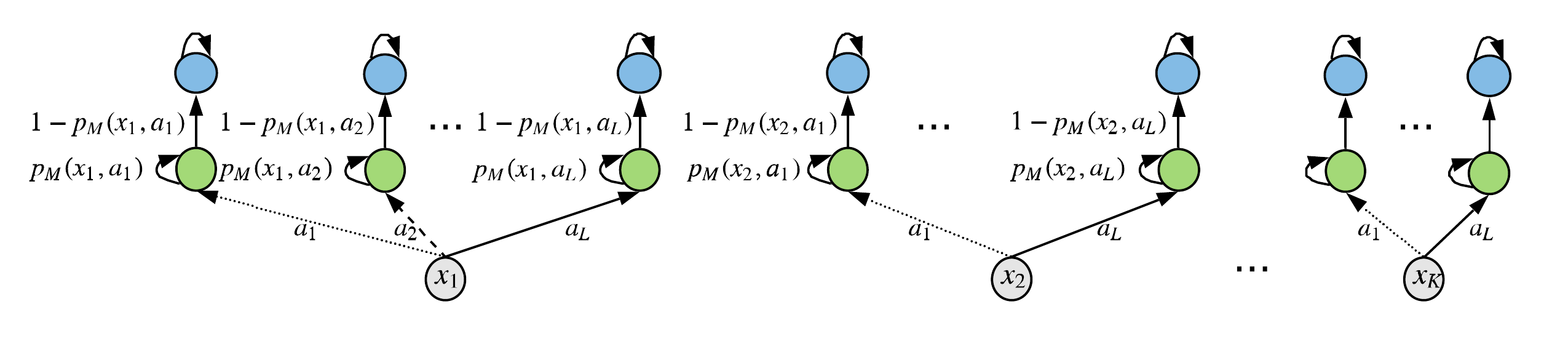}\\
  \vspace{-15pt}
  \caption{The class of MDPs considered to prove the lower bound in Theorem \ref{thm:main_lower}.} 
  \label{fig:mdp}
\end{figure}

In the sequel,
we restrict $\beta\in(0,2)$ (by definition in Equation \ref{eq:TV}, $d_{\text{TV}}(\cdot,\cdot)\leq 2$). Fixing $\beta$, we further restrict the discount factor $\gamma \in \big(\max\{0.4, 1-10\beta\}, 1\big)$.

\paragraph{Prior Model $\cM_0$ and Hypotheses of $\cM$}
For $\cM_0$, we simplify the notation $p_{\cM_0}(x_k,a_l)$ as $p_0(x_k,a_l)$. Without loss of generality, we assume
$
    1>p_0(x_k,a_1)\geq p_0(x_k,a_2)\geq \cdots \geq p_0(x_k,a_L)\geq 0, \text{~for every } x_k\in\cX.
$
Thus, in $\cM_0$, the $Q$-values from $a_1$ to $a_L$ are monotonically non-increasing. Given $\beta$ and $\gamma$, we require $p_0(x_k,a_1)\in\big(\frac{4\gamma-1}{3\gamma},1\big)$ for every $x_k\in\cX$.

After selecting a prior model $\cM_0$ satisfying the above conditions, we define $p_0^k := \max\{p_0(x_k, a_1)-\beta/2, \frac{4\gamma-1}{3\gamma}\}$ for every $k\in[K]$. Then let $\varepsilon_0 := \min_{k\in[K]} \big\{\beta\gamma(1-p_0^k)/(16(1-\gamma p_0^k)^2)\big\}$. Fixing $\varepsilon\in(0, \varepsilon_0)$, we define two numbers $\alpha_1^k$ and $\alpha_2^k := 4(1-\gamma p_0^k)^2\varepsilon/\gamma$ such that
\begin{align}\label{eq:a1}
    &\frac{1}{1-\gamma(p_0^k+\alpha_1^k)}-\frac{1}{1-\gamma p_0^k} = 2\varepsilon\quad\text{and}\quad\frac{1}{1-\gamma(p_0^k+\alpha_2^k)}-\frac{1}{1-\gamma (p_0^k+\alpha_1^k)} \geq 2\varepsilon.
\end{align}
\cut{Furthermore, for every $k\in[K]$, we define an integer
\begin{align}\label{eq:Lk}
    L_k:=\bigg\vert~\Big\{l\in[L]~\Big\vert ~\big\vert~p_0^k+\alpha_2^k-p_0(x_k,a_l)~\big\vert\leq\beta/2\Big\}~\bigg\vert.
\end{align}
\begin{lemma}\label{lemma:2}
Given $\varepsilon\in (0, \varepsilon_0)$, the set in Equation \eqref{eq:Lk} is $\{ 1\leq l\leq L_k\}$.
\end{lemma}}
Now, we are ready to define possibilities of $\cM$. Given $K$ integers $\{L_k\in[L]\}_{k\in[K]}$, let
\begin{align}\label{eq:M}
\cM_{1}: \text{for every } k\in[K], ~ & \begin{cases}p_{\cM_1}(x_k,a_1) = p_0^k+\alpha_1^k,\\ p_{\cM_1}(x_k,a_l)=p_0^k, \quad &2\leq l\leq L_k,\\
p_{\cM_1}(x_k,a_l)=p_0(x_k,a_l), \quad &l >L_k;\end{cases}\\
\text {for every } k\in[K], 1<l\leq L_k,~  \cM_{k,l}:~& \begin{cases}
p_{\cM_{k,l}}(x_{k},a_l) = p_0^k+\alpha_2^k\\
p_{\cM_{k,l}}(x_{k'},a_{l'}) = p_{\cM_1}(x_{k'},a_{l'}), \quad \forall~ (k',l')\neq(k,l).\end{cases}
\end{align}
Due to our specific selection of $p_0^k, \alpha_1^k$ and $\alpha_2^k$, we have the following lemma.
\begin{lemma}\label{lemma:checkM}
Let
$L_k:=\Big\vert\big\{l\in[L]~\big\vert ~\vert~p_0^k+\alpha_2^k-p_0(x_k,a_l)~\vert\leq\beta/2\big\}\Big\vert$ for every $k\in[K]$.
All possibilities defined in \eqref{eq:M} lie in $B_{\text{TV}}(\cM_0, \beta)$.
\end{lemma}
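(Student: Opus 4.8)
The plan is to reduce the claim to a handful of scalar inequalities on the self-loop probabilities and verify each from the definitions of $p_0^k$, $\alpha_1^k$, $\alpha_2^k$, and $L_k$. First I would observe that throughout the family $\mathbb{M}$ the reward is fixed (it depends only on whether the successor lies in $\cY_1$), the transitions out of $\cX$ are deterministic and common to all models, and every state of $\cY_2$ is absorbing; hence none of these contribute to $d_{\mathrm{TV}}(\cM_0,\cdot)$ in \eqref{eq:TV}, and $\|r_0-r\|_\infty=0$. The only states whose kernels can differ are the single-action states $y_1(x_k,a_l)\in\cY_1$, whose law is supported on the two points $\{y_1(x_k,a_l),y_2\}$ with self-loop probability equal to the parameter attached to $(x_k,a_l)$. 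For a two-point distribution $\|p_0(\cdot\mid y_1)-p(\cdot\mid y_1)\|_1=2\,|p_0(x_k,a_l)-p_{\cM}(x_k,a_l)|$, so it suffices to prove $|p_0(x_k,a_l)-p_{\cM}(x_k,a_l)|\le\beta/2$ for every $k\in[K]$, every $l\in[L]$, and every model $\cM$ appearing in \eqref{eq:M}.

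Next I would record two bracketing facts. From $p_0^k=\max\{p_0(x_k,a_1)-\beta/2,\,(4\gamma-1)/(3\gamma)\}$ and the standing requirement $p_0(x_k,a_1)\in((4\gamma-1)/(3\gamma),1)$, both terms of the maximum are at most $p_0(x_k,a_1)$, giving $0\le p_0(x_k,a_1)-p_0^k\le\beta/2$; by the assumed monotonicity $p_0(x_k,a_1)\ge\cdots\ge p_0(x_k,a_L)$ this yields $p_0(x_k,a_l)\le p_0^k+\beta/2$ for all $l$. Second, I would bound the perturbations: since $\varepsilon<\varepsilon_0\le\beta\gamma(1-p_0^k)/(16(1-\gamma p_0^k)^2)$, substitution into $\alpha_2^k=4(1-\gamma p_0^k)^2\varepsilon/\gamma$ gives $\alpha_2^k\le\beta(1-p_0^k)/4\le\beta/4$, and applying monotonicity of $x\mapsto 1/(1-\gamma x)$ to the second relation in \eqref{eq:a1} forces $\alpha_1^k<\alpha_2^k\le\beta/4$.

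Finally I would check the entries model by model. For $\cM_1$: the $l=1$ coordinate satisfies $p_0^k+\alpha_1^k-p_0(x_k,a_1)\in[-\beta/2,\beta/2]$, since $p_0^k-p_0(x_k,a_1)\in[-\beta/2,0]$ and $0\le\alpha_1^k\le\beta/2$; for $2\le l\le L_k$, membership of $l$ in the index set defining $L_k$ gives $p_0(x_k,a_l)\ge p_0^k+\alpha_2^k-\beta/2\ge p_0^k-\beta/2$, which together with $p_0(x_k,a_l)\le p_0^k+\beta/2$ forces $|p_0^k-p_0(x_k,a_l)|\le\beta/2$; and for $l>L_k$ the coordinate is unchanged, so the difference is $0$. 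For each $\cM_{k,l}$ only the single coordinate $(k,l)$ differs from $\cM_1$, and there $|p_0^k+\alpha_2^k-p_0(x_k,a_l)|\le\beta/2$ holds by the very definition of $L_k$, while the remaining coordinates agree with $\cM_1$ and are already controlled. The point requiring care, and the main obstacle, is justifying that the index set $\{l:\,|p_0^k+\alpha_2^k-p_0(x_k,a_l)|\le\beta/2\}$ is exactly the prefix $\{1,\dots,L_k\}$, so that the piecewise definitions in \eqref{eq:M} are self-consistent; this follows from the monotonicity of $p_0(x_k,\cdot)$ and the fact that $p_0^k+\alpha_2^k$ lies near the top of that range, but it must be argued explicitly. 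Granting it, every per-coordinate probability difference is at most $\beta/2$, hence $d_{\mathrm{TV}}(\cM_0,\cM)\le\beta$ for every model in \eqref{eq:M}, which is the claim.
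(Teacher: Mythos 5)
Your proof is correct and follows essentially the same route as the paper's: reduce to the per-coordinate bounds $|p_{\cM}(x_k,a_l)-p_0(x_k,a_l)|\le\beta/2$ via the two-point-support observation, then verify them using $p_0(x_k,a_1)-\beta/2\le p_0^k<p_0(x_k,a_1)$, $0<\alpha_1^k<\alpha_2^k<\beta/2$, and the definition of $L_k$. The one step you defer---that the index set is the prefix $\{1,\dots,L_k\}$---is exactly the paper's Lemma~\ref{lemma:L_k}, proved there by the same monotonicity-plus-$l=1$ argument you sketch, so nothing is missing in substance.
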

We refer to the models in Equation \eqref{eq:M} as \emph{hypotheses} of $\cM$. There are in total $1+\sum_{k\in[K]}(L_k-1)$ of them. In $\cM_1$, for every $x_k\in\cX$, $a_1$ is the optimal action and $a_2$ to $a_{L_k}$ are the second-best actions with a value only $2\varepsilon$ less than the optimal (Equation \eqref{eq:a1}).
$\{\cM_{k,l}\}$ are built on top of $\cM_1$ by raising up the value of the action $a_l$ at state $x_k$ by at least $4\varepsilon$ (Equation \eqref{eq:a1}). Thus, in $\cM_{k,l}$, the best action for $x_k$ is $a_l$ and the best action for $x_{k'}~(k'\neq k)$ is still $a_1$.
See Figure \ref{fig:modelQ} for illustration. Every hypothesis gives a probability measure over the same sample space. We denote by $\mathbb{E}_1$, $\mathbb{P}_1$ and $\mathbb{E}_{k,l}$, $\mathbb{P}_{k,l}$ the expectation and probability under hypothesis $\cM_1$ and $\cM_{k,l}$, respectively. These probability measures capture both randomness in the MDP and the randomization in an RL algorithm like its sampling strategy.
\begin{figure}
    \centering
    \includegraphics[width=.9\textwidth]{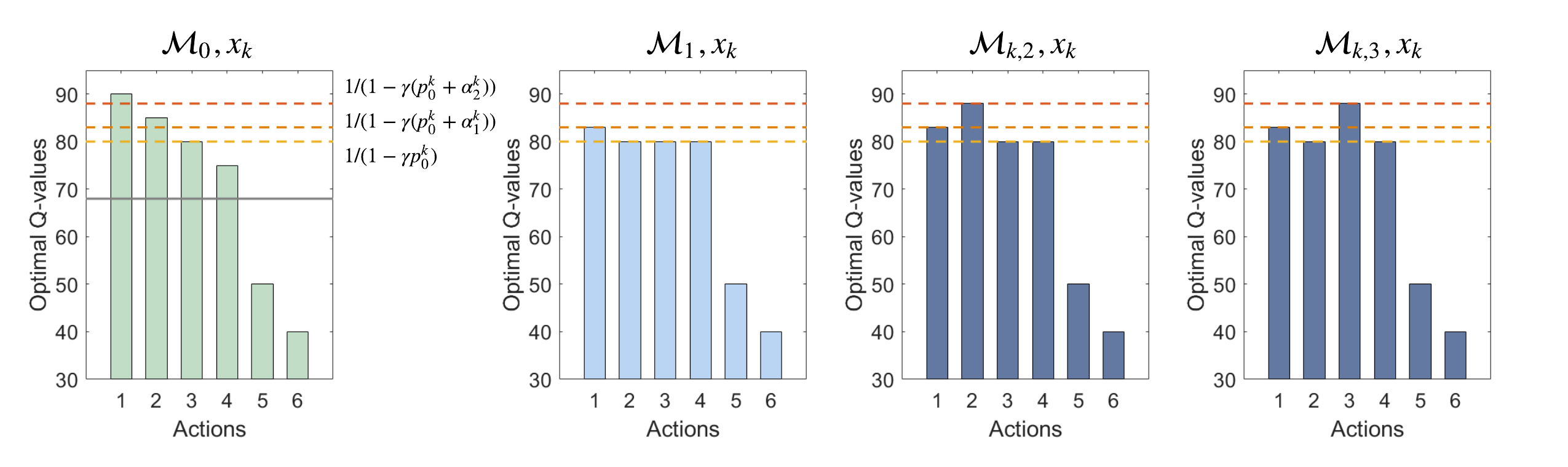}
    \vspace{-10pt}
    \caption{The optimal $Q$-values of $\cM_0$, $\cM_1$, $\cM_{k,2}$, and $\cM_{k,3}$ at state $x_k$ with 6 actions. The dashed lines indicate the values: $\frac{1}{1-\gamma p_0^k}$, $\frac{1}{1-\gamma (p_0^k+\alpha_1^k)}$, and $\frac{1}{1-\gamma (p_0^k+\alpha_2^k)}$, respectively. Actions above the grey line in $\cM_0$ are in the set for the definition of $L_k$ in Lemma \ref{lemma:checkM}. Note that for states $x_{k'}~ (k'\neq k)$, $\cM_{k,2}$ and $\cM_{k,3}$ have the same shape as $\cM_1$.}
    \label{fig:modelQ}
\end{figure}


We fix $\varepsilon\in(0, \varepsilon_0)$ and $\delta\in(0, 1/40)$. Let $\sA$ be an $(\cM_0, \beta,\varepsilon, \delta)$-correct RL algorithm. We denote by $T_{k,l}$ the number of samples that algorithm $\sA$ calls from the generative model with input state $y_1(x_{k},a_{l})$ till $\sA$ stops (these sample calls are not necessarily consecutive). For every $k\in[K], 1<l\leq L_k$, we define an event
$E_{k,l} = \{ \sA \text{ outputs a policy } \pi \text{ with } \pi(x_k) = a_1\}.$ Then we have the following key lemma.

\cut{\begin{lemma}\label{lemma:4}
For any $k\in[K], 1<l\leq L_k$, if ~$\mathbb{E}_1[T_{k,l}]\leq t^*$, $\mathbb{P}_1(A_{k,l})> 3/4$.
\end{lemma}

\begin{lemma}\label{lemma:5}
For any $k\in[K], 1<l\leq L_k$, $\mathbb{P}_1(C_{k,l})> 3/4$.
\end{lemma}

Since $\sA$ is $(\cM_0, \beta, \varepsilon, \delta)$-correct, it should return a policy $\pi$ such that when $\cM=\cM_1$, $\pi(x_{k})=a_1$ for every $k\in[K]$ with probability at least $1-\delta$, i.e. $\mathbb{P}_1\big(~E_{k,l}, ~\text{for all } k\in[K], 1<l\leq L_k~ \big)\geq 1-\delta>3/4$.
Combining the results above, it holds that
$
\mathbb{P}_1(A_{k,l}\cap E_{k,l} \cap C_{k,l})>1/4, \forall~ k\in[K], 1<l\leq L_k.
$
Next, we show that if the expectation of number of samples taken through $\sA$ on any $y_1(x_k,a_l)$ is less than $t^*$, with probability $>\delta$, $\sA$ can not return an $\varepsilon$-policy for the corresponding hypothesis $\cM_{k,l}$.
}

\begin{lemma}\label{lemma:newversion}
For any $k\in[K], 1<l\leq L_k$, if ~$\mathbb{E}_1[T_{k,l}]<\frac{c_1}{(1-\gamma)^3\varepsilon^2}\log\Big(\frac{1}{4\delta}\Big)$, $\mathbb{P}_{k,l}(E_{k,l})>\delta$, where $c_1>0$ is some large constant.
\end{lemma}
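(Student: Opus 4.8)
The plan is to reduce the lemma to a binary hypothesis test between $\cM_1$ and $\cM_{k,l}$ via a change of measure, exploiting that these two MDPs differ in exactly one place. First I would record the structural fact that $\cM_1$ and $\cM_{k,l}$ are identical everywhere except at the single-action state $y_1(x_k,a_l)$, whose self-loop probability equals $p_0^k$ under $\cM_1$ and $p_0^k+\alpha_2^k$ under $\cM_{k,l}$. Consequently every sample the algorithm draws from a state other than $y_1(x_k,a_l)$ has the same law under $\mathbb{P}_1$ and $\mathbb{P}_{k,l}$, while each of the $T_{k,l}$ samples drawn from $y_1(x_k,a_l)$ is a single Bernoulli draw (stay versus leave). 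By the divergence-decomposition (chain rule) for adaptive interaction with a generative model, this yields
\[
\mathrm{KL}\big(\mathbb{P}_1\,\|\,\mathbb{P}_{k,l}\big)=\mathbb{E}_1[T_{k,l}]\cdot\kappa,\qquad \kappa:=\mathrm{KL}\big(\mathrm{Ber}(p_0^k)\,\|\,\mathrm{Ber}(p_0^k+\alpha_2^k)\big),
\]
where a Wald/stopping-time argument handles the randomness of $T_{k,l}$.

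Next I would bound the per-sample divergence $\kappa$. Using $\alpha_2^k=4(1-\gamma p_0^k)^2\varepsilon/\gamma$ together with the constraint $p_0^k\ge(4\gamma-1)/(3\gamma)$ — which forces $1-\gamma\le 1-\gamma p_0^k\le \tfrac{4}{3}(1-\gamma)$ and, on the admissible ranges $\gamma\in(\max\{0.4,1-10\beta\},1)$ and $\varepsilon\in(0,\varepsilon_0)$, keeps $p_0^k+\alpha_2^k$ bounded away from $0$ and $1$ — and the elementary estimate $\mathrm{KL}(\mathrm{Ber}(p)\|\mathrm{Ber}(q))\le (p-q)^2/[q(1-q)]$, one obtains $\kappa\le C_2(1-\gamma)^3\varepsilon^2$ for an absolute constant $C_2$. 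This is the step I expect to be the main obstacle: the per-sample KL must be pinned to order $(1-\gamma)^3\varepsilon^2$ with an explicit constant. The crucial role of $\varepsilon<\varepsilon_0$ (recall $\varepsilon_0=\min_k \beta\gamma(1-p_0^k)/(16(1-\gamma p_0^k)^2)$) is to guarantee $\alpha_2^k<\beta(1-p_0^k)/4\le (1-p_0^k)/2$, so that the denominator $1-p_0^k-\alpha_2^k$ stays $\Theta(1-\gamma)$ and $\kappa$ does not blow up.

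With these two ingredients I would invoke the Bretagnolle--Huber inequality on the event $E_{k,l}=\{\pi(x_k)=a_1\}$. Since $\sA$ is $(\cM_0,\beta,\varepsilon,\delta)$-correct and, in $\cM_1$, every action $a\neq a_1$ at $x_k$ has $Q$-value at least $2\varepsilon$ below $V^*(x_k)$ while the value collected downstream of $y_1(x_k,a)$ is deterministic (so $V^\pi(x_k)=Q^*_{\cM_1}(x_k,\pi(x_k))$), no $\varepsilon$-optimal policy may select such an $a$; hence $\mathbb{P}_1(\overline{E_{k,l}})\le\delta$. Bretagnolle--Huber then gives
\[
\mathbb{P}_1(\overline{E_{k,l}})+\mathbb{P}_{k,l}(E_{k,l})\ \ge\ \tfrac12\exp\!\big(-\mathrm{KL}(\mathbb{P}_1\|\mathbb{P}_{k,l})\big),
\]
so that $\mathbb{P}_{k,l}(E_{k,l})\ge \tfrac12\exp(-\mathbb{E}_1[T_{k,l}]\,\kappa)-\delta$. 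Finally, the hypothesis $\mathbb{E}_1[T_{k,l}]<\tfrac{c_1}{(1-\gamma)^3\varepsilon^2}\log\tfrac1{4\delta}$ gives $\mathbb{E}_1[T_{k,l}]\,\kappa< c_1C_2\log\tfrac1{4\delta}$, whence $\mathbb{P}_{k,l}(E_{k,l})>\tfrac12(4\delta)^{c_1C_2}-\delta$; choosing the constant $c_1$ so that $c_1C_2<1$ and using $4\delta<1$ (from $\delta<1/40$) makes $\tfrac12(4\delta)^{c_1C_2}>2\delta$, which yields $\mathbb{P}_{k,l}(E_{k,l})>\delta$, as claimed.

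An equivalent, more self-contained route avoids citing Bretagnolle--Huber: writing $\mathbb{P}_{k,l}(E_{k,l})=\mathbb{E}_1[\mathbf{1}\{E_{k,l}\}\,\tfrac{d\mathbb{P}_{k,l}}{d\mathbb{P}_1}]$, one restricts to the event on which $T_{k,l}$ is within a constant factor of $\mathbb{E}_1[T_{k,l}]$ (Markov) and the log-likelihood ratio exceeds $\log(4\delta)$ (controlled through the KL bound on $\kappa$), and lower-bounds the integrand by $4\delta$ on that event; the two high-probability-under-$\mathbb{P}_1$ events, intersected with $E_{k,l}$, retain probability exceeding $1/4$, again producing $\mathbb{P}_{k,l}(E_{k,l})>\delta$.
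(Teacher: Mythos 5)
Your argument is correct, but it takes a genuinely different route from the paper's. The paper proves the lemma by an explicit pointwise change of measure: it introduces three events --- $A_{k,l}=\{T_{k,l}\le 4t^*\}$ (controlled by Markov's inequality), $E_{k,l}$ itself (controlled by the correctness of $\sA$ under $\cM_1$), and a martingale event $C_{k,l}$ on the reward sum $S_{k,l}$ (controlled by Doob's maximal inequality, which is what lets the argument cover adaptive, non-predetermined sample counts) --- shows their intersection $\cE_{k,l}$ has $\mathbb{P}_1$-probability exceeding $1/4$, lower-bounds the Bernoulli likelihood ratio $L_{k,l}(W)/L_1(W)$ \emph{pointwise} on $\cE_{k,l}$ by $4\delta$ via elementary $\log(1-u)$ estimates, and concludes $\mathbb{P}_{k,l}(E_{k,l})\ge\mathbb{E}_1\big[(L_{k,l}(W)/L_1(W))\mathbf{1}_{\cE_{k,l}}\big]>4\delta\cdot\tfrac14=\delta$. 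You instead work in expectation: the divergence-decomposition/Wald identity collapses $\mathrm{KL}(\mathbb{P}_1\|\mathbb{P}_{k,l})$ to $\mathbb{E}_1[T_{k,l}]\cdot\mathrm{KL}(\mathrm{Ber}(p_0^k)\|\mathrm{Ber}(p_0^k+\alpha_2^k))$, and Bretagnolle--Huber combines the resulting KL bound with $\mathbb{P}_1(\overline{E_{k,l}})\le\delta$ to give $\mathbb{P}_{k,l}(E_{k,l})>\delta$ directly. Your per-sample KL estimate $\kappa\le C_2(1-\gamma)^3\varepsilon^2$ rests on exactly the parameter facts the paper also uses ($1-\gamma p_0^k\le\tfrac43(1-\gamma)$, $1-p_0^k\ge\min\{\tfrac{1-\gamma}{3\gamma},\beta/2\}\ge(1-\gamma)/20$, and $\alpha_2^k\le(1-p_0^k)/2$ from $\varepsilon<\varepsilon_0$), and the closing arithmetic $\tfrac12(4\delta)^{c_1C_2}-\delta>\delta$ for $c_1C_2<1$ is sound. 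What your route buys is the elimination of the Doob event and of all the pointwise likelihood-ratio algebra, at the cost of invoking the stopping-time version of the divergence decomposition as a black box; the paper's route is longer but self-contained. Two small remarks: the ``large constant'' in the lemma statement is evidently a typo, since both you and the paper need $c_1$ \emph{small}; and in your sketched self-contained variant, forcing the log-likelihood ratio above $\log(4\delta)$ with constant probability genuinely requires a concentration step for its fluctuations (this is precisely the role of the paper's event $C_{k,l}$), not merely the bound on its mean through $\kappa$.
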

Based on Lemma \ref{lemma:newversion}, we can prove Theorem \ref{thm:main_lower}. The main idea is that if $\sA$ is $(\cM_0,\beta,\varepsilon, \delta)$-correct, it should return a policy $\pi$ such that $\pi(x_k)=a_l$ with probability $\geq 1-\delta$ under hypothesis $\cM_{k,l}$, i.e., $\mathbb{P}_{k,l}(E_{k,l})<\delta$. By Lemma~\ref{lemma:newversion}, this requires $\mathbb{E}_1[T_{k,l}]>t^*$ for all $k\in[K], 1<l\leq L_k$. Thus, in total we need $\Omega\left(\frac{\sum_{k\in[K]}L_k}{(1-\gamma)^3\varepsilon^2}\log(1/\delta)\right)$ samples. Following the definition of $L_k$, we can get Theorem 2. More technical details can be found in Appendix \ref{app:proof}. Note that as any online algorithm can be
realized in the generative model setting, the lower bound automatically adapts to the online setting.

\cut{
By definition of $L_k$, for every $k\in[K]$, we have
$
\{a_l, l\leq L_k\} = \cA^{x_k}_{\cM_0}\big(1/(1-\gamma p_0(x_k,a_1))-1/(1-\gamma (p_0^k+\alpha_2^k-\beta/2))\big).
$

If $p_0^k=\frac{4\gamma-1}{3\gamma}$, i.e. $p_0(x_k,a_1)-\beta/2\leq\frac{4\gamma-1}{3\gamma}$, then
\begin{align}\label{eq:Lk1}
    L_k &= \Big\vert\cA^{x_k}_{\cM_0}\Big(V^*(x_k) - \frac{9}{12(1-\gamma)-64(1-\gamma)^2\varepsilon+4.5\beta\gamma}\Big) \Big\vert.
    \end{align}
If $p_0^k=p_0(x_k,a_1)-\beta/2$, i.e. $p_0(x_k,a_1)-\beta/2 > \frac{4\gamma-1}{3\gamma}$, then
 \begin{align}\label{eq:Lk2}
    L_k 
    &=\Big\vert\cA^{x_k}_{\cM_0}\Big(V^*(x_k) - \frac{V^*(x_k)^2}{V^*(x_k)+\gamma\beta (V^*(x_k))^2 - 4\varepsilon(1+\gamma\beta V^*(x_k)/2)^2}\Big)\Big\vert
    \end{align}
  Combining Equation \eqref{eq:Lk1} and \eqref{eq:Lk2}, we conclude the proof Theorem \ref{thm:main_lower}.
}

\cut{The proof of Theorem~\ref{thm:pilowerbound} is established in  three steps:
\begin{enumerate}
\item We will show that for a fixed $(x,a)\in\cX\times\cA$,
$Q^*_{\cM^i}(x,a)$ for $i\in\{1,2\}$ cannot
be learnt up to $\epsilon$ error with high probability using only a small number of samples;
    \item We will generalize the above lower bound  to a uniform bound on all state-action pairs using the fact that their transitions are independent of each other;
    \item Lastly, we will extend the lower bound to learning an $\varepsilon$-optimal policy.
\end{enumerate}
The proof will follow Lemma~\ref{lemma:xabound}, Lemma~\ref{lemma:Qbound}, Proposition~\ref{prop:lowerboundQ}, and Lemma~\ref{lemma:policyeva}. For simplicity, we use $p_1(x,a)$ and $p_2(x,a)$ to denote $p_{\cM^1}(x,a)$ and $p_{\cM^2}(x,a)$, respectively.}
\cut{
\subsection*{Step 1: a fixed $(x,a) \in \cX\times\cA$}
\fei{Is the current definition fine?}

Given a fixed $(x,a)$ and a generative model, we can generate $t$ samples with the same input state $y_1(x,a)$. The rewards of these samples are equivalent to a length-$t$ sequence of i.i.d Bernoulli random variables $W_t:=\{X_1, X_2, \dots X_t\}$. We define two probability spaces $(\Omega, \cF, P^1)$ and $(\Omega, \cF, P^2)$ for $W_t$, where in the former, $\mathbb{P}(X_1=1)=p_1(x,a)$; in the latter, $\mathbb{P}(X_1=1)=p_2(x,a)$. Note that these two spaces have the same sample space and $\sigma-$algebra. The only difference is the probability measure. In the following context, we use the notation $\mathbb{P}^1(\cdot)$ if the event is measured by $P^1$ and vice versa for $\mathbb{P}^2(\cdot)$.

Suppose $\sA$ is an $(\cM_0, \beta, \varepsilon, \delta, Q)$-correct algorithm Denote by $w$ a realization of $W_t$ and $Q^{\sA}_{t}(x,a)$ the output Q-value of $(x,a)$ from $\sA$ with input $w$. We define events:
\begin{align}
\cE_1(x,a)&:=\{|Q^*_{\cM^1}(x,a)-Q^{\sA}_{t}(x,a)|\leq\varepsilon,\quad \cE_2(x,a):=\{|Q^*_{\cM^2}(x,a)-Q^{\sA}_{t}(x,a)|\leq\varepsilon\}
\end{align}
Denoting by $k$ the number of 1s in $w$, we define another event:
$$
\cE_3(x,a):= \bigg\{p_{1}(x,a)\cdot t-k\leq\sqrt{2p_{1}(x,a)\cdot (1-p_{1}(x,a))\cdot t\cdot \log\Big(\frac{c}{2\theta(x,a)}\Big)}\bigg\},$$
where $\theta(x,a):=\exp(-c\alpha(x,a)^2t/(p_{1}(x,a)(1-p_{1}(x,a))))$. Furthermore, let $\cE(x,a):=\cE_1(x,a)\cap\cE_3(x,a)$.

Similar to the proof in \citealt[Lemma 16]{azar2013minimax}, by Chernoff-Hoeffding bound and $p_1(x,a)>1/2$, we first have that
\begin{align}\label{eq:event3}
\mathbb{P}^1(\cE_3)>1-\frac{2\theta(x,a)}{c}.
\end{align}

\cut{
\begin{proof}
Denote by $\epsilon:=\sqrt{2p_{1}(x,a)(1-p_{1}(x,a))t\log(\frac{c}{2\theta(x,a)})}$. By Chernoff-Hoeffding bound, since $p_1(x,a)>1/2$, we have that
\begin{align}
\mathbb{P}^1(\cE_2)&\geq 1-\exp[-\text{KL}(p_{1}(x,a)+\frac{\epsilon}{t}||p_{1}(x,a))\cdot t]\\
&\geq 1-\exp[-\frac{\epsilon^2}{2p_{1}(x,a)(1-p_{1}(x,a))t}]\\
&\geq 1-\frac{2\theta(x,a)}{c}.
\end{align}
\end{proof}}

Next, we show that for the fixed $(x,a)$, there is certain probability such that the Q value approximated by $\sA$ cannot be very accurate on both models.
\begin{lemma}\label{lemma:xabound}
When $\varepsilon\in\bigg(0,\frac{(1-p_{0}-\beta'/4)\gamma^2}{8(1-\gamma p_{0})^2}\bigg)$, there exists a constant $c>0$ such that for every $(\cM_0, \beta, \varepsilon, \delta, Q)$-correct RL algorithm $\sA$, it holds that
$$\big(\max_{m\in\{1,2\}} \mathbb{P}^m (\neg~\cE_m(x,a))\big)>\frac{\theta(x,a)}{c},$$
by the choice of $\alpha(x,a)=\frac{2(1-\gamma p_{0})^2\varepsilon}{\gamma^2(p_1(x,a))}$.
\end{lemma}
\begin{proof}
We prove by contradiction. Suppose there is an algorithm $\sA$ such that
\begin{align}
\mathbb{P}^m(|Q^*_{\cM^m}(x,a)-Q^{\sA}_{t}(x,a)|>\varepsilon)\leq\frac{\theta(x,a)}{c}, \quad \text{for all }m\in\{1,2\}.
\end{align}
Then we have that $\mathbb{P}^1(\cE_1)\geq 1-\theta(x,a)/c$. Since $\theta(x,a)<1$, combining with Equation \eqref{eq:event3} and choosing $c>6$, we have $\mathbb{P}(\cE)>1/2$. Denote by $w$ a realization of $W_t$ and $L^m(w):=\mathbb{P}^m(W_t=w), m\in\{1,2\}$. Recall that $k$ is the number of 1s in $w$. Then, the likelihood ratio between model $\cM^1$ and model $\cM^2$ is:
\begin{align}\label{eq:ratio}
\frac{L^2(w)}{L^1(w)}&=
\frac{p_{2}(x,a)^{k}(1-p_{2}(x,a))^{t-k}}{p_{1}(x,a)^{k}(1-p_{1}(x,a))^{t-k}}=\big(1+\frac{\alpha(x,a)}{p_{1}(x,a)}\big)^{k}\big(1-\frac{\alpha(x,a)}{1-p_{1}(x,a)}\big)^{t-k}\\
&=\big(1-\frac{\alpha(x,a)}{1-p_{1}(x,a)}\big)^{\frac{1-p_{1}(x,a)}{p_{1}(x,a)}k}\big(1-\frac{\alpha(x,a)}{1-p_{1}(x,a)}\big)^{t-\frac{k}{p_{1}(x,a)}}\big(1+\frac{\alpha(x,a)}{p_{1}(x,a)}\big)^{k}
\end{align}
By our choice of $\varepsilon$ and $\alpha(x,a)$, we have that $\alpha(x,a) = \frac{1-p_0-\beta'/4}{4(p_0+b(x,a))}\leq \frac{1-p_0-\beta'/4}{2}\leq (1-p_1(x,a))/2$. With the fact that $\log (1-u) \geq-u-u^{2}$ for $u\in[0,1/2]$ and $\exp (-u) \geq 1-u$ for $u\in[0,1]$, we further have that
\begin{align}
\bigg(1-\frac{\alpha(x,a)}{1-p_{1}(x,a)}\bigg)^{\frac{1-p_{1}(x,a)}{p_{1}(x,a)}}& \geq \exp \left(\frac{1-p_{1}(x,a)}{p_{1}(x,a)}\left(-\frac{\alpha(x,a)}{1-p_{1}(x,a)}-\big(\frac{\alpha(x,a)}{1-p_{1}(x,a)}\big)^{2}\right)\right) \\ & \geq\left(1-\frac{\alpha(x,a)}{p_{1}(x,a)}\right)\left(1-\frac{\alpha(x,a)^{2}}{p_{1}(x,a)(1-p_{1}(x,a))}\right).
\end{align}
Thus
\begin{align} \frac{L_{2}(w)}{L_{1}(w)} & \geq\left(1-\frac{\alpha(x,a)^2}{p_{1}(x,a)^2}\right)^{k}\left(1-\frac{\alpha(x,a)^{2}}{p_{1}(x,a)(1-p_{1}(x,a))}\right)^{k}\bigg(1-\frac{\alpha(x,a)}{1-p_{1}(x,a)}\bigg)^{t-\frac{k}{p_{1}(x,a)}} \\ & \geq\left(1-\frac{\alpha(x,a)^2}{p_{1}(x,a)^2}\right)^{t}\left(1-\frac{\alpha(x,a)^{2}}{p_{1}(x,a)(1-p_{1}(x,a))}\right)^{t}\bigg(1-\frac{\alpha(x,a)}{1-p_{1}(x,a)}\bigg)^{t-\frac{k}{p_{1}(x,a)}}\end{align}
since $k\leq t$. Using $\log(1-u)\geq -2u$ for $u\in[0,1/2]$, we have that
$$\left(1-\frac{\alpha(x,a)^{2}}{p_{1}(x,a)(1-p_{1}(x,a))}\right)^{t} \geq \exp \left(-2 t \frac{\alpha(x,a)^{2}}{p_{1}(x,a)(1-p_{1}(x,a))}\right) \geq\left(2 \theta(x,a) / c\right)^{2 /c_1},$$
where $c_1\leq \frac{c}{\log\theta(x,a)}\log\frac{2\theta(x,a)}{c}$. Further, we have that
$$\left(1-\frac{\alpha(x,a)^2}{p_{1}(x,a)^2}\right)^{t} \geq \exp \left(-2t \frac{\alpha(x,a)^2}{p_{1}(x,a)^2}\right) \geq\left(2 \theta(x,a) / c\right)^{\frac{2(1-p_{1}(x,a))}{c_1p_{1}(x,a)}}.$$
On $\cE_3$, it holds that $(t-\frac{k}{p_1(x,a)}) \leq \sqrt{2\frac{1-p_{1}(x,a)}{p_{1}(x,a)}t\log\frac{c}{2\theta(x,a)}}$, thus
\begin{align}
\left(1-\frac{\alpha(x,a)}{1-p_{1}(x,a)}\right)^{t-\frac{k}{p_{1}(x,a)}} & \geq\left(1-\frac{\alpha(x,a)}{1-p_{1}(x,a)}\right)^{\sqrt{2\frac{1-p_{1}(x,a)}{p_{1}(x,a)}t\log\frac{c}{2\theta(x,a)}}} \\
& \geq \exp \left(-\sqrt{8\frac{\alpha(x,a)^2}{p_{1}(x,a)(1-p_{1}(x,a))}t\log\frac{c}{2\theta(x,a)}}\right) \\
& \geq \exp \left(-\sqrt{\frac{-8\log\theta(x,a)}{c}\log\frac{c}{2\theta(x,a)}}\right)\\
& \geq\left(2 \theta(x,a) / c\right)^{\sqrt{8/c_1}}
\end{align}
By taking $c_1=16$ and $c=17$,
$$\frac{L_{2}(W)}{L_{1}(W)} \geq\left(2 \theta(x,a) / c\right)^{2 / c_{1}+2(1-p_1(x,a)) /(c_{1}p_1(x,a))+\sqrt{8/c_1}}\geq 2 \theta(x,a) / c.$$
By a change of measure, we deduce that
\begin{equation}\label{eq:p2}
\mathbb{P}^2(\cE_1)\geq \mathbb{P}^2(\cE)=\mathbb{E}_2[\mathbf{1}_{\cE}]=\mathbb{E}_1(\frac{L_2(W)}{L_1(W)}\mathbf{1}_{\cE})\geq\frac{\theta(x,a)}{c}.
\end{equation}
For this choice of $\alpha(x,a)$,
\begin{align}
Q^*_{\cM^2}-Q^*_{\cM^1} &=\frac{\gamma}{1-\gamma p_{2}(x,a)}-\frac{\gamma}{1-\gamma p_{1}(x,a)}\\
&= \frac{\alpha(x,a)\gamma^2}{(1-\gamma p_2(x,a))(1-\gamma p_1(x,a))}\\
&>\frac{\alpha(x,a)\gamma^2}{(1-\gamma p_0)^2} = \frac{2\varepsilon}{p_0+b(x,a)}\geq 2\varepsilon.
\end{align}

Thus, event $\{|Q^*_{\cM^2}-Q^{\sA}_t|\leq\varepsilon\}$ does not overlap with the event $\{|Q^*_{\cM^1}-Q^{\sA}_t|\leq\varepsilon\}$. By \eqref{eq:p2}, $\mathbb{P}^2(\{|Q^*_{\cM^1}-Q^{\sA}_t|\leq\varepsilon\})\leq 1-\frac{\theta(x,a)}{c}$. Thus, $\mathbb{P}^2(\{|Q^*_{\cM^2}-Q^{\sA}_t|\geq\varepsilon\})\geq \frac{\theta(x,a)}{c}$, which contradicts with our assumption.
\end{proof}

Based on Lemma \ref{lemma:xabound}, for $\beta>0$ and $0.4\leq\gamma<1$, selecting $p_{0}=\frac{4\gamma-1}{3\gamma}$ and $\varepsilon\in\Big(0,\frac{1-p_{0}-\beta'/4}{4(1-\gamma p_{0})^2}\gamma^2\Big)$, there exists an $\cM\in B_{\mathrm{TV}}(\cM_0,\beta)$ such that for any $(x,a)\in\cX\times\cA$ and any $(\cM_0, \beta, \varepsilon,\delta,Q)$-correct algorithm $\sA$,
\begin{align}
\mathbb{P}^{\cM}(|Q^*_{\cM}(x,a)-Q^{\sA}_t(x,a)|>\varepsilon)&>\frac{1}{c}\exp(-c\alpha^2t/(p_1(x,a)(1-p_1(x,a)))).\\
&=\frac{1}{c}\exp(\frac{-4c(1-\gamma p_0)^4\varepsilon^2t}{\gamma^4p_1(x,a)(1-p_1(x,a))})>\exp(-34000(1-\gamma)^3\varepsilon^2t).
\end{align}
This  implies that for any state-action pair $(x,a)\in\cX\times\cA$ and $\delta>0$,
\begin{equation}\label{eq:delta}
\mathbb{P}^{\cM}(|Q^*_{\cM}(x,a)-Q^{\cA}_t(x,a)|>\varepsilon)>\delta
\end{equation}
whenever the number of transition samples $t$ is less than $\xi(\varepsilon, \delta):=\frac{1}{4\varepsilon^2(1-\gamma)^3}\log(\frac{1}{\delta})$, where $4=34000$.

\subsection*{Step 2: generalize to all state-action pairs}
Next, we prove a sample complexity lower bound on $\|Q^*_{\cM}-Q^{\sA}_T\|_{\infty}\leq\varepsilon$, where $T$ is the total number of samples for all $N:=3KL$ state-action pairs.

\begin{lemma}\label{lemma:Qbound}
For any $\delta'\in(0,1/2)$, $\varepsilon\in\Big(0,\frac{1-p_{0}-\beta'/4}{4(1-\gamma p_{0})^2}\gamma^2\Big)$, and any $(\cM_0, \beta, \varepsilon, \delta, Q)$-correct algorithm $\sA$, it holds that when $T \leq\frac{N}{6}\xi(\varepsilon, \frac{12\delta'}{N})$,
$$\big(\max_{m\in\{1,2\}}\mathbb{P}^{m}(\|Q^*_{\cM^m}-Q^{\sA}_T\|_{\infty}>\varepsilon)\big)>\delta'.$$
\begin{proof}
Denote by $\delta:=\frac{12\delta'}{N}$. When $T\leq\frac{N}{6}\xi(\varepsilon, \delta)$, at most $\lceil N/6 \rceil$ state-action pairs have samples more than $\xi(\varepsilon, \delta)$. Since $|\cY_1|=N/3$, at least $\lfloor N/6 \rfloor$ states in $\cY_1$ are sampled less than $\xi(\varepsilon, \delta)$ times. We label these states as $y_1(x_i,a_i), i\in\{1,2,\dots,\lfloor N/6 \rfloor\}$. Define $\cQ^m(x,a):=\{|Q^*_{\cM^m}(x,a)-Q^{\sA}_{T(x,a)}(x,a)|>\varepsilon\}$, where $T(x,a)$ is the number of samples for $y_1(x,a)$ within all $T$ samples. Since transitions of different $y_1$s are independent, the event $\cQ^m(x,a)$ and $\cQ^m(x',a')$ are conditionally independent given $T(x,a)$ and $T(x',a')$. Following Lemma~\ref{lemma:xabound}, we have that
\begin{align}
    &\mathbb{P}^m(\{\cQ^m((x_i,a_i))^c)\}_{1\leq i\leq\lfloor N/6 \rfloor} \cap \{T(x_i,a_i)\leq\xi(\varepsilon,\delta)\}_{1\leq i\leq \lfloor N/6 \rfloor})\\
    =& \sum_{t_1=0}^{\xi(\varepsilon, \delta)}\cdots\sum_{t_{\lfloor N/6 \rfloor}=0}^{\xi(\varepsilon,\delta)}\mathbb{P}^m(\{T(x_i,a_i)=t_i\}_{1\leq i\leq \lfloor N/6 \rfloor})\mathbb{P}^m(\{\cQ^m(x_i,a_i)^c\}_{1\leq i\leq \lfloor N/6 \rfloor}\vert \{T(x_i,a_i)=t_i\}_{1\leq i\leq \lfloor N/6 \rfloor})\\
    =&\sum_{t_1=0}^{\xi(\varepsilon, \delta)}\cdots\sum_{t_{\lfloor N/6 \rfloor}=0}^{\xi(\varepsilon,\delta)}\mathbb{P}^m(\{T(x_i,a_i)=t_i\}_{1\leq i\leq \lfloor N/6 \rfloor})\prod_{1\leq i\leq N/6}\mathbb{P}^m(\{\cQ^m(x_i,a_i)^c\}\vert \{T(x_i,a_i)=t_i\})\\
    \leq &\sum_{t_1=0}^{\xi(\varepsilon, \delta)}\cdots\sum_{t_{\lfloor N/6 \rfloor}=0}^{\xi(\varepsilon,\delta)}\mathbb{P}^m(\{T(x_i,a_i)=t_i\}_{1\leq i\leq \lfloor N/6 \rfloor})(1-\delta)^{\lfloor N/6 \rfloor},
\end{align}
where the last inequality follows Eq. \eqref{eq:delta}. Thus,
\begin{equation}
    \mathbb{P}^m(\{\cQ^m((x_i,a_i))^c)\}_{1\leq i\leq N/6} \vert \{T(x_i,a_i)\leq\xi(\epsilon,\delta)\}_{1\leq i\leq N/6})\leq (1-\delta)^{N/6}.
\end{equation}
When $T\leq(N/6)\xi(\varepsilon, \delta)$,
\begin{align}
    \mathbb{P}^m(\|Q^*_{\cM^m}-Q^{\cA}_T\|>\varepsilon)&\geq \mathbb{P}^m\big(\bigcup_{(x,a)\in\cX\times\cA} \cQ^m(x,a)\big)\\
    &\geq 1- \mathbb{P}^m(\{\cQ^m((x_i,a_i))^c)\}_{1\leq i\leq N/6} \vert \{T_{x_i,a_i}\leq\xi(\epsilon,\delta)\}_{1\leq i\leq N/6})\\
    &\geq 1-(1-\delta)^{N/6}\geq \frac{\delta N}{12}=
    \delta',
\end{align}
whenever $\delta N/6\leq 1$.
\end{proof}
\end{lemma}

Lemma~\ref{lemma:Qbound} implies that in order to have $\mathbb{P}^{\cM}(\|Q^*_{\cM}-Q^{\sA}_T\|_{\infty}>\varepsilon)<\delta'$ for every $\cM\in B_{\mathrm{TV}}(\cM_0, \beta)$, one need samples more than $C\frac{N}{\varepsilon^2(1-\gamma)^3}\log(N/{\delta'})$. Thus, a lower bound of learning a $Q$-function with a TV-distance-close prior model is achieved. We state the result formally in Proposition. \ref{prop:lowerboundQ}.
\begin{proposition}\label{prop:lowerboundQ}
For any $\beta>0$, there exists an MDP $\cM_0$, constants $\varepsilon_0$ and $\delta_0$, such that for all $\varepsilon\in(0,\varepsilon_0), \delta\in(0,\delta_0),$ and every $(\cM_0, \beta, \varepsilon, \delta, Q)$-correct RL algorithm $\sA$, the total number of samples needed to learn an \textbf{$\varepsilon$-optimal $Q$-function} for every $\cM\in B_{\mathrm{TV}}(\cM_0, \beta)$ with probability at least $1-\delta$ is
$$\Omega\bigg(\frac{N}{\varepsilon^2(1-\gamma)^3}\log(\frac{N}{\delta})\bigg),$$
where $N$ is the total number of state-action pairs.
\end{proposition}

\subsection*{Step 3: extend to learning policy}
In the last section,  we have  established the sample complexity lower bound for learning a sub-optimal $Q$-function with a TV-distance-close prior model. Next, we extend the result to learning an $\varepsilon$-optimal policy. To this end, we still consider the models $\cM_0, \cM^1$ and $\cM^2$ as before. We first show that given a policy $\pi$ for $\cM\in\mathbb{M}$, it takes much fewer samples to evaluate its $Q$-function compared with the result in Proposition~\ref{prop:lowerboundQ}. Therefore, given $\cM_0$, if an algorithm can produce $\varepsilon$-optimal policies for both $\cM^1$ and $\cM^2$ with significantly fewer samples, then it can produce $\varepsilon$-optimal $Q$-functions with fewer samples as well, which contradicts with Lemma~\ref{lemma:Qbound}. Thus, the lower bound is extended to learning policies.

\begin{lemma}\label{lemma:policyeva}
Given a policy $\pi$ for an MDP $\cM\in\mathbb{M}$ and a generative model of $\cM$, for $\varepsilon\in(0,1]$ and $\delta\in(0,1)$, there exists an algorithm that uses
$$\cO\bigg( \frac{N}{\varepsilon^2(1-\gamma)^3}\log(\frac{1}{\varepsilon(1-\gamma)})\log(\frac{N}{\delta})+ \frac{N}{\varepsilon^2(1-\gamma)^2}\log(\frac{N}{\delta})\bigg)$$
samples to obtain a function $Q:\cS\times\cA\rightarrow \RR$,
such that, with probability at least $1-\delta$,
$\|Q-Q^{\pi}\|_{\infty}\leq \varepsilon$.
\begin{proof}
Since all states in $\cY_2$ have values 0 and $V^{\pi}(y_1(x,a))=Q^{\pi}(x,a)/\gamma$, we only need to evaluate $Q^{\pi}(x,a)$ for all $(x,a)\in\cX\times\cA$. 
Given $\varepsilon>0$, when $T= \lceil \frac{1}{1-\gamma}\log\frac{2}{(1-\gamma)\varepsilon}\rceil$, the tail sum $\mathbb{E}^{\pi}[\sum_{t=T}^{\infty}\gamma^t R(s_t,a_t)|s_0=x, a_0=a]\leq \varepsilon/2$. Thus, we only need to evaluate $Q^{\pi}_T(x):=\mathbb{E}^{\pi}[\sum_{t=0}^{T}\gamma^t R(s_t,a_t)|s_0=x, a_0=a]$ to $\varepsilon/2-$accuracy.  We define a random variable $G^{\pi}_T(x,a):=R(x,a)+\sum_{t=1}^{T}\gamma^t R(s_t,a_t)$ following $\pi$. Then $G^{\pi}_T(x,a)\in[0, 1/(1-\gamma)]$ for all $(x,a)\in\cX\times\cA$ and $\mathbb{E}[G^{\pi}_T(x,a)]=Q^{\pi}_T(x,a)$. By Hoeffding Inequality, we only need to take $\frac{2}{(1-\gamma)^2\varepsilon^2}\log(\frac{|\cS|}{\delta})$ samples of $G^{\pi}_T(x)$ such that the empirical average $\overline{V}^{\pi}_T(x)$ satisfies
$$\mathbb{P}(|\overline{V}^{\pi}_T(x)-V^{\pi}_T(x)|\leq \varepsilon/2)\geq 1-\frac{\delta}{K}.$$ In total, there are $\lceil \frac{2}{(1-\gamma)^3\varepsilon^2}\log(\frac{K}{\delta})\log\frac{2}{(1-\gamma)\varepsilon}\rceil$ transition samples. Repeating the step to all $x\in\cX$ and taking the union bound, we can get a function $\overline{V}^{\pi}_T:\cX\rightarrow \RR$ such that, with probability at least $1-\delta$,
$$\|\overline{V}^{\pi}_T-V^{\pi}\|_{\infty}\leq \varepsilon.$$

Next, we use $\overline{V}_T^{\pi}$ to approximate $Q^{\pi}(x,a)$. Recall that
$$Q^{\pi}(x,a) = R(x,a) + \gamma P(\cdot|x,a)^TV^{\pi}.$$
Define a random variable $v(s,a)$ where $v(s,a)=V^{\pi}(s')$ with probability $p(s'|s,a)$. Then $P(\cdot|s,a)^TV^{\pi}$ is equal to the expectation of $v(s,a)$. By Hoeffding Inequality, with $n:=\frac{1}{2(1-\gamma)^2\varepsilon^2}\log(\frac{|\cS||\cA|}{\delta})$ transition samples from $(s,a)$, the empirical average $\overline{V}(s,a)$ satisfies
$$\mathbb{P}(|\overline{V}(s,a)-P(\cdot|s,a)^TV^{\pi}|\leq \varepsilon)\geq 1-\frac{\delta}{|\cS||\cA|}.$$
Repeating the same step for all $(s,a)\in\cS\times\cA$ and taking the union bound, we can get an $\varepsilon$-correct approximation of $Q^{\pi}$ from $V^{\pi}$ with probability at least $1-\delta$ with the sample complexity $\cO\big( \frac{|\cS||\cA|}{(1-\gamma)^2\varepsilon^2}\log(\frac{|\cS||\cA|}{\delta})\big)$. Adding these two complexities together, we can get the desired result.
\end{proof}
\end{lemma}

Combining Proposition~ \ref{prop:lowerboundQ} and Lemma \ref{lemma:policyeva}, we obtain the final result as stated in the following Theorem. 
\begin{theorem}\label{thm:lowerboundpi}
For any $\beta>0$, there exists some constants $\varepsilon_0, \delta_0, c_1, 4$, a class of MDPs $\mathbb{M}$, and an MDP $\cM_0\in\mathbb{M}$ such that for all $\varepsilon\in(0,\varepsilon_0), \delta\in(0,\delta_0),$ and every $(\beta, \varepsilon, \delta, \pi)$-correct RL algorithm $\sA$ which has full knowledge of $\cM_0$, the total number of samples needed to learn an \textbf{$\varepsilon$-optimal policy} of any MDP $\cM\in \mathbb{M}\cap B_{\mathrm{TV}}(\cM_0, \beta)$ with probability at least $1-\delta$ is
$$\Omega\big(\frac{|\cS||\cA|}{\varepsilon^2(1-\gamma)^3}\log\frac{|\cS||\cA|}{\delta}\big).$$

\begin{proof}
Given $\varepsilon$, one can take $\cM_0$ with $|\cA| \approx \frac{1}{(1-\gamma)\varepsilon}$. Then the complexity of policy evaluation is significantly smaller than learning $Q$-values. Thus, if an algorithm can find an $\varepsilon$-optimal policy of $\cM$ with significantly fewer samples than the lower bound, it can then output an $2\varepsilon$-optimal $Q$ function with samples significantly fewer than the lower bound, which contradicts with Prop. \ref{prop:lowerboundQ}.
\end{proof}
\end{theorem}
}

\section{Further Discussion}\label{sec:discussion}
In this section, we explain our results in a graphical way. We depict parameter relations in \ref{sec:para}, compare $\overline{C}$ and $\underline{C}_s$ in \ref{sec:meet}, illustrate when an approximate model does not help in \ref{sec:num}.
\subsection{Parameter Relations}\label{sec:para}
In our hard case in Section \ref{sec:lowerbdd}, given $\beta\in(0,2)$, we restrict $\gamma\in(\max\{0.4,1-10\beta\},1)$ and then select $p_0^k$ based on whether $p_0(x_k,a_1)\leq \beta/2+\frac{4\gamma-1}{3\gamma}$. We depict the relation between these parameters in the left graph in Figure \ref{fig:para}. One can see that when $\beta<1$, with a small $\gamma$ (below the blue line), we can find a $p_0(x_k,a_1)$ such that $p_0^k=p_0(x_k,a_1)-\beta/2$; otherwise, $p_0^k$ will always be $\frac{4\gamma-1}{3\gamma}$. For the lower bound, we require $\varepsilon<\varepsilon_0$. In the right graph of Figure \ref{fig:para}, we plot $\varepsilon_0$ in terms of $\beta$ and $\gamma$ with $p^k_0=\frac{4\gamma-1}{3\gamma}$. We can see that for a fixed $\gamma$, $\varepsilon_0\approx c\beta$ and the larger $\gamma$ is, the larger $c$ is.
\begin{figure}[htbp!]
        \centering
        \vspace{-10pt}
        \includegraphics[width=.75\textwidth]{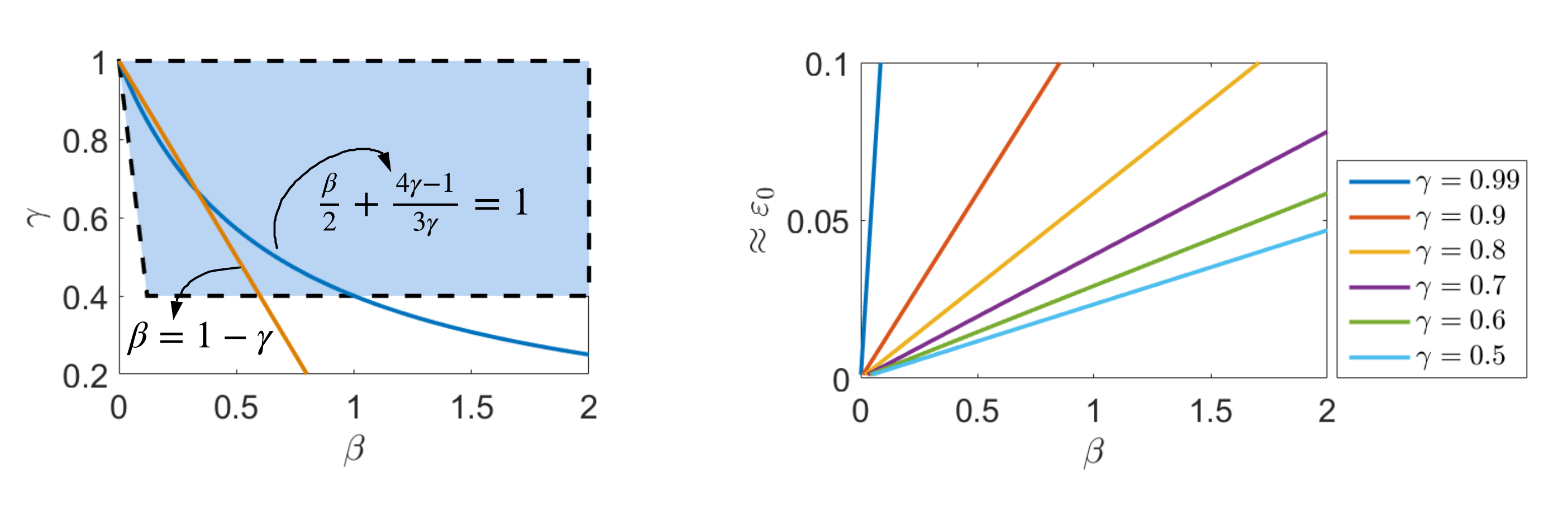}
        \vspace{-15pt}
        \caption{Relations between parameters. The blue area represents $\gamma\in(\max\{0.4, 1-10\beta\}, 1)$.}
        \label{fig:para}
        \vspace{-10pt}
    \end{figure}

\subsection{$\overline{C}$ and $\underline{C}_s$}\label{sec:meet}
We plot the values of $\overline{C}$ and $\underline{C}_s$ in our hard case in Figure \ref{fig:C}. We can see that $\underline{C}_s$ matches $\overline{C}$ up to a constant factor when $\beta$ is away from 0. When $\beta$ is small, the upper bound provides useful information on how much an approximate model can help. As $\beta$ increases, the upper bound becomes trivial and the lower bound is informative on the limitation of an approximate model. This change is not surprising: the larger the distance between two models is, the less helpful an approximate model would be. The value of $\beta$ when the upper bound reduces trivial also depends on $\gamma$: the greater $\gamma$ is, the smaller this value is since the more sensitive the value function becomes in terms of the variation in transition. To have the upper bound meets the lower bound, we need $|\cA^{s}_{\cM_0}(\overline{C})|= |\cA^{s}_{\cM_0}(\underline{C}_s)|$, which is achievable if the value gap in $\cM_0$ is big enough. We illustrate this situation in Figure \ref{fig:gap}.
\cut{\begin{align}
    q^*_s\big[|\cA^s_{\cM_0}(\overline{C})|\big]-q^*_s\big[|\cA^s_{\cM_0}(\overline{C})|+1\big] \geq \overline{C}-\underline{C}_s,\text{~for every~} s\in\cS',
\end{align}
i.e., }

\begin{figure}[htbp!]
\vspace{-10pt}
    \centering
    \includegraphics[width=.9\textwidth]{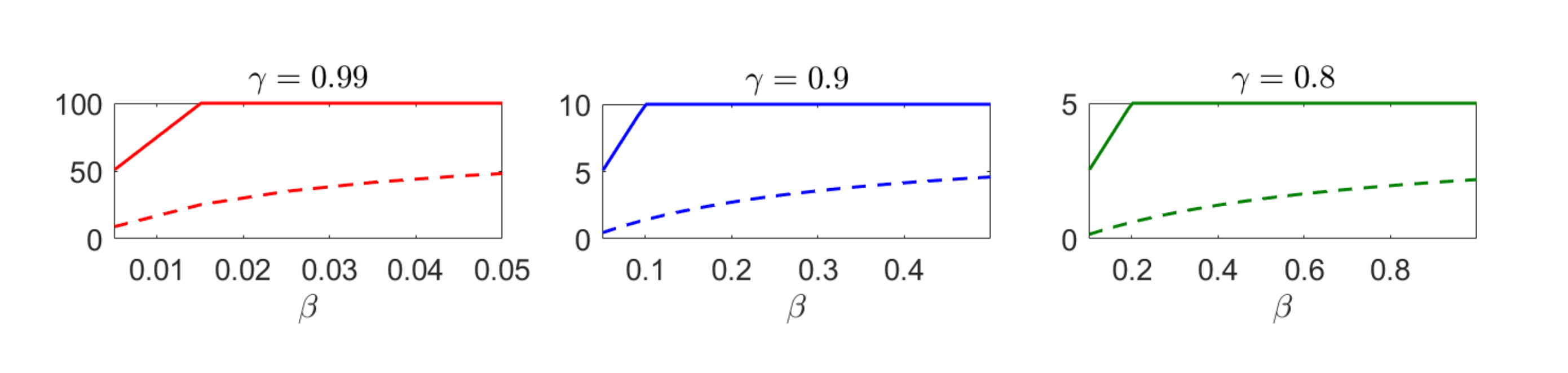}
    \vspace{-15pt}
    \caption{$\overline{C}$ (solid lines) and $\underline{C}_s$ (dashed lines) in the hard case with $p_0^k=(4\gamma-1)/(3\gamma)$. }
    \label{fig:C}
\end{figure}
\begin{figure}[htbp!]
    \centering
    \vspace{-13pt}
    \includegraphics[width=\textwidth]{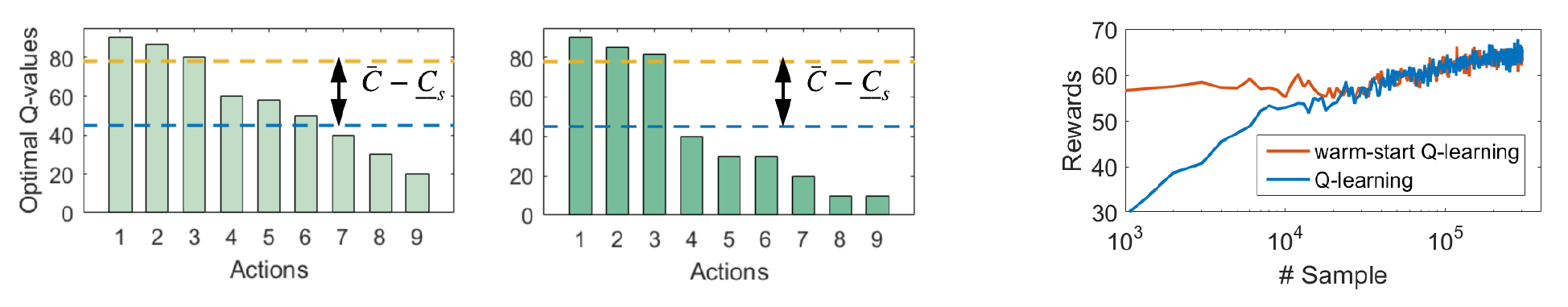}
    \vspace{-20pt}
    \caption{The left two figures depict two types of $\cM_0$. Actions with values above the yellow lines form $\cA^s_{\cM_0}(\underline{C}_s)$; actions with values above the blue lines form $\cA^s_{\cM_0}(\overline{C})$. In the left model, $\cA^s_{\cM_0}(\underline{C}_s)\subset\cA^s_{\cM_0}(\overline{C})$; In the right model, due to a large value gap, $\cA^s_{\cM_0}(\underline{C}_s)=\cA^s_{\cM_0}(\overline{C})$. The right figure is an empirical verification of the worst case.}
    \label{fig:gap}
\end{figure}

\subsection{Empirical Verification of the Worst Case}\label{sec:num}
We further do a numerical demonstration to show when an approximate model does not help. We test on a sailing problem \citep{sailing}. In Figure \ref{fig:gap}, we generate two MDPs $\cM_0$ and $\cM$ with $\cM\in B_{\mathrm{TV}}(\cM_0,0.3)$. We compare the performances of two algorithms: 1. direct Q-learning \citep{watkins1992q} on $\cM$ (blue line); 2. use the full knowledge of $\cM_0$ to compute $Q^*_{\cM_0}$, then use $Q^*_{\cM_0}$ as an initialization for Q-learning on $\cM$ (red line). One can observe a jump-start improvement. However, to reach higher rewards, it takes the same number of samples with or without the prior knowledge. This verifies our worst case result in Corollary \ref{coro:worst}: when the accuracy requirement is high, i.e., $\varepsilon$ is small, the help of an approximate model can be very limited.
\section{Conclusion and Future Work}\label{sec:conclusion}
In this paper, we provide sample complexity results of RL when a prior model that is close under TV-distance is provided. We show that the approximate model can help eliminate sub-optimal actions and reduce the sample complexity of learning a near-optimal policy for the true unknown model. We also show that the help can be rather limited if the value gap in the prior model is small and the precision requirement of the policy is high. For future work, we intend to exploit more structural information in transfer RL and consider other measures of closeness for MDPs.

\cut{
We formalize our problem of knowledge transferring as below.
\begin{problem}\label{problem}
Suppose the target unknown model is $\cM$ and an agent is provided with the full information of an approximate model $\cM_0$ satisfying $\cM\in B_{\mathrm{TV}}(\cM_0, \beta)$, where $\beta >0$ is a known constant.
How many samples does it take to learn an $\varepsilon$-optimal policy for $\cM$ with probability at least $1-\delta$?
\end{problem}

Without additional structure assumption, we answer Problem \ref{problem} through exploiting the value gap property in $\cM_0$ and provide both upper and lower sample complexity bounds:
\begin{theorem}[Main Result]\label{thm:main}
Let $\cM$ be the target model and $\cM_0$ be a source model satisfying $\cM\in\cB_{\text{TV}}(\cM_0,\beta)$. The sample complexity of learning an $\varepsilon$-optimal policy for $\cM$ with probability at least $1-\delta$ is
\begin{align}
    \cO\bigg(\frac{\sum_{s\in\cS'}|\cA^s_{\cM_0}(\bar{C})|}{\varepsilon^2(1-\gamma)^3}\log\Big(\frac{1}{\delta}\Big)\bigg)
\end{align}
for $\varepsilon>0$ and $\delta\in(0,1)$, where $\bar{C}:=\min \{2/(1-\gamma), 2\beta/(1-\gamma)^2\}$; and
\begin{align}
    \Omega\bigg(\frac{\sum_{s\in\cS'}|\cA^s_{\cM_0}(\underline{C}_s)|}{\varepsilon^2(1-\gamma)^3}\log\Big(\frac{1}{\delta}\Big)\bigg)
\end{align}
for $\varepsilon\in\big(0, \frac{\beta\gamma\min_{s\in\cS'}V^*(s)^2}{16}\min (\beta/2, \frac{1-\gamma}{3\gamma})\big)$ and $\delta\in(0, 1/40)$, where
\begin{align}
    \underline{C}_s=\begin{cases}V^*(s) - \frac{9}{12(1-\gamma)-64(1-\gamma)^2\varepsilon+4.5\beta\gamma}, \quad &\text{if~} \beta/2+\frac{4\gamma-1}{3\gamma} \geq 1 \\
V^*(s) - \frac{V^*(s)^2}{V^*(s) +\beta\gamma V^*(s)^2+4\varepsilon(1+\gamma\beta V^*(s) /2)^2}, \quad &\text{otherwise}.
\end{cases}
\end{align}
\end{theorem}

The upper bound and the lower bound share a unifying structure except for different constants $\bar{C}$ and $\underline{C}_s$. Recall the definition of $\cA^s_{\cM_0}(c)$ in Equation \eqref{eq:As}. In Section \ref{sec:analysis}, we will show that $\bar{C}\geq \underline{C}_s$ for every $s\in\cS'$. Thus, $\cA^s_{\cM_0}(\bar{C})\supseteq \cA^s_{\cM_0}(\underline{C}_s)$ and the bounds are well-defined.

Our main results are indeed not hard to grasp. To learn a near-optimal policy, one needs to sufficiently explore all actions with high rewards. Given an approximate model, we can abstract some prior knowledge about which actions are potentially good and then focus on exploring these actions. For each state $s\in\cS'$\footnote{We restrict to the set $\cS'$ since no action exploration is needed for single-action states.}, the set $\cA^s_{\cM_0}(\bar{C})$ characterizes actions we need to \emph{at most} explore and $\cA^s_
{\cM_0}(\underline{C}_s)$ consists of actions we must \emph{at least} explore. That said, we can definitely find a sufficiently good action in $\cA^s_{\cM_0}(\bar{C})$ to constitute an $\varepsilon$-optimal policy for any $\cM\in B_{\text{TV}}(\cM_{0},\beta)$. Conversely, if we ignore any action in $\cA^s_{\cM_0}(\underline{C}_s)$, we can fail the task for some $\cM\in B_{\text{TV}}(\cM_0, \beta)$ no matter which algorithm or learning setting (generative model or online learning) is applied. On the bright side, if $|\cA^s_{\cM_0}(\bar{C})|\ll |\cA^s|$, the sample complexity is reduced compared with learning-from-scratch\footnote{The sample complexity of learning-from-scratch is $=\cO\bigg(\frac{\sum_{s\in\cS'}|\cA^s|}{\varepsilon^2(1-\gamma)^3}\log\Big(\frac{1}{\delta}\Big)\bigg)$.} and knowledge transferring helps. The best case, as written in Corollary \ref{coro:best}, is when an optimal policy of $\cM_0$ is also optimal for $\cM$, then no learning is needed. On the dark side, if $|\cA^s_{\cM_0}(\underline{C}_s)|$ is close to $|\cA^s|$, the prior knowledge does not offer much help. The worst case, as described in Corollary \ref{coro:worst}, is when $|\cA^s_{\cM_0}(\underline{C}_s)|=|\cA^s|$, then with or without the prior knowledge, it has the same order of complexity. In a word, since $|\cA^s_{\cM_0}(\bar{C})|$ and $|\cA^s_{\cM_0}(\underline{C}_s)|$ fully depend on the value gap in the prior model, we use this intrinsic property of the prior model to provide a systematic answer to Problem \ref{problem}.

\begin{corollary}[The Best Scenario]\label{coro:best}
Suppose $\beta<(1-\gamma)/2$ and for every state $s\in\cS'$,
$$|\argmax_a Q^*_{\cM_0}(s,a)|=1 \quad \text{and}\quad \Delta_{\cM_0}^s[2]\geq 2\beta/(1-\gamma)^2.$$ Then the optimal policy for $\cM_0$ is optimal for $\cM\in \cB_{\text{TV}}(\cM_0, \beta)$.
\end{corollary}

\begin{corollary}[The Worst Scenario]\label{coro:worst}
If $|\argmax_a Q^*_{\cM_0}(s,a)|=|\cA^s|$ for every state $s\in\cS'$, then the sample complexity is
\begin{align}
    \Omega\bigg(\frac{\sum_{s\in\cS}|\cA^s|}{\varepsilon^2(1-\gamma)^3}\log\Big(\frac{1}{\delta}\Big)\bigg)
\end{align}
for $\varepsilon\in\big(0, \frac{\beta\gamma\min_{s\in\cS'}V^*(s)^2}{16}\min (\beta/2, \frac{1-\gamma}{3\gamma}\big)$ and $\delta\in(0, 1/40)$.
\end{corollary}

\section{Analysis}\label{sec:analysis}
In this section, we provide the full proof of the main result. We first prove the upper bound in Theorem \ref{thm:main} and Corollary \ref{coro:best} in Sec. \ref{sec:upperbdd}. We then prove the lower bound in Theorem \ref{thm:main} and Corollary \ref{coro:worst} in Sec. \ref{sec:lowerbdd}.

\subsection{Proofs of the Upper Bound}\label{sec:upperbdd}
We begin by Lemma \ref{lemma:Qbound}, which shows that if two MDPs are close under TV-distance, then their optimal $Q$-functions are close component-wisely.
\begin{lemma}\label{lemma:Qbound}
Let $\cM_0:=(\cS, \cA, \vP_0, R_0, \gamma)$ and $\cM:=(\cS, \cA, \vP, R, \gamma)$ be two MDPs with optimal $Q$-functions $Q^*_0$ and $Q^*$, respectively. If $d_{\text{TV}}(\cM_0, \cM)\leq \beta$, then $\|Q^*_0-Q^*\|_{\infty}\leq \min\{1/(1-\gamma), \beta/(1-\gamma)^2\}$.
\end{lemma}

\begin{proof}
Since $R(s,a,s')\in[0,1]$, the bound $1/(1-\gamma)$ is a direct result. For any deterministic policy $\pi$, we denote by $Q^{\pi}_0$ and $Q^{\pi}$ the action-value functions and $V^{\pi}_0$ and $V^{\pi}$ the state-value functions of running $\pi$ in $\cM_0$ and $\cM$, respectively. Then, by definition, for every $(s,a)\in\cS\times\cA$,
\begin{align}
    |Q_0^{\pi}(s,a)-Q^{\pi}(s,a)| &= |R_0(s,a) + \gamma P_0(\cdot|s,a)^\top V_0^{\pi} - R(s,a)-\gamma P(\cdot|s,a)^\top V^{\pi}|\\
    &\leq |R_0(s,a)-R(s,a)| + \gamma\cdot |P_0(\cdot|s,a)^{\top} V_0^{\pi}-P(\cdot|s,a)^\top V_0^{\pi}| \\
    &\quad + \gamma\cdot |P(\cdot|s,a)^{\top} V_0^{\pi}-P(\cdot|s,a)^\top V^{\pi}|\\
    &\leq \beta + \gamma\cdot \|P_0(\cdot|s,a)-P(\cdot|s,a)\|_1\cdot \|V_0^{\pi}\|_{\infty} + \gamma \cdot \|V_0^{\pi}-V^{\pi}\|_{\infty}\\
    &\leq \beta + \gamma\beta/(1-\gamma)+\gamma\|Q_0^{\pi}-Q^{\pi}\|_{\infty}.
\end{align}
Thus, $\|Q^{\pi}_0-Q^{\pi}\|_\infty\leq \beta/(1-\gamma)+\gamma\cdot \|Q^{\pi}_0-Q^{\pi}\|_\infty$ and $\|Q^{\pi}_0-Q^{\pi}\|_\infty\leq \beta/(1-\gamma)^2$. Now, we denote by $\pi^*_0$ a deterministic optimal policy of $\cM_0$ and $\pi^*$ a deterministic optimal policy of $\cM$. Then
\begin{align}
    Q^{\pi^*}_0-Q^{\pi^*}\leq Q^*_0-Q^*\leq Q^{\pi^*_0}_0-Q^{\pi^*_0}.
\end{align}
Thus, $\|Q^*_0-Q^*\|_{\infty}\leq \max\{~\|Q^{\pi^*}_0-Q^{\pi^*}\|_{\infty},~ \|Q^{\pi^*_0}_0-Q^{\pi^*_0}\|_{\infty}~\}\leq \min\{1/(1-\gamma), \beta/(1-\gamma)^2\}$.
\end{proof}

Lemma \ref{lemma:Qbound} is not a surprising but useful result. Given the full information of the prior model $\cM_0$, we can approach $Q^*_0$ to any accuracy with any planning algorithm. By Lemma \ref{lemma:Qbound}, we then obtain an interval estimation for each entry of $Q^*$. Based on the estimation, we get to identify \emph{good} actions for each state, i.e., actions that have a chance to be $\argmax_a Q^*_{\cM}(s,a)$. In later learning stage, we only need to explore these good actions. If the number of good actions is significantly smaller than $|\cA|$ for every state, then dimension reduction is achieved and the sample complexity is reduced.

\begin{figure}[t]
    \centering
    \includegraphics[width=.9\textwidth]{valuegap.png}
    \caption{An illustration of how optimal $Q$-values of the prior model $\cM_0$ help contract the action space in the target model $\cM$. Suppose $\gamma=0.99$ and $\beta/(1-\gamma)^2=10$. Given $Q^*_{\cM_0}(s,\cdot)$ (the left graph), we have interval estimation of $Q^*_{\cM}(s,\cdot)$ (the error bars in the right graph). Actions that are potentially optimal in $\cM$ at $s$ should satisfy Equation \eqref{eq:goodaction}, i.e., the top end of the error bars lie above the yellow dashed line ($=90$). Therefore, at state s, only actions $1, 2$ and $3$ are considered as valid candidates and we will only explore these actions in the later learning.}
    \label{fig:gap}
\end{figure}

Next, we give a rigorous definition of good actions. We ignore the error in computing $Q^*_0$ since it can be arbitrarily small. Let $\bar{C} = \min\{2/(1-\gamma), 2\beta/(1-\gamma)^2\}$. For each state $s$ in $\cM$, a good action should satisfy
\begin{align}\label{eq:goodaction}
    &Q^*_0(s,a)+\bar{C}/2 > \max_a Q^*_0(s,a)-\bar{C}/2, \text{~or equivalently,~ }a\in\cA^s_{\cM_0}(\bar{C}).
\end{align}
The definition is intuitive. Suppose $a^*_s\in\argmax_a Q^*_0(s,a)$. By Lemma \ref{lemma:Qbound}, $Q^*_{\cM}(s,a_s^*)\geq \max_a Q^*_0(s,a)-\bar{C}/2$. Thus, a valid candidate of $\argmax_a Q^*_{\cM}(s,a)$ should have a value at least $\max_a Q^*_0(s,a)-\bar{C}/2$. Since the best an action could achieve in $\cM$ is $Q^*_0(s,a)+\bar{C}/2$, we require $Q^*_0(s,a)+\bar{C}/2 > Q^*_0(s,a^*_s)-\bar{C}/2$ for an action to be considered potentially better than $a^*_s$. Otherwise, we can just use $a^*_s$. See Figure \ref{fig:gap} for a simple illustration.

\begin{proof}[Proof of the Upper Bound in Theorem \ref{thm:main}]
Since we only need to explore actions in $\cA^s_{\cM_0}(\bar{C})$ for each state in $\cM$, by the upper bound result in \cite{azar2013minimax}, the sample complexity of learning an $\varepsilon$-optimal policy with probability at least $1-\delta$ is $$\cO\bigg(\frac{\sum_{s\in\cS'}|\cA^s_{\cM_0}(\bar{C})|}{\varepsilon^2(1-\gamma)^3}\log\Big(\frac{1}{\delta}\Big)\bigg)$$.
\end{proof}
If $\sum_{s\in\cS}|\cA^s_{\cM_0}(\bar{C})|$ is significantly smaller than $N$, then the sample complexity with prior knowledge is significantly smaller than that of learning-from-scratch. In this case, transfer knowledge succeeds. In Corollary \ref{coro:best}, the conditions imply the best scenario, i.e., $|\cA^s_{\cM_0}(\bar{C})|=1$ for every state. Then in $\cM_0$, there is only a unique optimal policy $\pi^*_0(s):=\argmax_a Q^*_0(s,a)$ and $\pi_0^*$ is also optimal for $\cM$. No learning is needed.

\subsection{Proofs of the Lower Bound}\label{sec:lowerbdd}
In this section, we prove the lower bound in Theorem \ref{thm:main} as well as Corollary \ref{coro:worst}.
Before starting the proof, we give the following definition about the correctness of RL algorithms.

\begin{definition}(($\cM_0, \beta, \varepsilon, \delta$)-correctness)\label{def:epsilonalgorithm}
Given $\beta>0$ and a prior model $\cM_0$, we say that an RL algorithm $\sA$ is $(\cM_0, \beta, \varepsilon, \delta)$-correct if for every $\cM\in B_{\mathrm{TV}}(\cM_0, \beta)$, $\sA$ can output an $\varepsilon$-optimal policy with probability at least $1-\delta$.
\end{definition}

Next, we construct a class of MDPs. We are going to select one model $\cM_0$ from the class as prior knowledge. Then we show that if an RL algorithm $
\sA$ learns with samples significantly fewer than the lower bound, there would always exist an MDP $\cM\in B_{\mathrm{TV}}(\cM_0, \beta)$ such that $\sA$ cannot be $(\cM_0, \beta, \varepsilon, \delta)$-correct. Hence, the lower bound complexity is established.
\cut{we construct a class of hard MDPs which is a generalization of the simple example in Sec. \ref{sec:toyexample1}. We select one model $\cM_0$ from the class and use its full information as the prior knowledge. Then, we consider any algorithm that is $(\cM_0, \beta, \varepsilon, \delta)$-correct.}

\cut{learning $\varepsilon$-optimal policies for

another two models $\cM^1$ and $\cM^2$ that are both $\beta-$close to $\cM_0$ under TV-distance. For the later two models, we assume access to their generative models respectively. Note that these two learning tasks (for $\cM^1$ or $\cM^2$) should have the same sample complexity since they are provided with the same prior knowledge and sample accesses. Then, we show that when $\varepsilon<\varepsilon_0(\beta)$, if the sample number is less than $C\frac{N}{(1-\gamma)^3\varepsilon^2}\log(\frac{N}{\delta})$, at least one of the learning tasks cannot be accomplished. Therefore, a lower bound is achieved.}


\cut{If there is an $(\cM_0, \beta,\varepsilon, \delta, Q)$-correct algorithm $\sA$, then it can return $\varepsilon$-optimal $Q$-values for both $\cM^1$ and $\cM^2$ with probability at least $1-\delta$. We then restrict $\varepsilon$ to be very small and select $\cM^1$ and $\cM^2$ smartly such that these two models are super close to each other but still the $Q$-values differ by more than 2$\varepsilon$. Then the difficulty of the problem is equivalent to distinguishing two very close values where the $\beta$ information is useless. By then, we are able to show that with number of samples significantly less than $C\frac{|\cS||\cA|}{(1-\gamma)^3\varepsilon^2}\log(\frac{|\cS|\cA|}{\delta})$, at least for one of $\cM^1$ and $\cM^2$, $\sA$ cannot produce an $\varepsilon$-optimal $Q$-values. Therefore, the lower bound result is obtained. }

\paragraph{Construction of the Hard Case}
We define a family of MDPs $\mathbb{M}$. These MDPs have the structure as depicted in Figure \ref{fig:mdp}. The state space $\cS$ consists of three disjoint subsets $\cX$ (gray nodes), $\cY_1$ (green nodes), and $\cY_2$ (blue nodes). The set $\cX$ includes $K$ states $\{x_1, x_2, \dots, x_K\}$ and each of them has $L$ available actions $\{a_1,a_2,\dots,a_L\}=:\cA$. States in $\cY_1$ and $\cY_2$ are all of single-action. In total, there are $N:=3KL$ state-action pairs. For state $x\in\cX$, by taking action $a\in\cA$, it transitions to a state $y_1(x,a)\in\cY_1$ with probability 1. Note that such a mapping is one-to-one from $\cX\times\cA$ to $\cY_1$. For state $y_1(x,a)\in\cY_1$, it transitions to itself with probability $p_{\cM}(x,a)\in[0,1]$ and to a corresponding state $y_2(y_1) \in \cY_2$ with probability $1-p_{\cM}(x,a)$. $p_{\cM}(x,a)$ can be different for different models. All states in $\cY_2$ are absorbing. The reward function $R(s,a,s')=1$ if $s'\in\cY_1$; otherwise 0.

\begin{figure}[htbp!]
  \centering
  \includegraphics[width=.98\textwidth]{bigexample.png}\\
  \caption{The class of MDPs considered to prove the lower bound in Theorem \ref{thm:main}. Nodes represent states and arrows show transitions. $\cX$ consists of all grey nodes. $\cY_1$ comprises of all green nodes. Blue nodes form $\cY_2$.}
  \label{fig:mdp}
\end{figure}

$\mathbb{M}$ is a generalization of a multi-armed bandit problem used in \citealt{mannor2004sample} to prove a lower bound on bandit learning. A similar example is also shown in \citealt{azar2013minimax} to prove a lower bound on reinforcement learning without any prior knowledge. For an MDP $\cM\in\mathbb{M}$, it is fully determined by the parameter set $\{p_{\cM}(x_k,a_l), k\in[K], l\in[L]\}$. And its $Q$-function has the values:
\begin{align}
Q_{\cM}(x,a) = \frac{1}{1-\gamma p_{\cM}(x,a)}, \quad \forall~ (x,a)\in\cX\times\cA.
\end{align}

Given $\beta>0$, we consider a prior model $\cM_0\in\mathbb{M}$ and a group of models in $\mathbb{M}$ that are $\beta$-close to $\cM_0$ under the TV-distance. Since the TV-distance between two models is at most 2 (see Definition in Equation \ref{eq:TV}), we restrict $\beta\in(0,2)$. Fixing $\beta$, we further restrict the discount factor $\gamma \in \big(\max\{0.4, 1-10\beta\}, 1\big)$.

\paragraph{Prior Model $\cM_0$}
We simplify the notation $p_{\cM_0}(x_k,a_l)$ as $p_0(x_k,a_l)$. Without loss of generality, we assume
\begin{align}\label{eq:p0}
    1>p_0(x_k,a_1)\geq p_0(x_k,a_2)\geq \cdots \geq p_0(x_k,a_L)\geq 0, \text{~for every } k\in[K].
\end{align}
Thus, in $\cM_0$, the $Q$-values from $a_1$ to $a_L$ are monotonically non-increasing. Specifically, given $\beta$ and $\gamma$ satisfying the aforementioned conditions, we require $p_0(x_k,a_1)\in(\frac{4\gamma-1}{3\gamma},1)$ for every $k\in[K]$.

\paragraph{Hypotheses of $\cM$} Given $\cM_0$, we first define $p_0^k := \max\{p_0(x_k, a_1)-\beta/2, \frac{4\gamma-1}{3\gamma}\}$ for every $k\in[K]$. Then let $\varepsilon_0 := \min_{k\in[K]} \Big\{\frac{\beta\gamma(1-p_0^k)}{16(1-\gamma p_0^k)^2}\Big\}$. Given $\varepsilon\in(0, \varepsilon_0)$, we denote by $\alpha_1^k$ be the solution to
\begin{align}\label{eq:a1}
    \frac{1}{1-\gamma\big(p_0^k+\alpha_1^k\big)}-\frac{1}{1-\gamma p_0^k} = 2\varepsilon
\end{align}
and $\alpha_2^k := 4(1-\gamma p_0^k)^2\varepsilon/\gamma$ such that
\begin{align}\label{eq:a2}
      \frac{1}{1-\gamma\big(p_0^k+\alpha_2^k\big)}-\frac{1}{1-\gamma \big(p_0^k+\alpha_1^k\big)} \geq 2\varepsilon.
\end{align}
Furthermore, for every $k\in[K]$, we define an integer index
\begin{align}\label{eq:Lk}
    L_k:=\bigg\vert~\Big\{l\in[L]~\Big\vert ~\big\vert~p_0^k+\alpha_2^k-p_0(x_k,a_l)~\big\vert\leq\beta/2\Big\}~\bigg\vert.
\end{align}

\begin{lemma}
Given $\varepsilon\in (0, \varepsilon_0)$, the set defined in Equation \eqref{eq:Lk} is equal to $\{ 1\leq l\leq L_k\}$.
\end{lemma}
\begin{proof}
Due to the monotonicity in Equation \eqref{eq:p0}, the set in Equation \eqref{eq:Lk} should be consecutive integers. So we only to show $l=1$ is an included. By definition of $p_0^k$, it holds that $ p_0(x_k,a_1)-\beta/2\leq p_0^k<p_0(x_k,a_1)$. When $\varepsilon \in (0, \varepsilon_0)$, we have $0<\alpha_2^k < \beta/2$. Thus, $p_0(x_k,a_1)<p_0^k+\alpha_2^k<p_0(x_k,a_1)+\beta/2$ and $l=1$ is in the set.
\end{proof}
Now, we are ready to define $1+\sum_{k\in[K]}L_k$ possibilities of $\cM$:
\begin{align}\label{eq:M}
\cM_{1}: \text{for every } k\in[K], ~ & \begin{cases}p_{\cM_1}(x_k,a_1) = p_0^k+\alpha_1^k,\\ p_{\cM_1}(x_k,a_l)=p_0^k, \quad 2\leq l\leq L_k,\quad \\
p_{\cM_1}(x_k,a_l)=p_0(x_k,a_l), \quad l >L_k;\end{cases}\\
\text {for every } k\in[K], 1<l\leq L_k, \quad \cM_{k,l}:~& \begin{cases}
p_{\cM_{k,l}}(x_{k},a_l) = p_0^k+\alpha_2^k\\
p_{\cM_{k,l}}(x_{k'},a_{l'}) = p_{\cM_1}(x_{k'},a_{l'}), \quad \forall~ (k',l')\neq(k,l).\end{cases}
\end{align}

\begin{lemma}
When $\varepsilon\in(0, \varepsilon_0)$, all possibilities defined in \eqref{eq:M} $\in B_{\text{TV}}(\cM_0, \beta)$.
\end{lemma}
\begin{proof}
We first verify $\cM_1\in B_{\text{TV}}(\cM_0, \beta)$. When $\varepsilon\in(0, \varepsilon_0)$, by definition we have $0<\alpha_1^k<\alpha_2^k<\beta/2$ and $p_0(x_k,a_1)-\beta/2\leq p_0^k<p_0(x_k,a_1)$. Then it holds that $$p_0(x_k,a_1)-\beta/2<p_0^k+\alpha_1^k < p_0(x_k,a_1)+\alpha_1^k < p_0(x_k,a_1)+\beta/2.$$ Thus, $|p_0^k+\alpha_1^k-p_0(x_k,a_1)|\leq \beta/2.$
For $2\leq l \leq L_k$, if $p_0(x_k,a_l)\geq p_0^k$, then $p_0(x_k,a_l)-p_0^k\leq p_0(x_k,a_1)-p_0^k\leq \beta/2$; otherwise, $p_0^k-p_0(x_k,a_l)\leq p_0^k+\alpha^k_2-p_0(x_k,a_l)\leq \beta/2$ (Equation \eqref{eq:Lk}). Hence, $\cM_1\in B_{\text{TV}}(\cM_0, \beta)$. For each $\cM_{k,l}$, the validity directly follows Equation \eqref{eq:Lk} and $\cM_1\in B_{\text{TV}}(\cM_0, \beta)$.
\end{proof}

We refer to these possibilities as \emph{hypotheses} of $\cM$. If we take a closer look, it is easy to observe that in $\cM_1$, for every $x_k\in\cX$, $a_1$ is the optimal action and $a_2$ to $a_{L_k}$ are the second-best actions with a value only $2\varepsilon$ less than the optimal (due to Equation \eqref{eq:a1}). The rest actions have even lower values. $\{\cM_{k,l}\}$ are built on top of $\cM_1$ by raising up the value of the action $a_l$ for state $x_k$ by at least $4\varepsilon$ correspondingly (combining Equation \eqref{eq:a1} and \eqref{eq:a2}). Thus, in $\cM_{k,l}$, the best action for $x_{k'}~(k'\neq k)$ is still $a_1$ but the best action for $x_k$ is $a_l$. See Figure \ref{fig:modelQ} for illustration.

\begin{figure}
    \centering
    \includegraphics[width=\textwidth]{image/modelQ.png}
    \caption{The $Q$-values of $\cM_0$, $\cM_1$, $\cM_{k,2}$, and $\cM_{k,3}$ at state $x_k$ with 6 actions. The three dashed lines indicates the values: $\frac{1}{1-\gamma p_0^k}$, $\frac{1}{1-\gamma (p_0^k+\alpha_1^k)}$, and $\frac{1}{1-\gamma (p_0^k+\alpha_2^k)}$. The grey line in $\cM_0$ ($=\frac{1}{1-\gamma(p_0^k+\alpha_2^k-\beta/2)}$) is the threshold. Only actions with values above it are in the set defined in Equation \eqref{eq:Lk}. In this example, $L_k=4$. Note that for states $x_{k'}~ (k'\neq k)$, $\cM_{k,2}$ and $\cM_{k,3}$ have the same shape as $\cM_1$.}
    \label{fig:modelQ}
\end{figure}

Every hypothesis gives a probability measure over the same sample space. We denote by $\mathbb{E}_1$, $\mathbb{P}_1$ and $\mathbb{E}_{k,l}$, $\mathbb{P}_{k,l}$ the expectation and probability under hypothesis $\cM_1$ and $\cM_{k,l}$, respectively. These probability measures capture both the randomness in the corresponding MDP and the randomization carried out by the algorithm $\sA$ like its sampling strategy. It is worth mentioning that in \citealt{azar2013minimax}, the authors implicitly assume that the sampling numbers to different states are determined before the start of the algorithm and do not change during learning (this is due to their \emph{conditionally independence} argument in Lemma 18). Such an assumption does not apply to adaptive sampling strategy. In our result, adaptive sampling is included.

In the sequel, we fix $\varepsilon\in(0, \varepsilon_0)$ and $\delta\in(0, 1/40)$.
Let
$$t^* = \frac{c_1}{(1-\gamma)^3\varepsilon^2}\log\Big(\frac{1}{4\delta}\Big),$$
where $c_1>0$ is to be determined later. We denote by $T_{k,l}$ the number of samples that algorithm $\sA$ calls from the generative model with input state $y_1(x_{k},a_{l})$ till $\sA$ stops (these sample calls are not necessarily consecutive). For every $k\in[K], 1<l\leq L_k$, we define the following events:
\begin{align}
A_{k,l} &= \{T_{k,l}\leq 4t^*\}, \quad B_{k,l} = \{ \sA \text{ outputs a policy } \pi \text{ with } \pi(x_k) = a_1\},\\
C_{k,l} &= \Big\{\max_{1\leq T_{k,l}\leq 4t^*} \big\vert p_0^k\cdot T_{k,l}- S_{k,l}(T_{k,l})\big\vert \leq\sqrt{16t^* \cdot p_0^k\cdot (1-p_0^k)\log(1/4\delta)}\Big\},
\end{align}
where $S_{k,l}(T_{k,l})$ is the sum of rewards (non-discounted) by calling the generative model $T_{k,l}$ times with input state $y_1(x_k,a_l)$. For these events, we have the following lemmas.

\begin{lemma}
For any $k\in[K], 1<l\leq L_k$, if ~$\mathbb{E}_1[T_{k,l}]\leq t^*$, $\mathbb{P}_1(A_{k,l})> 3/4$.
\end{lemma}
\begin{proof}
$$t^*\geq \mathbb{E}_1[T_{k,l}]> 4t^*\mathbb{P}_1(T_{k,l}>4t^*)=4t^*(1-\mathbb{P}_1(T_{k,l}\leq 4t^*)).$$
Thus, $\mathbb{P}_1(A_{k,l})> 3/4$.
\end{proof}

\begin{lemma}
For any $k\in[K], 1<l\leq L_k$, $\mathbb{P}_1(C_{k,l})> 3/4$.
\end{lemma}
\begin{proof}
Let $\epsilon:=\sqrt{16t^* \cdot p_0^k\cdot (1-p_0^k)\log(1/4\delta)}$. When $1<l\leq L_k$, under hypothesis $\cM_1$, $p_{\cM_1}(x_k,a_l)=p_0^k$. By definition, the instant rewards from state $y_1(x_k,a_l)$ are i.i.d. Bernoulli$(p_0^k)$ random variables and
$p_0^k\cdot T_{k,l}-S_{k,l}(T_{k,l})$ is a martingale. Using Doob's inequality (\cite[Theorem 4.4.2]{durrett2019probability}), we have the following bound:
\begin{align}
    \mathbb{P}_1\Big(\max_{1\leq T_{k,l}\leq 4t^*} \big\vert p_0^k\cdot T_{k,l}- S_{k,l}(T_{k,l})\big\vert \geq \sqrt{16t^* \cdot p_0^k\cdot (1-p_0^k)\log(1/4\delta)}\Big)\leq \frac{\mathbb{E}_1\Big[\big(4t^*\cdot p_0^k- S_{k,l}(4t^*)\big)^2\Big]}{16t^* \cdot p_0^k\cdot (1-p_0^k)\log(1/4\delta)}.
\end{align}
Since $\mathbb{E}_1[(4t^*\cdot p_0^k- S_{k,l}(4t^*))^2]=4t^*p_0^k(1-p_0^k)$ and $\delta<1/40$, we obtain that
\begin{align}
    \mathbb{P}_1(C_{k,l})\geq 1-1/(4\log(1/4\delta))>3/4.
\end{align}
\end{proof}

Since $\sA$ is $(\cM_0, \beta, \varepsilon, \delta)$-correct, it should return a policy $\pi$ such that when $\cM=\cM_1$, $\pi(x_{k})=a_1$ for every $k\in[K]$ with probability at least $1-\delta$, i.e. $\mathbb{P}_1\big(~B_{k,l}, ~\text{for all } k\in[K], 1<l\leq L_k~ \big)\geq 1-\delta>3/4$. We define the event $\cE_{k,l}:=A_{k,l}\cap B_{k,l} \cap C_{k,l}$. Combining the results above, it holds that
\begin{align}
\mathbb{P}_1(\cE_{k,l})>1-3/4=1/4, \quad \forall~ k\in[K],~ 1<l\leq L_k.
\end{align}
Next, we show that if the expectation of number of samples taken through $\sA$ on any $y_1(x_k,a_l)$ is less than $t^*$, with probability $>\delta$, $\sA$ can not return an $\varepsilon$-policy for the corresponding hypothesis $\cM_{k,l}$.

\begin{lemma}\label{lemma:newversion}
For any $k\in[K], 1<l\leq L_k$, if ~$\mathbb{E}_1[T_{k,l}]<t^*$, then $\mathbb{P}_{k,l}(B_{k,l})>\delta$.
\end{lemma}
\begin{proof}
Given $k\in[K]$ and $1<l\leq L_k$, we denote by $W$ the length-$T_{k,l}$ random sequence of the instant rewards by calling the generative model $T_{k,l}$ times with the input state $y_1(x_k,a_l)$. If $\cM=\cM_1$, this is an i.i.d. Bernoulli$(p_0^k)$ sequence; if $\cM=\cM_{k,l}$, this is an i.i.d Bernoulli$(p_0^k+\alpha_2^k)$ sequence. We define the likelihood function $L_{k,l}$ as
$$L_{k,l}(w) = \mathbb{P}_{k,l}(W=w)$$
for every possible realization $w$. We simplify the previous notation $S_{k,l}(T_{k,l})$ as $S_{k,l}$. Then we compute the following likelihood ratio
\begin{align}
\frac{L_{k,l}(W)}{L_1(W)}=& \frac{(p_0^k+\alpha_2^k)^{S_{k,l}}(1-p_0^k-\alpha_2^k)^{T_{k,l}-S_{k,l}}}{(p_0^k)^{S_{k,l}}(1-p_0^k)^{T_{k,l}-S_{k,l}}}= \left(1+\frac{\alpha_2^k}{p_0^k}\right)^{S_{k,l}}\left(1-\frac{\alpha_2^k}{1-p_0^k}\right)^{T_{k,l}-S_{k,l}}\\
=& \left(1+\frac{\alpha_2^k}{p_0^k}\right)^{S_{k,l}}\left(1-\frac{\alpha_2^k}{1-p_0^k}\right)^{S_{k,l}\frac{1-p_0^k}{p_0^k}}\left(1-\frac{\alpha_2^k}{1-p_0^k}\right)^{T_{k,l}-S_{k,l}/p_0^k}.
\end{align}
Since $\gamma>0.4$ and $p_0^k \geq \frac{4\gamma-1}{3\gamma}$, we have $p_0^k>1/2$. By our choice of $\alpha_2^k$ and $\varepsilon$, it holds that $\alpha_2^k/(1-p_0^k)\leq \beta/4\in(0, 1/2]$ and $\alpha_2^k/p_0^k\leq \beta(1-p_0^k)/(4p_0^k)\in(0,1/2)$. With the fact that $\log (1-u) \geq-u-u^{2}$ for $u\in[0,1/2]$ and $\exp (-u) \geq 1-u$ for $u\in[0,1]$, we have that
\begin{align}
\bigg(1-\frac{\alpha_2^k}{1-p_0^k}\bigg)^{\frac{1-p_0^k}{p_0^k}}& \geq \exp \left(\frac{1-p_0^k}{p_0^k}\left(-\frac{\alpha_2^k}{1-p_0^k}-\Big(\frac{\alpha_2^k}{1-p_0^k}\Big)^{2}\right)\right) \geq\left(1-\frac{\alpha_2^k}{p_0^k}\right)\left(1-\frac{(\alpha_2^k)^{2}}{p_0^k\cdot(1-p_0^k)}\right).
\end{align}
Thus,
\begin{align} \frac{L_{k,l}(W)}{L_{1}(W)} & \geq\left(1-\frac{(\alpha_2^k)^2}{(p_0^k)^2}\right)^{S_{k,l}}\left(1-\frac{(\alpha_2^k)^{2}}{p_0^k\cdot(1-p_0^k)}\right)^{S_{k,l}}\bigg(1-\frac{\alpha_2^k}{1-p_0^k}\bigg)^{T_{k,l}-S_{k,l}/p_0^k} \\ & \geq\left(1-\frac{(\alpha_2^k)^2}{(p_0^k)^2}\right)^{T_{k,l}}\left(1-\frac{(\alpha_2^k)^{2}}{p_0^k\cdot (1-p_0^k)}\right)^{T_{k,l}}\bigg(1-\frac{\alpha_2^k}{1-p_0^k}\bigg)^{T_{k,l}-S_{k,l}/p_0^k}
\end{align}
due to $S_{k,l}\leq T_{k,l}$. Next, we proceed on the event $\cE_{k,l}$. By definition, if $\cE_{k,l}$ occurs, $A_{k,l}$ also occurs. Using $\log(1-u)\geq -2u$ for $u\in[0,1/2]$, it follows that
\begin{align}
   \left(1-\frac{(\alpha_2^k)^2}{(p_0^k)^2}\right)^{T_{k,l}} \geq& \left(1-\frac{(\alpha_2^k)^2}{(p_0^k)^2}\right)^{4t^*} \geq \exp \left(-8t^* \frac{(\alpha_2^k)^2}{(p_0^k)^2}\right) = \exp \left(-8\frac{c_1}{(1-\gamma)^3\varepsilon^2}\log(1/4\delta)\frac{16(1-\gamma p_0^k)^4\varepsilon^2}{\gamma^2(p_0^k)^2}\right)\\
\geq &\exp \Big(-128 c_1 \log(1/4\delta) \frac{(1-\frac{4\gamma-1}{3})^4}{(1-\gamma)^3\gamma^2(\frac{4\gamma-1}{3\gamma})^2} \Big)=\exp \Big(-128 c_1 \log(1/4\delta)\frac{256(1-\gamma)}{9(4\gamma-1)^2} \Big)\\
\geq &\exp \Big(-128 c_1 \log(1/4\delta)\frac{256}{9*0.6} \Big)\geq\left(4\delta\right)^{6100c_1},
\end{align}
where the second line follows $p_0^k\geq \frac{4\gamma-1}{3\gamma}$. Using $\log(1-u)\geq -2u$ for $u\in[0,1/2]$, we also obtain
\begin{align}
    \left(1-\frac{(\alpha_2^k)^{2}}{p_0^k\cdot(1-p_0^k)}\right)^{T_{k,l}} &\geq \left(1-\frac{(\alpha_2^k)^{2}}{p_0^k\cdot(1-p_0^k)}\right)^{4t^*}\geq \exp \left(-8t^* \frac{(\alpha_2^k)^{2}}{p_0^k(1-p_0^k)}\right)\\
    &=\exp \left(-8\frac{c_1}{(1-\gamma)^3\varepsilon^2}\log(1/4\delta)\frac{16(1-\gamma p_0^k)^4\varepsilon^2}{\gamma^2p_0^k(1-p_0^k)}\right)\\
    &\geq\exp \Big(-128 c_1 \log(1/4\delta) \frac{(1-\frac{4\gamma-1}{3})^4}{(1-\gamma)^3\gamma^2(\frac{4\gamma-1}{3\gamma})\min\{\frac{1-\gamma}{3\gamma}, \beta/2\}} \Big)\\
    &=\exp \Big(-128 c_1 \log(1/4\delta)\frac{256}{27\gamma(4\gamma-1)}\cdot \frac{1-\gamma}{\min\{\frac{1-\gamma}{3\gamma}, \beta/2\}}\Big)\\
    &\geq \exp \Big(-128 c_1 \log(1/4\delta)\frac{256\cdot 20}{27\gamma(4\gamma-1)}\Big)\\
    &\geq \exp \Big(-128 c_1 \log(1/4\delta)\frac{5120}{27*0.4*0.6} \Big)\geq\left(4\delta\right)^{102000c_1},
\end{align}
where the third line follows $1-p_0^k\geq \min\{1-\frac{4\gamma-1}{3\gamma}, 1-p_0(x_k,a_1)+\beta/2\}\geq \min\{\frac{1-\gamma}{3\gamma}, \beta/2\}$ and the fifth line is due to $\frac{1-\gamma}{\min\{\frac{1-\gamma}{3\gamma}, \beta/2\}}\leq\max\{3\gamma, 20\}=20$ (since $\gamma> 1-2\beta$).
Further, when $\cE_{k,l}$ occurs, $A_{k,l}$ and $C_{k,l}$ both occur. Therefore, following similar steps, we have
\begin{align}
\left(1-\frac{\alpha_2^k}{1-p_0^k}\right)^{T_{k,l}-S_{k,l}/{p_0^k}} & \geq\left(1-\frac{\alpha_2^k}{1-p_0^k}\right)^{\max_{1\leq T_{k,l}\leq 4t^*} |T_{k,l}-S_{k,l}/p_0^k|}\geq\left(1-\frac{\alpha_2^k}{1-p_0^k}\right)^{\sqrt{16t^*\frac{1-p_0^k}{p_0^k}\log(1/4\delta)}} \\
& \geq \exp \left(-\sqrt{64\frac{(\alpha_2^k)^2}{p_0^k(1-p_0^k)}t^*\log(1/4\delta)}\right)
\geq\left(4\delta \right)^{\sqrt{810000c_1}}.
\end{align}
In total, we have $L_{k,l}(W)/L_{1}(W)\geq(4\delta)^{108100c_1+\sqrt{810000c_1}}$.
By taking $c_1$ small enough, e.g. $c_1=5e^{-7}$, we have $L_{k,l}(W)/L_{1}(W) > 4\delta.$
By a change of measure,
\begin{equation}\label{eq:p2}
\mathbb{P}_{k,l}(B_{k,l})\geq \mathbb{P}_{k,l}(\cE_{k,l})=\mathbb{E}_{k,l}[\mathbf{1}_{\cE_{k,l}}]=\mathbb{E}_1\left[\frac{L_{k,l}(W)}{L_1(W)}\mathbf{1}_{\cE_{k,l}}\right]> 4\delta*1/4=\delta.
\end{equation}
\end{proof}
\begin{proof}[Proof of the Lower Bound in Theorem \ref{thm:main}]
Since $\sA$ is $(\cM^0,\beta,\varepsilon, \delta)$-correct, under hypothesis $\cM_{k,l}$, $\sA$ should produce a policy $\pi$ such that $\pi(x_k)=a_l$ with probability $\geq 1-\delta$. Thus, we should have $\mathbb{P}_{k,l}(B_{k,l})<\delta$ for all $k\in[K], 1<l\leq L_k$. From Lemma~\ref{lemma:newversion}, it requires $\mathbb{E}_1[T_{k,l}]>t^*$ for all $k\in[K], 1<l\leq L_k$. In total, we need $\Omega\left(\frac{\sum_{k\in[K]}L_k}{(1-\gamma)^3\varepsilon^2}\log(1/\delta)\right)$ samples. By definition of $L_k$, for every $k\in[K]$, we have
\begin{align}\label{eq:L_k}
\{a_l, l\leq L_k\} &= \cA^{x_k}_{\cM_0}\Big(\frac{1}{1-\gamma p_0(x_k,a_1)}-\frac{1}{1-\gamma (p_0^k+\alpha_2^k-\beta/2)}\Big)\\
&=\cA^{x_k}_{\cM_0}\Big(V^*(x_k) - \frac{1}{1-\gamma p_0^k - 4(1-\gamma p_0^k)^2\varepsilon+\beta\gamma/2}\Big).
\end{align}
If $p_0^k=\frac{4\gamma-1}{3\gamma}$, i.e. $p_0(x_k,a_1)-\beta/2\leq\frac{4\gamma-1}{3\gamma}$, then
\begin{align}\label{eq:Lk1}
    L_k &= \Big\vert\cA^{x_k}_{\cM_0}\Big(V^*(x_k) - \frac{9}{12(1-\gamma)-64(1-\gamma)^2\varepsilon+4.5\beta\gamma}\Big) \Big\vert.
    \end{align}
If $p_0^k=p_0(x_k,a_1)-\beta/2$, i.e. $p_0(x_k,a_1)-\beta/2 > \frac{4\gamma-1}{3\gamma}$, then
 \begin{align}\label{eq:Lk2}
    L_k &= \Big\vert\cA^{x_k}_{\cM_0}\Big(V^*(x_k) - \frac{1}{1-\gamma p_0(x_k,a_1)+\gamma\beta - 4\varepsilon(1-\gamma p_0(x_k,a_1)+\gamma\beta/2)^2}\Big)\Big\vert\\
    &=\Big\vert\cA^{x_k}_{\cM_0}\Big(V^*(x_k) - \frac{V^*(x_k)^2}{V^*(x_k)+\gamma\beta (V^*(x_k))^2 - 4\varepsilon(1+\gamma\beta V^*(x_k)/2)^2}\Big)\Big\vert
    \end{align}
  Combining Equation \eqref{eq:Lk1} and \eqref{eq:Lk2}, we have
$$\Omega\left(\frac{\sum_{k\in[K]}L_k}{(1-\gamma)^3\varepsilon^2}\log(1/\delta)\right)=\Omega\left(\frac{\sum_{s\in\cS'}|\cA^{s}_{\cM_0}(\underline{C}_s)|}{(1-\gamma)^3\varepsilon^2}\log(1/\delta)\right),$$
which concludes our proof of the lower bound result in Theorem \ref{thm:main}.
\end{proof}
In Corollary \ref{coro:worst}, we have the worst scenario, where $L_k=|\cA|$. This is possible if there is no gap between the values in $\cM_0$. Then, for any $c>0$, $|\cA^{x_k}_{\cM_0}(c)|=|\cA|$.
\cut{The proof of Theorem~\ref{thm:pilowerbound} is established in  three steps:
\begin{enumerate}
\item We will show that for a fixed $(x,a)\in\cX\times\cA$,
$Q^*_{\cM^i}(x,a)$ for $i\in\{1,2\}$ cannot
be learnt up to $\epsilon$ error with high probability using only a small number of samples;
    \item We will generalize the above lower bound  to a uniform bound on all state-action pairs using the fact that their transitions are independent of each other;
    \item Lastly, we will extend the lower bound to learning an $\varepsilon$-optimal policy.
\end{enumerate}
The proof will follow Lemma~\ref{lemma:xabound}, Lemma~\ref{lemma:Qbound}, Proposition~\ref{prop:lowerboundQ}, and Lemma~\ref{lemma:policyeva}. For simplicity, we use $p_1(x,a)$ and $p_2(x,a)$ to denote $p_{\cM^1}(x,a)$ and $p_{\cM^2}(x,a)$, respectively.}
\cut{
\subsection*{Step 1: a fixed $(x,a) \in \cX\times\cA$}
\fei{Is the current definition fine?}

Given a fixed $(x,a)$ and a generative model, we can generate $t$ samples with the same input state $y_1(x,a)$. The rewards of these samples are equivalent to a length-$t$ sequence of i.i.d Bernoulli random variables $W_t:=\{X_1, X_2, \dots X_t\}$. We define two probability spaces $(\Omega, \cF, P^1)$ and $(\Omega, \cF, P^2)$ for $W_t$, where in the former, $\mathbb{P}(X_1=1)=p_1(x,a)$; in the latter, $\mathbb{P}(X_1=1)=p_2(x,a)$. Note that these two spaces have the same sample space and $\sigma-$algebra. The only difference is the probability measure. In the following context, we use the notation $\mathbb{P}^1(\cdot)$ if the event is measured by $P^1$ and vice versa for $\mathbb{P}^2(\cdot)$.

Suppose $\sA$ is an $(\cM_0, \beta, \varepsilon, \delta, Q)$-correct algorithm Denote by $w$ a realization of $W_t$ and $Q^{\sA}_{t}(x,a)$ the output Q-value of $(x,a)$ from $\sA$ with input $w$. We define events:
\begin{align}
\cE_1(x,a)&:=\{|Q^*_{\cM^1}(x,a)-Q^{\sA}_{t}(x,a)|\leq\varepsilon,\quad \cE_2(x,a):=\{|Q^*_{\cM^2}(x,a)-Q^{\sA}_{t}(x,a)|\leq\varepsilon\}
\end{align}
Denoting by $k$ the number of 1s in $w$, we define another event:
$$
\cE_3(x,a):= \bigg\{p_{1}(x,a)\cdot t-k\leq\sqrt{2p_{1}(x,a)\cdot (1-p_{1}(x,a))\cdot t\cdot \log\Big(\frac{c}{2\theta(x,a)}\Big)}\bigg\},$$
where $\theta(x,a):=\exp(-c\alpha(x,a)^2t/(p_{1}(x,a)(1-p_{1}(x,a))))$. Furthermore, let $\cE(x,a):=\cE_1(x,a)\cap\cE_3(x,a)$.

Similar to the proof in \citealt[Lemma 16]{azar2013minimax}, by Chernoff-Hoeffding bound and $p_1(x,a)>1/2$, we first have that
\begin{align}\label{eq:event3}
\mathbb{P}^1(\cE_3)>1-\frac{2\theta(x,a)}{c}.
\end{align}

\cut{
\begin{proof}
Denote by $\epsilon:=\sqrt{2p_{1}(x,a)(1-p_{1}(x,a))t\log(\frac{c}{2\theta(x,a)})}$. By Chernoff-Hoeffding bound, since $p_1(x,a)>1/2$, we have that
\begin{align}
\mathbb{P}^1(\cE_2)&\geq 1-\exp[-\text{KL}(p_{1}(x,a)+\frac{\epsilon}{t}||p_{1}(x,a))\cdot t]\\
&\geq 1-\exp[-\frac{\epsilon^2}{2p_{1}(x,a)(1-p_{1}(x,a))t}]\\
&\geq 1-\frac{2\theta(x,a)}{c}.
\end{align}
\end{proof}}

Next, we show that for the fixed $(x,a)$, there is certain probability such that the Q value approximated by $\sA$ cannot be very accurate on both models.
\begin{lemma}\label{lemma:xabound}
When $\varepsilon\in\bigg(0,\frac{(1-p_{0}-\beta'/4)\gamma^2}{8(1-\gamma p_{0})^2}\bigg)$, there exists a constant $c>0$ such that for every $(\cM_0, \beta, \varepsilon, \delta, Q)$-correct RL algorithm $\sA$, it holds that
$$\big(\max_{m\in\{1,2\}} \mathbb{P}^m (\neg~\cE_m(x,a))\big)>\frac{\theta(x,a)}{c},$$
by the choice of $\alpha(x,a)=\frac{2(1-\gamma p_{0})^2\varepsilon}{\gamma^2(p_1(x,a))}$.
\end{lemma}
\begin{proof}
We prove by contradiction. Suppose there is an algorithm $\sA$ such that
\begin{align}
\mathbb{P}^m(|Q^*_{\cM^m}(x,a)-Q^{\sA}_{t}(x,a)|>\varepsilon)\leq\frac{\theta(x,a)}{c}, \quad \text{for all }m\in\{1,2\}.
\end{align}
Then we have that $\mathbb{P}^1(\cE_1)\geq 1-\theta(x,a)/c$. Since $\theta(x,a)<1$, combining with Equation \eqref{eq:event3} and choosing $c>6$, we have $\mathbb{P}(\cE)>1/2$. Denote by $w$ a realization of $W_t$ and $L^m(w):=\mathbb{P}^m(W_t=w), m\in\{1,2\}$. Recall that $k$ is the number of 1s in $w$. Then, the likelihood ratio between model $\cM^1$ and model $\cM^2$ is:
\begin{align}\label{eq:ratio}
\frac{L^2(w)}{L^1(w)}&=
\frac{p_{2}(x,a)^{k}(1-p_{2}(x,a))^{t-k}}{p_{1}(x,a)^{k}(1-p_{1}(x,a))^{t-k}}=\big(1+\frac{\alpha(x,a)}{p_{1}(x,a)}\big)^{k}\big(1-\frac{\alpha(x,a)}{1-p_{1}(x,a)}\big)^{t-k}\\
&=\big(1-\frac{\alpha(x,a)}{1-p_{1}(x,a)}\big)^{\frac{1-p_{1}(x,a)}{p_{1}(x,a)}k}\big(1-\frac{\alpha(x,a)}{1-p_{1}(x,a)}\big)^{t-\frac{k}{p_{1}(x,a)}}\big(1+\frac{\alpha(x,a)}{p_{1}(x,a)}\big)^{k}
\end{align}
By our choice of $\varepsilon$ and $\alpha(x,a)$, we have that $\alpha(x,a) = \frac{1-p_0-\beta'/4}{4(p_0+b(x,a))}\leq \frac{1-p_0-\beta'/4}{2}\leq (1-p_1(x,a))/2$. With the fact that $\log (1-u) \geq-u-u^{2}$ for $u\in[0,1/2]$ and $\exp (-u) \geq 1-u$ for $u\in[0,1]$, we further have that
\begin{align}
\bigg(1-\frac{\alpha(x,a)}{1-p_{1}(x,a)}\bigg)^{\frac{1-p_{1}(x,a)}{p_{1}(x,a)}}& \geq \exp \left(\frac{1-p_{1}(x,a)}{p_{1}(x,a)}\left(-\frac{\alpha(x,a)}{1-p_{1}(x,a)}-\big(\frac{\alpha(x,a)}{1-p_{1}(x,a)}\big)^{2}\right)\right) \\ & \geq\left(1-\frac{\alpha(x,a)}{p_{1}(x,a)}\right)\left(1-\frac{\alpha(x,a)^{2}}{p_{1}(x,a)(1-p_{1}(x,a))}\right).
\end{align}
Thus
\begin{align} \frac{L_{2}(w)}{L_{1}(w)} & \geq\left(1-\frac{\alpha(x,a)^2}{p_{1}(x,a)^2}\right)^{k}\left(1-\frac{\alpha(x,a)^{2}}{p_{1}(x,a)(1-p_{1}(x,a))}\right)^{k}\bigg(1-\frac{\alpha(x,a)}{1-p_{1}(x,a)}\bigg)^{t-\frac{k}{p_{1}(x,a)}} \\ & \geq\left(1-\frac{\alpha(x,a)^2}{p_{1}(x,a)^2}\right)^{t}\left(1-\frac{\alpha(x,a)^{2}}{p_{1}(x,a)(1-p_{1}(x,a))}\right)^{t}\bigg(1-\frac{\alpha(x,a)}{1-p_{1}(x,a)}\bigg)^{t-\frac{k}{p_{1}(x,a)}}\end{align}
since $k\leq t$. Using $\log(1-u)\geq -2u$ for $u\in[0,1/2]$, we have that
$$\left(1-\frac{\alpha(x,a)^{2}}{p_{1}(x,a)(1-p_{1}(x,a))}\right)^{t} \geq \exp \left(-2 t \frac{\alpha(x,a)^{2}}{p_{1}(x,a)(1-p_{1}(x,a))}\right) \geq\left(2 \theta(x,a) / c\right)^{2 /c_1},$$
where $c_1\leq \frac{c}{\log\theta(x,a)}\log\frac{2\theta(x,a)}{c}$. Further, we have that
$$\left(1-\frac{\alpha(x,a)^2}{p_{1}(x,a)^2}\right)^{t} \geq \exp \left(-2t \frac{\alpha(x,a)^2}{p_{1}(x,a)^2}\right) \geq\left(2 \theta(x,a) / c\right)^{\frac{2(1-p_{1}(x,a))}{c_1p_{1}(x,a)}}.$$
On $\cE_3$, it holds that $(t-\frac{k}{p_1(x,a)}) \leq \sqrt{2\frac{1-p_{1}(x,a)}{p_{1}(x,a)}t\log\frac{c}{2\theta(x,a)}}$, thus
\begin{align}
\left(1-\frac{\alpha(x,a)}{1-p_{1}(x,a)}\right)^{t-\frac{k}{p_{1}(x,a)}} & \geq\left(1-\frac{\alpha(x,a)}{1-p_{1}(x,a)}\right)^{\sqrt{2\frac{1-p_{1}(x,a)}{p_{1}(x,a)}t\log\frac{c}{2\theta(x,a)}}} \\
& \geq \exp \left(-\sqrt{8\frac{\alpha(x,a)^2}{p_{1}(x,a)(1-p_{1}(x,a))}t\log\frac{c}{2\theta(x,a)}}\right) \\
& \geq \exp \left(-\sqrt{\frac{-8\log\theta(x,a)}{c}\log\frac{c}{2\theta(x,a)}}\right)\\
& \geq\left(2 \theta(x,a) / c\right)^{\sqrt{8/c_1}}
\end{align}
By taking $c_1=16$ and $c=17$,
$$\frac{L_{2}(W)}{L_{1}(W)} \geq\left(2 \theta(x,a) / c\right)^{2 / c_{1}+2(1-p_1(x,a)) /(c_{1}p_1(x,a))+\sqrt{8/c_1}}\geq 2 \theta(x,a) / c.$$
By a change of measure, we deduce that
\begin{equation}\label{eq:p2}
\mathbb{P}^2(\cE_1)\geq \mathbb{P}^2(\cE)=\mathbb{E}_2[\mathbf{1}_{\cE}]=\mathbb{E}_1(\frac{L_2(W)}{L_1(W)}\mathbf{1}_{\cE})\geq\frac{\theta(x,a)}{c}.
\end{equation}
For this choice of $\alpha(x,a)$,
\begin{align}
Q^*_{\cM^2}-Q^*_{\cM^1} &=\frac{\gamma}{1-\gamma p_{2}(x,a)}-\frac{\gamma}{1-\gamma p_{1}(x,a)}\\
&= \frac{\alpha(x,a)\gamma^2}{(1-\gamma p_2(x,a))(1-\gamma p_1(x,a))}\\
&>\frac{\alpha(x,a)\gamma^2}{(1-\gamma p_0)^2} = \frac{2\varepsilon}{p_0+b(x,a)}\geq 2\varepsilon.
\end{align}

Thus, event $\{|Q^*_{\cM^2}-Q^{\sA}_t|\leq\varepsilon\}$ does not overlap with the event $\{|Q^*_{\cM^1}-Q^{\sA}_t|\leq\varepsilon\}$. By \eqref{eq:p2}, $\mathbb{P}^2(\{|Q^*_{\cM^1}-Q^{\sA}_t|\leq\varepsilon\})\leq 1-\frac{\theta(x,a)}{c}$. Thus, $\mathbb{P}^2(\{|Q^*_{\cM^2}-Q^{\sA}_t|\geq\varepsilon\})\geq \frac{\theta(x,a)}{c}$, which contradicts with our assumption.
\end{proof}

Based on Lemma \ref{lemma:xabound}, for $\beta>0$ and $0.4\leq\gamma<1$, selecting $p_{0}=\frac{4\gamma-1}{3\gamma}$ and $\varepsilon\in\Big(0,\frac{1-p_{0}-\beta'/4}{4(1-\gamma p_{0})^2}\gamma^2\Big)$, there exists an $\cM\in B_{\mathrm{TV}}(\cM_0,\beta)$ such that for any $(x,a)\in\cX\times\cA$ and any $(\cM_0, \beta, \varepsilon,\delta,Q)$-correct algorithm $\sA$,
\begin{align}
\mathbb{P}^{\cM}(|Q^*_{\cM}(x,a)-Q^{\sA}_t(x,a)|>\varepsilon)&>\frac{1}{c}\exp(-c\alpha^2t/(p_1(x,a)(1-p_1(x,a)))).\\
&=\frac{1}{c}\exp(\frac{-4c(1-\gamma p_0)^4\varepsilon^2t}{\gamma^4p_1(x,a)(1-p_1(x,a))})>\exp(-34000(1-\gamma)^3\varepsilon^2t).
\end{align}
This  implies that for any state-action pair $(x,a)\in\cX\times\cA$ and $\delta>0$,
\begin{equation}\label{eq:delta}
\mathbb{P}^{\cM}(|Q^*_{\cM}(x,a)-Q^{\cA}_t(x,a)|>\varepsilon)>\delta
\end{equation}
whenever the number of transition samples $t$ is less than $\xi(\varepsilon, \delta):=\frac{1}{4\varepsilon^2(1-\gamma)^3}\log(\frac{1}{\delta})$, where $4=34000$.

\subsection*{Step 2: generalize to all state-action pairs}
Next, we prove a sample complexity lower bound on $\|Q^*_{\cM}-Q^{\sA}_T\|_{\infty}\leq\varepsilon$, where $T$ is the total number of samples for all $N:=3KL$ state-action pairs.

\begin{lemma}\label{lemma:Qbound}
For any $\delta'\in(0,1/2)$, $\varepsilon\in\Big(0,\frac{1-p_{0}-\beta'/4}{4(1-\gamma p_{0})^2}\gamma^2\Big)$, and any $(\cM_0, \beta, \varepsilon, \delta, Q)$-correct algorithm $\sA$, it holds that when $T \leq\frac{N}{6}\xi(\varepsilon, \frac{12\delta'}{N})$,
$$\big(\max_{m\in\{1,2\}}\mathbb{P}^{m}(\|Q^*_{\cM^m}-Q^{\sA}_T\|_{\infty}>\varepsilon)\big)>\delta'.$$
\begin{proof}
Denote by $\delta:=\frac{12\delta'}{N}$. When $T\leq\frac{N}{6}\xi(\varepsilon, \delta)$, at most $\lceil N/6 \rceil$ state-action pairs have samples more than $\xi(\varepsilon, \delta)$. Since $|\cY_1|=N/3$, at least $\lfloor N/6 \rfloor$ states in $\cY_1$ are sampled less than $\xi(\varepsilon, \delta)$ times. We label these states as $y_1(x_i,a_i), i\in\{1,2,\dots,\lfloor N/6 \rfloor\}$. Define $\cQ^m(x,a):=\{|Q^*_{\cM^m}(x,a)-Q^{\sA}_{T(x,a)}(x,a)|>\varepsilon\}$, where $T(x,a)$ is the number of samples for $y_1(x,a)$ within all $T$ samples. Since transitions of different $y_1$s are independent, the event $\cQ^m(x,a)$ and $\cQ^m(x',a')$ are conditionally independent given $T(x,a)$ and $T(x',a')$. Following Lemma~\ref{lemma:xabound}, we have that
\begin{align}
    &\mathbb{P}^m(\{\cQ^m((x_i,a_i))^c)\}_{1\leq i\leq\lfloor N/6 \rfloor} \cap \{T(x_i,a_i)\leq\xi(\varepsilon,\delta)\}_{1\leq i\leq \lfloor N/6 \rfloor})\\
    =& \sum_{t_1=0}^{\xi(\varepsilon, \delta)}\cdots\sum_{t_{\lfloor N/6 \rfloor}=0}^{\xi(\varepsilon,\delta)}\mathbb{P}^m(\{T(x_i,a_i)=t_i\}_{1\leq i\leq \lfloor N/6 \rfloor})\mathbb{P}^m(\{\cQ^m(x_i,a_i)^c\}_{1\leq i\leq \lfloor N/6 \rfloor}\vert \{T(x_i,a_i)=t_i\}_{1\leq i\leq \lfloor N/6 \rfloor})\\
    =&\sum_{t_1=0}^{\xi(\varepsilon, \delta)}\cdots\sum_{t_{\lfloor N/6 \rfloor}=0}^{\xi(\varepsilon,\delta)}\mathbb{P}^m(\{T(x_i,a_i)=t_i\}_{1\leq i\leq \lfloor N/6 \rfloor})\prod_{1\leq i\leq N/6}\mathbb{P}^m(\{\cQ^m(x_i,a_i)^c\}\vert \{T(x_i,a_i)=t_i\})\\
    \leq &\sum_{t_1=0}^{\xi(\varepsilon, \delta)}\cdots\sum_{t_{\lfloor N/6 \rfloor}=0}^{\xi(\varepsilon,\delta)}\mathbb{P}^m(\{T(x_i,a_i)=t_i\}_{1\leq i\leq \lfloor N/6 \rfloor})(1-\delta)^{\lfloor N/6 \rfloor},
\end{align}
where the last inequality follows Eq. \eqref{eq:delta}. Thus,
\begin{equation}
    \mathbb{P}^m(\{\cQ^m((x_i,a_i))^c)\}_{1\leq i\leq N/6} \vert \{T(x_i,a_i)\leq\xi(\epsilon,\delta)\}_{1\leq i\leq N/6})\leq (1-\delta)^{N/6}.
\end{equation}
When $T\leq(N/6)\xi(\varepsilon, \delta)$,
\begin{align}
    \mathbb{P}^m(\|Q^*_{\cM^m}-Q^{\cA}_T\|>\varepsilon)&\geq \mathbb{P}^m\big(\bigcup_{(x,a)\in\cX\times\cA} \cQ^m(x,a)\big)\\
    &\geq 1- \mathbb{P}^m(\{\cQ^m((x_i,a_i))^c)\}_{1\leq i\leq N/6} \vert \{T_{x_i,a_i}\leq\xi(\epsilon,\delta)\}_{1\leq i\leq N/6})\\
    &\geq 1-(1-\delta)^{N/6}\geq \frac{\delta N}{12}=
    \delta',
\end{align}
whenever $\delta N/6\leq 1$.
\end{proof}
\end{lemma}

Lemma~\ref{lemma:Qbound} implies that in order to have $\mathbb{P}^{\cM}(\|Q^*_{\cM}-Q^{\sA}_T\|_{\infty}>\varepsilon)<\delta'$ for every $\cM\in B_{\mathrm{TV}}(\cM_0, \beta)$, one need samples more than $C\frac{N}{\varepsilon^2(1-\gamma)^3}\log(N/{\delta'})$. Thus, a lower bound of learning a $Q$-function with a TV-distance-close prior model is achieved. We state the result formally in Proposition. \ref{prop:lowerboundQ}.
\begin{proposition}\label{prop:lowerboundQ}
For any $\beta>0$, there exists an MDP $\cM_0$, constants $\varepsilon_0$ and $\delta_0$, such that for all $\varepsilon\in(0,\varepsilon_0), \delta\in(0,\delta_0),$ and every $(\cM_0, \beta, \varepsilon, \delta, Q)$-correct RL algorithm $\sA$, the total number of samples needed to learn an \textbf{$\varepsilon$-optimal $Q$-function} for every $\cM\in B_{\mathrm{TV}}(\cM_0, \beta)$ with probability at least $1-\delta$ is
$$\Omega\bigg(\frac{N}{\varepsilon^2(1-\gamma)^3}\log(\frac{N}{\delta})\bigg),$$
where $N$ is the total number of state-action pairs.
\end{proposition}

\subsection*{Step 3: extend to learning policy}
In the last section,  we have  established the sample complexity lower bound for learning a sub-optimal $Q$-function with a TV-distance-close prior model. Next, we extend the result to learning an $\varepsilon$-optimal policy. To this end, we still consider the models $\cM_0, \cM^1$ and $\cM^2$ as before. We first show that given a policy $\pi$ for $\cM\in\mathbb{M}$, it takes much fewer samples to evaluate its $Q$-function compared with the result in Proposition~\ref{prop:lowerboundQ}. Therefore, given $\cM_0$, if an algorithm can produce $\varepsilon$-optimal policies for both $\cM^1$ and $\cM^2$ with significantly fewer samples, then it can produce $\varepsilon$-optimal $Q$-functions with fewer samples as well, which contradicts with Lemma~\ref{lemma:Qbound}. Thus, the lower bound is extended to learning policies.

\begin{lemma}\label{lemma:policyeva}
Given a policy $\pi$ for an MDP $\cM\in\mathbb{M}$ and a generative model of $\cM$, for $\varepsilon\in(0,1]$ and $\delta\in(0,1)$, there exists an algorithm that uses
$$\cO\bigg( \frac{N}{\varepsilon^2(1-\gamma)^3}\log(\frac{1}{\varepsilon(1-\gamma)})\log(\frac{N}{\delta})+ \frac{N}{\varepsilon^2(1-\gamma)^2}\log(\frac{N}{\delta})\bigg)$$
samples to obtain a function $Q:\cS\times\cA\rightarrow \RR$,
such that, with probability at least $1-\delta$,
$\|Q-Q^{\pi}\|_{\infty}\leq \varepsilon$.
\begin{proof}
Since all states in $\cY_2$ have values 0 and $V^{\pi}(y_1(x,a))=Q^{\pi}(x,a)/\gamma$, we only need to evaluate $Q^{\pi}(x,a)$ for all $(x,a)\in\cX\times\cA$. 
Given $\varepsilon>0$, when $T= \lceil \frac{1}{1-\gamma}\log\frac{2}{(1-\gamma)\varepsilon}\rceil$, the tail sum $\mathbb{E}^{\pi}[\sum_{t=T}^{\infty}\gamma^t R(s_t,a_t)|s_0=x, a_0=a]\leq \varepsilon/2$. Thus, we only need to evaluate $Q^{\pi}_T(x):=\mathbb{E}^{\pi}[\sum_{t=0}^{T}\gamma^t R(s_t,a_t)|s_0=x, a_0=a]$ to $\varepsilon/2-$accuracy.  We define a random variable $G^{\pi}_T(x,a):=R(x,a)+\sum_{t=1}^{T}\gamma^t R(s_t,a_t)$ following $\pi$. Then $G^{\pi}_T(x,a)\in[0, 1/(1-\gamma)]$ for all $(x,a)\in\cX\times\cA$ and $\mathbb{E}[G^{\pi}_T(x,a)]=Q^{\pi}_T(x,a)$. By Hoeffding Inequality, we only need to take $\frac{2}{(1-\gamma)^2\varepsilon^2}\log(\frac{|\cS|}{\delta})$ samples of $G^{\pi}_T(x)$ such that the empirical average $\bar{V}^{\pi}_T(x)$ satisfies
$$\mathbb{P}(|\bar{V}^{\pi}_T(x)-V^{\pi}_T(x)|\leq \varepsilon/2)\geq 1-\frac{\delta}{K}.$$ In total, there are $\lceil \frac{2}{(1-\gamma)^3\varepsilon^2}\log(\frac{K}{\delta})\log\frac{2}{(1-\gamma)\varepsilon}\rceil$ transition samples. Repeating the step to all $x\in\cX$ and taking the union bound, we can get a function $\bar{V}^{\pi}_T:\cX\rightarrow \RR$ such that, with probability at least $1-\delta$,
$$\|\bar{V}^{\pi}_T-V^{\pi}\|_{\infty}\leq \varepsilon.$$

Next, we use $\bar{V}_T^{\pi}$ to approximate $Q^{\pi}(x,a)$. Recall that
$$Q^{\pi}(x,a) = R(x,a) + \gamma P(\cdot|x,a)^TV^{\pi}.$$
Define a random variable $v(s,a)$ where $v(s,a)=V^{\pi}(s')$ with probability $p(s'|s,a)$. Then $P(\cdot|s,a)^TV^{\pi}$ is equal to the expectation of $v(s,a)$. By Hoeffding Inequality, with $n:=\frac{1}{2(1-\gamma)^2\varepsilon^2}\log(\frac{|\cS||\cA|}{\delta})$ transition samples from $(s,a)$, the empirical average $\bar{V}(s,a)$ satisfies
$$\mathbb{P}(|\bar{V}(s,a)-P(\cdot|s,a)^TV^{\pi}|\leq \varepsilon)\geq 1-\frac{\delta}{|\cS||\cA|}.$$
Repeating the same step for all $(s,a)\in\cS\times\cA$ and taking the union bound, we can get an $\varepsilon$-correct approximation of $Q^{\pi}$ from $V^{\pi}$ with probability at least $1-\delta$ with the sample complexity $\cO\big( \frac{|\cS||\cA|}{(1-\gamma)^2\varepsilon^2}\log(\frac{|\cS||\cA|}{\delta})\big)$. Adding these two complexities together, we can get the desired result.
\end{proof}
\end{lemma}

Combining Proposition~ \ref{prop:lowerboundQ} and Lemma \ref{lemma:policyeva}, we obtain the final result as stated in the following Theorem. 
\begin{theorem}\label{thm:lowerboundpi}
For any $\beta>0$, there exists some constants $\varepsilon_0, \delta_0, c_1, 4$, a class of MDPs $\mathbb{M}$, and an MDP $\cM_0\in\mathbb{M}$ such that for all $\varepsilon\in(0,\varepsilon_0), \delta\in(0,\delta_0),$ and every $(\beta, \varepsilon, \delta, \pi)$-correct RL algorithm $\sA$ which has full knowledge of $\cM_0$, the total number of samples needed to learn an \textbf{$\varepsilon$-optimal policy} of any MDP $\cM\in \mathbb{M}\cap B_{\mathrm{TV}}(\cM_0, \beta)$ with probability at least $1-\delta$ is
$$\Omega\big(\frac{|\cS||\cA|}{\varepsilon^2(1-\gamma)^3}\log\frac{|\cS||\cA|}{\delta}\big).$$

\begin{proof}
Given $\varepsilon$, one can take $\cM_0$ with $|\cA| \approx \frac{1}{(1-\gamma)\varepsilon}$. Then the complexity of policy evaluation is significantly smaller than learning $Q$-values. Thus, if an algorithm can find an $\varepsilon$-optimal policy of $\cM$ with significantly fewer samples than the lower bound, it can then output an $2\varepsilon$-optimal $Q$ function with samples significantly fewer than the lower bound, which contradicts with Prop. \ref{prop:lowerboundQ}.
\end{proof}
\end{theorem}
}

\section{Further Discussion}
\subsection{Parameter Relations}
In the proof of the lower bound result, given $\beta\in(0,2)$, we first restrict $\gamma\in(\max\{0.4,1-10\beta\},1)$, then we choose $p_0^k$ based on whether $p_0(x_k,a_1)\leq \beta/2+\frac{4\gamma-1}{3\gamma}$ or not. These restrictions are key for the validation of the lower bound. In Figure \ref{fig:para}, we depict the relation between these quantities. One can observe that, when $\beta<1$, with a small $\gamma$ (below the blue line), we are able to select a $p_0(x_k,a_1)$ such that $p_0^k=p_0(x_k,a_1)-\beta/2$; otherwise, $p_0^k$ will always be $\frac{4\gamma-1}{3\gamma}$.
\begin{figure}[htbp!]
        \centering
        \vspace{-15pt}
        \includegraphics[width=\textwidth]{param.png}
        \vspace{-15pt}
        \caption{The blue area represents the feasible set: $\gamma\in(\max\{0.4, 1-10\beta\}, 1)$. Below the blue line, we can select $p_0(x_k,a_1)$ such that $p_0(x_k,a_1)>\beta_2+\frac{4\gamma-1}{3\gamma}$ and thus have $p_0^k=p_0(x_k,a_1-\beta/2)$; above the blue line, $p_0(x_k,a_1)$ will always be smaller than $\beta_2+\frac{4\gamma-1}{3\gamma}$ and $p_0^k$ will always be $\frac{4\gamma-1}{3\gamma}$. Below the brown line, we have $\bar{C} = 2\beta/(1-\gamma)^2$, which can produce a non-trivial upper bound.}
        \label{fig:para}
    \end{figure}

\subsection{When Upper Bound Meets Lower Bound}
In general, $\bar{C}>\underline{C}_s$. However, if the value gap in $\cM_0$ is big enough, we might still have
\begin{align}\label{eq:minimax}
    \cA^{s}_{\cM_0}(\bar{C})= \cA^{s}_{\cM_0}(\underline{C}_s), \text{~for every~} s\in\cS'.
\end{align}
If $\bar{C}=1/(1-\gamma)$, then $\cA^s_{\cM_0}=\cA$. If the worst case happens as in Corollary \ref{coro:worst} (value gap super close to 0), we reach minimax optimal. If $\bar{C}=\beta/(1-\gamma)^2$ (this can happen in the area below the brown line in Figure \ref{fig:para}), by the definitions about value gap in Section \ref{sec:pre}, Equation \eqref{eq:minimax} implies
\begin{align}
    q^*_s\big[|\cA^s_{\cM_0}(\bar{C})|\big]-q^*_s\big[|\cA^s_{\cM_0}(\bar{C})|+1\big] \geq \bar{C}-\underline{C}_s,\text{~for every~} s\in\cS',
\end{align}
i.e., the value gap between the $|\cA^s_{\cM_0}(\bar{C})|$-th optimal action and the $\big(|\cA^s_{\cM_0}(\bar{C})|+1\big)$-th optimal action should be at least $\bar{C}-\underline{C}_s$. We illustrate this situation in Figure \ref{fig:gap}. We depict two types of prior models, where the right one has a larger value gap than the left one. For the right model, the actions we should at least consider (above the yellow line) are equal to the actions we should at most consider (above the blue line). Thus, for the right model, the sample complexity to learn for a $\beta$-close model is minimax optimal.
\begin{figure}[htbp!]
    \centering
    \includegraphics[width=.9\textwidth]{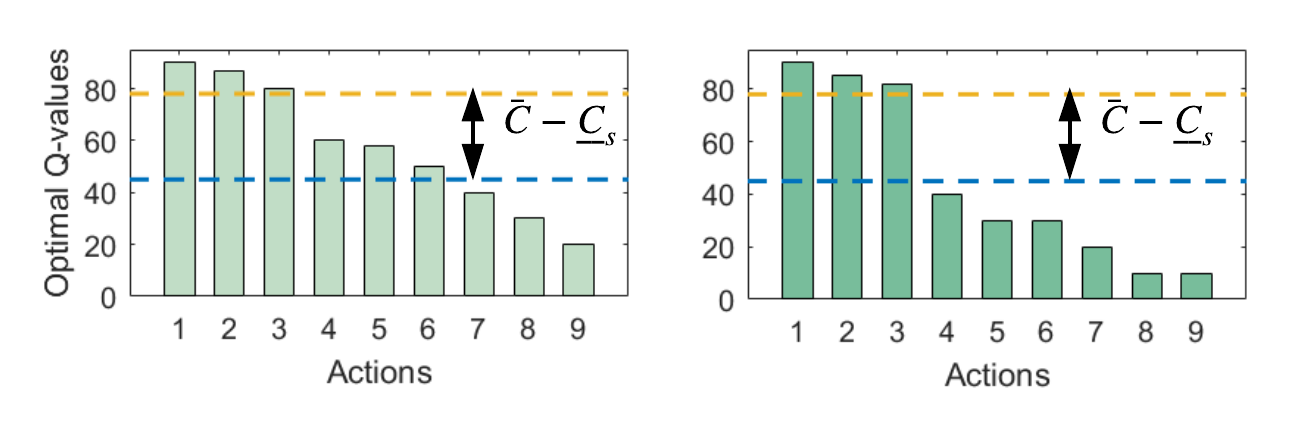}
    \caption{Comparison between two prior models. Actions above the dashed yellow lines are what we should at least consider, which correspond to the sample complexity lower bound; actions above the blue dashed lines are what we should at most consider, which correspond to the sample complexity upper bound. For the left model, the upper bound is strictly greater than the lower bound; for the right model, due to a large value gap, the upper bound has the same order as the lower bound.}
    \label{fig:gap}
\end{figure}

\subsection{Empirical Verification of the Worst Case}\label{sec:toyexample1}
Besides the previous simple case,
we also do a numerical demonstration on a sailing problem \citep{sailing}. 
In Figure \ref{fig:toytest}, we generate two MDPs $\cM_0$ and $\cM$ with $\cM\in B_{\mathrm{TV}}(\cM_0,0.3)$. We compare the performances of two algorithms: 1. direct Q-learning \citep{watkins1992q} with transition samples from $\cM$ (blue line); 2. use the full knowledge of $\cM_0$ to generate a nearly optimal $Q$-function for $\cM_0$, then use that $Q$-function to initialize the proceeding Q-learning algorithm with transition samples from $\cM$ (red line). Both algorithms use the same batch of transition samples from $\cM$. Since $\cM_0$ is close to $\cM$, the warm-start Q-learning is much better than the learning-from-scratch counterpart in the initial stage. However, these two curves overlap when they become closer to the optimal value, indicating similar sample complexities for both algorithms when pursuing a high-precision Q-value estimation.
\begin{figure}[htbp!]
  \centering
  \includegraphics[width=.5\textwidth]{newtoy.png}\\
  \caption{A toy test comparison between direct Q-learning and warm-start Q-learning with a nearly optimal Q-value of $\cM_0$ as initialization.}
  \label{fig:toytest}
\end{figure}

\section{Conclusion and Future Work}
In this paper, we show that 
transfer learning 
from a TV-distance neighbourhood
cannot reduce the sample complexity
of learning a high precision policy of an MDP,
unless extra structural information is provided. We study the case where the new unknown model can be represented as a convex combination of a finite set of base models. In this setting, transfer learning  achieves significantly lower sample complexity compared with learning from scratch.

\newpage
\appendix
\section{A Case Study for Knowledge Transfer in Reinforcement Learning}
In this section, we impose a new assumption on \emph{similarities} among models such that transferring knowledge achieves fast adaptation. We consider a sequence of MDPs, where they have the same state and action spaces, and the discount factor, but different transition dynamics and/or reward functions. At each time step $t$, we want to learn an $\varepsilon$-optimal policy for  $\cM_t:=(\cS,\cA,\vP_t, R_t, \gamma).$ The assumption we propose is a convex hull structure as stated below.

\begin{assumption} \label{ass:convex}Given a finite set of MDPs $\mathbb{M}:=\{\cM^1, \cM^2, \dots, \cM^K\}$ where $\cM^k=(\cS,\cA,\vP^k, R^k, \gamma)$, we have $\cM_t\in \text{conv}(\mathbb{M})$\footnote{$\cM\in \text{conv}(\mathbb{M})$ if there exists a vector $C:=[c_1,c_2,\dots,c_k]^\top\in\Delta^k$ such that for any $(s,a)\in\cS\times\cA$, $P(\cdot|s,a)= \sum_{k=1}^K c_k P^k(\cdot|s,a)$ and $R=\sum_{k=1}^K c_k R^k$.} for all $t>0$. We have full knowledge of all MDPs in $\mathbb{M}$ and access to a generative model of each $\cM_t$.
\end{assumption}

We define a set of matrices $\cV:=\{\vV_{s,a}\in\RR^{|\cS|\times K}, s\in \cS, a\in\cA\}$,
\cut{\begin{equation}
\vV_{s,a} = \begin{bmatrix} p^1(s_1|s,a) & p^2(s_1|s,a) & \cdots & p^K(s_1|s,a)\\
p^1(s_2|s,a) & p^2(s_2|s,a) & \cdots & p^K(s_2|s,a)\\
\vdots & \vdots & \vdots & \vdots\\
p^1(s_{|\cS|}|s,a) & p^2(s_{|\cS|}|s,a) & \cdots & p^K(s_{|\cS|}|s,a)
\end{bmatrix},
\end{equation}}
where the $k$th column of $\vV_{s,a}$ is $P^k(\cdot|s,a)$. Since $\cM_t\in\text{conv}(\mathbb{M})$, there exists a vector $C_t\in\Delta^K$ such that
\begin{align}\label{eq:convex}
P_t(\cdot|s,a)=\vV_{s,a}~ C_t, \quad \forall~(s,a)\in\cS\times\cA.
\end{align}
We define a matrix $\vU\in\RR^{|\cS|^2|\cA|\times K}$ by stacking all $\vV_{s,a}$ vertically, i.e. $$\vU:=\begin{bmatrix} \vV_{s_1,a_1}\\
\vV_{s_1,a_2}\\
 \vdots\\
\vV_{s_{|\cS|}, a_{|\cA|}}\end{bmatrix}.$$
We make the following assumption about $\vU$.
\begin{assumption}\label{ass:fullrank}
$\vU$ has full column rank.
\end{assumption}
Since $K$ is much smaller than $|\cS|^2|\cA|$, the assumption is easy to be satisfied in real applications. Then a direct result is:
\begin{lemma}\label{lemma:kpair}
There exists a set $\{(s'_k,a'_k)\}_{k=1}^K$ such that the matrix formed by stacking all $\vV_{s'_k,a'_k}$ vertically has column rank $K$.
\cut{\begin{proof}
We apply column-wise Gaussian elimination on $\vU$. Due to that $\vU$ is of full rank, we will get a matrix as:
$$\begin{bmatrix}
u'_1   & 0      & 0      & \cdots & 0     \\
\times & 0      & 0      & \cdots & 0     \\
\vdots & \vdots & \vdots & \cdots & \vdots\\
\times & u'_2   & 0      & \cdots & 0     \\
\times & \times & 0      & \cdots & 0\\
\vdots & \vdots & \vdots & \cdots & \vdots\\
\times & \times & \times & \cdots & u'_K\\
\vdots & \vdots & \vdots & \cdots & \vdots\\
\times & \times & \times & \cdots & \times
\end{bmatrix}.
$$
Then we select the corresponding $\vV_{s,a}$ where $\{u'_k\}_{k=1}^K$ lie in, there will be at most $K$ pairs of $(s,a)$. One can always add redundant pairs and collect $K$ pairs in total. Denote these selected pairs by $\{(s'_k,a'_k)\}_{k=1}^K$, then $[\vV(s'_1,a'_1);\vV(s'_2,a'_2);\cdots;\vV(s'_K,a'_K)]$ has column rank $K$.
\end{proof}}
\end{lemma}
The proof is easy with basic linear algebra. Let $\{(s'_k,a'_k)\}_{k=1}^K$ be the set in Lemma \ref{lemma:kpair}. We define
\begin{align}\label{eq:Utrun}
&\vU_{\text{trun}}\in\RR^{K|\cS|\times K}:=\frac{1}{K}\begin{bmatrix}\vV_{s'_1,a'_1}\\
\vV_{s'_2,a'_2}\\
\vdots\\
\vV_{s'_K,a'_K}\end{bmatrix}, \quad
P_t\in\RR^{K|\cS|} :=\frac{1}{K}\begin{bmatrix}P_t(\cdot|s'_1,a'_1)\\
P_t(\cdot|s'_2,a'_2)\\
\vdots\\
P_t(\cdot|s'_K,a'_K)\end{bmatrix}.
 \end{align}
Basically, we shrink the size of $\vU$ and normalize it to $\vU_{\text{trun}}$. Equation \eqref{eq:convex} reduces to
\begin{equation}\label{eq:realC}
P_t = \vU_{\text{trun}} C_t.
\end{equation}
Let $\lambda_{\text{max}}$ and $\lambda_{\text{min}}$ be the largest and smallest eigenvalues of $\vU_{\text{trun}}^\top\vU_{\text{trun}}$, respectively. Note that $\lambda_{\text{min}}>0$ due to the full column rank property of $\vU_{\text{trun}}$. \cut{In the next lemma, we show that $\lambda_{\text{max}} \leq 1/K$ and $\lambda_{\min}$ can be arbitrarily small.
\begin{lemma}\label{lemma:eigen}
 $0<\lambda_{\text{max}}\leq 1/K$ and $\lambda_{\text{min}}>0$ but $\lambda_{\text{min}}$ can be arbitrarily close to 0.

\begin{proof}
First of all, since $\vU_{\text{trun}}^T\vU_{\text{trun}}$ is positive definite (due to full rank of $\vU_{\text{trun}}$), $\lambda_{\text{max}}\geq\lambda_{\text{min}}>0$. We rewrite $\vU_{\text{trun}}$ as $[P^1, P^2, \cdots, P^K]$, where for each $i\in\{1,2,\dots, K\}$, $$P^i\in\RR^{K|\cS|}:=\begin{bmatrix} \frac{1}{K}P^i(\cdot|s'_1,a'_1)\\
\frac{1}{K}P^i(\cdot|s'_2,a'_2)\\
\vdots\\
\frac{1}{K}P^i(\cdot|s'_K,a'_K)\end{bmatrix}.$$
And we have that $\|P^i\|_2 = \frac{\sqrt{\sum_{j=1}^K \|P^i(\cdot|s'_j,a'_j)\|^2_2}}{K}\leq \sqrt{K}/{K} = 1/\sqrt{K}$. Then, $\vU_{\text{trun}}^T\vU_{\text{trun}}$ can be rewritten as
\begin{equation}
\begin{bmatrix}
\|P^1\|_2^2 & (P^1)^TP^2 & \cdots & (P^1)^TP^K\\
(P^2)^TP^1 & \|P^2\|_2^2 & \cdots & (P^2)^TP^K\\
\vdots & \vdots & \vdots & \vdots \\
(P^K)^TP^1 & (P^K)^TP^2 & \cdots & \|P^K\|_2^2
\end{bmatrix}
\end{equation}
If we apply Guassian elimination to $\vU_{\text{trun}}^T\vU_{\text{trun}}$ without changing the eigenvalues, we can get the following equivalent diagonal matrix:
\begin{equation}
\begin{bmatrix}
\|P^1\|_2^2 & 0  & 0 & \cdots \\
0 & \|P^2\|_2^2-\frac{((P^1)^TP^2)^2}{\|P^1\|_2^2}  & 0 & \cdots \\
0 & 0 & \|P^3\|_2^2-\frac{((P^1)^TP^3)^2}{\|P^1\|_2^2}-\frac{\Big(P_2^TP_3-\frac{((P^1)^TP^3)((P^1)^TP^2)}{\|P^1\|_2^2}\Big)^2}{\|P^2\|_2^2-\frac{((P^1)^TP^2)^2}{\|P^1\|_2^2}} & \cdots\\
\vdots & \vdots & 0 & \ddots  \\
\end{bmatrix}
\end{equation}
Since the matrix is positive definite, all diagonal entries should be positive. Then, the $i$th diagonal element in the above matrix is equal to $\|P^i\|_2^2$ minus some nonnegative quantity. Thus, the diagonal entries are all smaller than $1/K$. Therefore, $\lambda_{\text{max}}\leq 1/K$.
To show that $\lambda_{\text{min}}$ can be arbitrarily small, we consider the following example. Denote by $e_i$ the unit vector with the $i$th component equalling 1. Then, let $P^i(\cdot|s'_j,a'_j)\equiv (1-\delta_i)e_1 + \delta_i e_i$ for all $j$. In this case, $\vU_{\text{trun}}$ has the form as:
\begin{equation}
\begin{bmatrix}
\frac{1}{K} & \frac{1-\delta_2}{K} & \frac{1-\delta_3}{K} & \cdots & \frac{1-\delta_K}{K}\\
0 & \frac{\delta_2}{K} & 0 & \cdots & 0\\
0 & 0 & \frac{\delta_3}{K} & \cdots & 0 \\
\vdots & \vdots & \vdots & \cdots & \frac{\delta_K}{K}\\
\frac{1}{K} & \frac{1-\delta_2}{K} & \frac{1-\delta_3}{K} & \cdots & \frac{1-\delta_K}{K}\\
0 & \frac{\delta_2}{K} & 0 & \cdots & 0\\
0 & 0 & \frac{\delta_3}{K} & \cdots & 0 \\
\vdots & \vdots & \vdots & \cdots & \frac{\delta_K}{K}\\
\vdots & \vdots & \vdots & \cdots & \vdots\\
\end{bmatrix}.
\end{equation}
Note that as long all $\delta_i$ are positive, the above matrix is of rank $K$. In this case, $\vU_{\text{trun}}^T\vU_{\text{trun}}$ is
\begin{equation}
\begin{bmatrix}
\frac{1}{K} & \frac{1-\delta_2}{K} & \frac{1-\delta_3}{K} & \cdots & \frac{1-\delta_{K}}{K}\\
\frac{1-\delta_2}{K} & \frac{(1-\delta_2)^2+\delta_2^2}{K} & \frac{(1-\delta_2)(1-\delta_3)}{K}& \cdots & \frac{(1-\delta_2)(1-\delta_K)}{K}\\
\frac{1-\delta_3}{K} & \frac{(1-\delta_2)(1-\delta_3)}{K} & \frac{(1-\delta_3)^2+\delta_3^2}{K} & \cdots & \frac{(1-\delta_2)(1-\delta_K)}{K}\\
\vdots & \vdots & \vdots & \cdots & \vdots \\
\frac{1-\delta_K}{K} & \frac{(1-\delta_K)(1-\delta_2)}{K} & \frac{(1-\delta_K)(1-\delta_3)}{K}& \cdots & \frac{(1-\delta_K)^2+\delta_K^2}{K}\\
\end{bmatrix}
\end{equation}
We can apply row-wise Gaussian elimination without changing the eigenvalues of $\vU_{\text{trun}}^T\vU_{\text{trun}}$ and get the following equivalent form:
\begin{equation}
\begin{bmatrix}
\frac{1}{K} & \frac{1-\delta_2}{K} & \frac{1-\delta_3}{K} & \cdots & \frac{1-\delta_{K}}{K}\\
0 & \frac{\delta_2^2}{K} & 0& \cdots & 0\\
0 & 0 & \frac{\delta_3^2}{K} & \cdots & 0\\
\vdots & \vdots & \vdots & \cdots & \vdots \\
0 & 0 & 0& \cdots & \frac{\delta_K^2}{K}\\
\end{bmatrix}.
\end{equation}
Clearly, the smallest eigenvalue for the above matrix is $\min_i \delta_i^2/K$, which can be arbitrarily close to 0 with small enough $\delta_i$'s. Thus, $\lambda_{\text{min}}$ can be arbitrarily close to 0.
\end{proof}
\end{lemma}
Next,}
We give an algorithm to learn an $\varepsilon$-optimal policy for $\cM_t$. The algorithm is presented in Algorithm \ref{alg:case2}. For every $t>0$, we first take samples to construct $P_t$'s empirical estimation $\widehat{P}_t$. Then we find a vector $\widehat{C}_t\in\Delta^K$ such that $\vU_{\text{trun}}\widehat{C}_t\approx \widehat{P}_t$. Next, we use $\widehat{C}_t$ to form a model $\widetilde{\cM}_t\in\text{conv}(\mathbb{M})$ as an approximation to the true $\cM_t$. Finally, the algorithm returns a policy which is $\varepsilon/2$-optimal policy for $\widetilde{\cM_t}$. We will show in the proceeding that the output policy is $\varepsilon$-optimal for $\cM_t$.
\cut{This process is illustrated in Figure \ref{fig:alg} from a model perspective.
\begin{figure}[htbp!]
  \centering
  \includegraphics[width=.5\textwidth]{Algorithmpng}\\
  \caption{An illustration for Algorithm \ref{alg:case2}. The black dot is $\cM_t$, the blue dot is the empirical approximation $\widehat{\cM}_t$, the green dot is the projection of $\widehat{\cM}_t$ to the hyperplane spanned by $\mathbb{M}$, and the red dot is the projection of $\cM_{\text{proj}}$ to the convex hull. The solid black line represents empirical error, the green line represents optimization error due to inexact solving, and the red line represents the distance between the real model and the model we use to produce a policy.}
  \label{fig:alg}
\end{figure}}
\begin{algorithm}
  \caption{Transfer RL with a Convex Model Space}
  \label{alg:case2}
  \begin{algorithmic}[1]
  \State \textbf{Input:} $\mathbb{M}:=\{\cM^1,\cM^2,\dots,\cM^K\}, \{(s'_k,a'_k)\}_{k=1}^K, \text{ a generative model } $\texttt{GM}$ ,  \varepsilon>0, \delta>0,$ a zero vector $\widehat{P}_t\in\RR^{K|\cS|}, L=\lceil\frac{432K\lambda_{\text{max}}}{\varepsilon^2(1-\gamma)^4\lambda_{\text{min}}^2}\log(\frac{1+K|\cS|}{\delta})\rceil$.
  \While {$t>0$}
    \State $i\gets 1$, $\widehat{P}_t\gets 0$;
    \While {$i\leq L$}
     \State Sample $j\in[K]$ uniformly;
     \State Sample $(s,r)\gets \texttt{GM}(s'_j,a'_j)$;
     \State Increment the $n$th coordinate in $\widehat{P}_t$ by 1, where $n=(j-1)|\cS|+s$;
     \State $i\gets i+1;$
     \EndWhile
     \State Normalize $\widehat{P}_t$: $\widehat{P_t}\gets \frac{1}{L}\widehat{P_t}$;
     \State Calculate $\widehat{C}_t:=\Proj_{\Delta^K}(\vU_{\text{trun}}^\top\vU_{\text{trun}})^{-1}\vU_{\text{trun}}^\top\widehat{P}_t$ ($\vU_{\text{trun}}$ as defined in Equation \eqref{eq:Utrun});
     \State Formulate $\widetilde{\cM}_t:=(\cS,\cA,\sum_{i=1}^K c_t^K \vP^k, \sum_{k=1}^K c_t^KR^k, \gamma)$, where $[c_t^1, c_t^2, \dots, c_t^K]^\top=\widehat{C}_t;$
     \State Run any planning algorithm and get an $\varepsilon/2$-optimal policy $\pi_t$ for $\widetilde{\cM}_t$.
     \State \textbf{Output:} $\pi_t$.
\EndWhile
\end{algorithmic}
\end{algorithm}

Our proof consists of two steps: 1. we show in Lemma \ref{lemma:closemdp} that if the convex coefficients of two MDPs $\cM, \widehat{\cM} \in \text{conv}(\mathbb{M})$ are close under $\|\cdot\|_2$, then an $\varepsilon$-optimal policy $\pi$ for $\widehat{\cM}$ is also nearly optimal for $\cM$; 2. we show in Lemma \ref{lemma:cbound} that the convex coefficients of $\widetilde{\cM}_t$ and $\cM_t$ are close under $\|\cdot\|_2$.

\begin{lemma}\label{lemma:closemdp}
Suppose for MDPs $\cM:=(\cS,\cA,\vP,R,\gamma)$ and $\widehat{\cM}:=(\cS,\cA,\widehat{\vP},\widehat{R},\gamma)$, there exists two vectors $C:=[c_1,c_2,\dots,c_K]\in\Delta^K$ and $D:=[d_1,d_2,\dots,d_K]\in\Delta^K$ such that
\begin{align}
\vP&=\sum_{k=1}^K c_k \vP^k,\quad  R=\sum_{k=1}^K c_k R^k;\\
\widehat{\vP} &=\sum_{k=1}^K d_k \vP^k, \quad \widehat{R} = \sum_{k=1}^K d_k R^k.
 \end{align}
If $\|C-D\|_2\leq \alpha$, then an $\varepsilon'$-optimal policy for $\widehat{\cM}$ is $\bigg(\varepsilon'+6\alpha\sqrt{K}/(1-\gamma)^2\bigg)$-optimal for $\cM$.
\begin{proof}
Denote by $V^*$ and $\widehat{V}^*$ the optimal value vectors for $\cM$ and $\widehat{\cM}$, respectively. Given an $\varepsilon'$-optimal policy $\pi$ for $\widehat{\cM}$, we denote by $V^{\pi}$ and $\widehat{V}^{\pi}$ the value vectors following $\pi$ in $\cM$ and $\widehat{\cM}$, respectively. Then, by triangle inequality, we first have that
\cut{\begin{align}
Q^*-Q^{\pi} &\leq \|Q^{\pi}-\widehat{Q}^{\pi}\|+\|\widehat{Q}^{\pi}-\widehat{Q}^*\|+\|\widehat{Q}^*-Q^*\|\\
&\leq \|Q^{\pi}-\widehat{Q}^{\pi}\|+\|\widehat{Q}^*-Q^*\|+\varepsilon'
\end{align}}
\begin{align}
\|V^{\pi}-V^*\|_{\infty} &\leq \|V^{\pi}-\widehat{V}^{\pi}\|_{\infty}+\|\widehat{V}^{\pi}-\widehat{V}^*\|_{\infty}+\|\widehat{V}^*-V^*\|_{\infty}\\
&\leq \|V^{\pi}-\widehat{V}^{\pi}\|_{\infty}+\varepsilon'+\|\widehat{V}^*-V^*\|_{\infty}\label{eq:valuebound}
\end{align}
\cut{\begin{align}
Q^{\pi} &= R + \gamma P^{\pi} Q^{\pi}\\
\widehat{Q}^{\pi} &= \widehat{R} + \gamma \widehat{P}^{\pi} \widehat{Q}^{\pi}.
\end{align}}
To bound $\|V^{\pi}-\widehat{V}^{\pi}\|_{\infty}$, we notice that from Equation \eqref{eq:Vpi},
\cut{\begin{align}
V^{\pi}(s) &= r(s,\pi(s)) + \gamma P(\cdot|s,\pi(s)) V^{\pi}=r(s,\pi(s))+\gamma \big(\sum_{i=1}^K c_i P^i(\cdot|s,\pi(s))\big)V^{\pi}\\
\widehat{V}^{\pi}(s) &= \widehat{r}(s,\pi(s)) + \gamma \widehat{P}(\cdot|s,\pi(s)) \widehat{V}^{\pi}=\widehat{r}(s,\pi(s))+\gamma \big(\sum_{i=1}^K d_i P^i(\cdot|s,\pi(s))\big)\widehat{V}^{\pi}\\
\end{align}
Let $R^{\pi}:=[r(s_1,\pi(s_1));r(s_2,\pi(s_2));\dots;R(s_{|\cS|},\pi(s_{|\cS|}))]$, $\widehat{R}^{\pi}:=[\widehat{R}(s_1,\pi(s_1));\widehat{R}(s_2,\pi(s_2));\dots;\widehat{R}(s_{|\cS|},\pi(s_{|\cS|}))]$, $P^{\pi}:=[p(\cdot|s_1,\pi(s_1)); p(\cdot|s_2,\pi(s_2));\dots;p(\cdot|s_{|\cS|},\pi(s_{|\cS|}))]$, and $\widehat{P}^{\pi}:=[\widehat{p}(\cdot|s_1,\pi(s_1)); \widehat{p}(\cdot|s_2,\pi(s_2));\dots;\widehat{p}(\cdot|s_{|\cS|},\pi(s_{|\cS|}))]$. Then we have that}
\begin{align}
    V^{\pi}=(\vI-\gamma \vP^{\pi})^{-1} R^{\pi}, \quad \widehat{V}^{\pi}=(\vI-\gamma \widehat{\vP}^{\pi})^{-1} \widehat{R}^{\pi}.
\end{align}
\cut{\begin{align}
\|Q^{\pi}-\widehat{Q}^{\pi}\|&= \|(I-\gamma P^{\pi})^{-1} R - (I-\gamma \widehat{P}^{\pi})^{-1} \widehat{R}\|\\
&= \|\sum_{n=0}^{\infty}\gamma^n\big((P^{\pi})^n R-(\widehat{P}^{\pi})^n\widehat{R}\big)\|\\
&\leq \|(I-\gamma P^{\pi})^{-1} R - (I-\gamma \widehat{P}^{\pi})^{-1} R\| + \beta/(1-\gamma)\\
&= \|(\vI-\gamma \vP^{\pi})^{-1}(\gamma P^{\pi}-\gamma \widehat{P}^{\pi})(I-\gamma \widehat{P}^{\pi})^{-1} R\| + \beta/(1-\gamma)\\
&\leq \frac{\gamma\beta}{(1-\gamma)^2} + \frac{\beta}{1-\gamma}
\end{align}}

By Gershgorin Circle Theorem \citep{gershgorin1931uber}, the absolute values of all eigenvalues of $\gamma \vP^{\pi}$ are strictly smaller than 1. So $(\vI-\gamma \vP^{\pi})^{-1}=\vI+\sum_{n=1}^\infty \big(\gamma\vP^{\pi}\big)^n$. Based on this, it holds that
\begin{align}
\|V^{\pi}-\widehat{V}^{\pi}\|_{\infty}&= \left\|(\vI-\gamma \vP^{\pi})^{-1} R^{\pi}-(\vI-\gamma \widehat{\vP}^{\pi})^{-1} \widehat{R}^{\pi}\right\|_{\infty} \\
&= \left\|R^{\pi}-\widehat{R}^{\pi} + \sum_{n=1}^{\infty} \gamma^n \Big[(\vP^{\pi})^n R^{\pi}-(\widehat{\vP}^{\pi})^n \widehat{R}^{\pi}\Big]\right\|_{\infty}\\
&\leq \|R^{\pi}-\widehat{R}^{\pi}\|_{\infty}+\sum_{n=1}^{\infty} \gamma^n \Big[\left\|\big((\vP^{\pi})^n-(\widehat{\vP}^{\pi})^n\big) R^{\pi}\right\|_{\infty} + \left\|(\widehat{\vP}^{\pi})^n(R^{\pi}-\widehat{R}^{\pi})\right\|_{\infty}\Big]\\
&\leq \|R^{\pi}-\widehat{R}^{\pi}\|_{\infty}+\sum_{n=1}^{\infty} \gamma^n \left[\left\|(\vP^{\pi}-\widehat{\vP}^{\pi})\Big(\sum_{i=0}^{n-1} (\vP^{\pi})^i(\widehat{\vP}^{\pi})^{n-1-i}\Big) R^{\pi}\right\|_{\infty} + \|R^{\pi}-\widehat{R}^{\pi}\|_{\infty}\right]\\
&= \frac{\|R^{\pi}-\widehat{R}^{\pi}\|_{\infty}}{1-\gamma}+\sum_{n=1}^{\infty} \gamma^n \Big[\|(\vP^{\pi}-\widehat{\vP}^{\pi})R_n\|_{\infty}\Big]\label{eq:ineq}
\end{align}
where $R_n:=\sum_{i=0}^{n-1} (\vP^{\pi})^i(\widehat{\vP}^{\pi})^{n-1-i}R^{\pi}\in[0,n]^{|\cS|}$. The third line is by triangle inequality and the fourth line is due to that $(\widehat{\vP}^{\pi})^n$ is a transition matrix.
For the first term in Equation \eqref{eq:ineq}, we have that
\begin{align}
\|R^{\pi}-\widehat{R}^{\pi}\|_{\infty}&=\max_{s\in\cS}\left|P(\cdot|s,\pi(s))^\top R(s,\pi(s),\cdot)-\widehat{P}(\cdot|s,\pi(s))^\top \widehat{R}(s,\pi(s),\cdot)\right|\\
&\leq \max_{s\in\cS}\left| P(\cdot|s,\pi(s))^\top R(s,\pi(s),\cdot)-\widehat{P}(\cdot|s,\pi(s))^\top R(s,\pi(s),\cdot)\right|\\
&\quad+\max_{s\in\cS}\left|\widehat{P}(\cdot|s,\pi(s))^\top R(s,\pi(s),\cdot) - \widehat{P}(\cdot|s,\pi(s))^\top \widehat{R}(s,\pi(s),\cdot)\right|\\
&=\max_{s\in\cS}\left|\sum_{k=1}^K (c_k-d_k) P^k(\cdot|s,\pi(s))^\top R(s,\pi(s), \cdot)\right|+\max_{s\in\cS}\left|\sum_{k=1}^K(c_k-d_k) \widehat{P}(\cdot|s,\pi(s))^\top R^k(s,\pi(s), \cdot)\right|\\
&\leq 2\|C-D\|_1\leq 2\sqrt{K}\|C-D\|_2,\label{eq:Rpi}
\end{align}
where the second line is by triangle inequality and the last line is due to $R^k(s,\pi(s),\cdot)\in[0,1]^{|\cS|}$. For the second term in Equation \eqref{eq:ineq}, observe that
\begin{align}
\|(\vP^{\pi}-\widehat{\vP}^{\pi})R_n\|_{\infty}&=\max_{s\in\cS} \left|\sum_{k=1}^K (c_k-d_k)P^k(\cdot|s,\pi(s))^\top R_n\right|\\
&\leq \|C-D\|_1 \|R_n\|_{\infty}\leq n\sqrt{K}\|C-D\|_2.\label{eq:pn}
\end{align}
Combining \eqref{eq:Rpi} and \eqref{eq:pn}, we have the following result:
\begin{align}
\eqref{eq:ineq} &\leq 2\alpha\sqrt{K}/(1-\gamma) + \alpha \sqrt{K} \gamma/(1-\gamma)^2\leq 3\alpha\sqrt{K}/(1-\gamma)^2.
\end{align}
To bound $\|\widehat{V}^*-V^*\|_{\infty}$,
let $\pi^*$ and $\widehat{\pi}^*$ be optimal policies for $\cM$ and $\widehat{\cM}$, respectively. Then it holds that
\begin{align}
    \widehat{V}^*-V^* \leq \|\widehat{V}^{\widehat{\pi}^*}-V^{\widehat{\pi}^*}\|_{\infty}, \quad V^* -\widehat{V}^* \leq \|V^{\pi^*}-\widehat{V}^{\pi^*}\|_{\infty}.
\end{align}
Following the same steps as before, we have $\|\widehat{V}^{\widehat{\pi}^*}-V^{\widehat{\pi}^*}\|_{\infty}\leq 3\alpha\sqrt{K}/(1-\gamma)^2$ and $ \|V^{\pi^*}-\widehat{V}^{\pi^*}\|_{\infty}\leq 3\alpha\sqrt{K}/(1-\gamma)^2$. Therefore, $\|\widehat{V}^*-V^*\|_{\infty}\leq 3\alpha\sqrt{K}/(1-\gamma)^2$. The desired result is obtained.
\end{proof}
\end{lemma}

\begin{lemma}\label{lemma:cbound}
 Let $C_t$ be the solution of Equation \eqref{eq:realC}. Then in Algorithm \ref{alg:case2}, for each $t>0$, $\|\widehat{C}_t-C_t\|_2\leq (1-\gamma)^2\varepsilon/(12\sqrt{K})$ with probability at least $1-\delta$.
\begin{proof}
As defined in Equation \eqref{eq:Utrun}, $\|P_t\|_1=1$. Therefore, $P_t$ can be taken as a distribution. The samples we take in Algorithm \ref{alg:case2} are indeed all i.i.d. samples following $P_t$. Therefore, $\widehat{P}_t$ is an empirical approximation of $P_t$ with $L$ samples. By Bernstein inequality for matrices \citep[Theorem 6.1.1]{tropp2015introduction}, we have that   $$\mathbb{P}\Big(\|\widehat{P}_t-P_t\|_2\leq \epsilon \Big)\geq 1-(1+K|\cS|)\exp \Big(\frac{-L^2\epsilon^2}{2L + 2L/3}\Big)\geq 1-(1+K|\cS|)\exp\Big(\frac{-L\epsilon^2}{3}\Big).$$
Taking $\epsilon=\frac{\lambda_{\text{min}}\varepsilon(1-\gamma)^2}{12\sqrt{\lambda_{\text{max}}}\sqrt{K}}$, when $L=\lceil\frac{432K\lambda_{\text{max}}}{\varepsilon^2(1-\gamma)^4\lambda_{\text{min}}^2}\log(\frac{1+K|\cS|}{\delta})\rceil$, we have that
$$
\mathbb{P}\Big(\|\widehat{P}_t-P_t\|_2\leq \frac{\lambda_{\text{min}}\varepsilon(1-\gamma)^2}{12\sqrt{\lambda_{\text{max}}}\sqrt{K}}\Big)\geq 1- \delta.
$$
Based on this, it holds that with probability at least $1-\delta$,
\begin{align}
    \|\widehat{C}_t-C_t\|_2 &= \| \Proj_{\Delta^K}\big((U_{\text{trun}}^\top U_{\text{trun}})^{-1}U_{\text{trun}}^\top  \widehat{P}_t\big)- (U_{\text{trun}}^\top U_{\text{trun}})^{-1}U_{\text{trun}}^\top P_t\|_2\\
    &\leq \|(U_{\text{trun}}^\top U_{\text{trun}})^{-1}U_{\text{trun}}^\top \widehat{P}_t- (U_{\text{trun}}^\top U_{\text{trun}})^{-1}U_{\text{trun}}^\top P_t\|_2\\
& \leq \frac{\sqrt{\lambda_{\text{max}}}}{\lambda_{\text{min}}}\|\widehat{P}_t-P_t\|_2,
\end{align}
where the second line is due to that $\Delta^K$ is a convex set and $C_t\in\Delta^K$. Therefore, $\|\widehat{C}_t-C_t\|_2\leq \frac{\varepsilon(1-\gamma)^2}{12\sqrt{K}}$ with probability at least $1-\delta$.
\end{proof}
\end{lemma}

Combining Lemma \ref{lemma:closemdp} and \ref{lemma:cbound}, and the fact that $\pi_t$ is $\varepsilon/2$-optimal for $\widetilde{\cM}_t$, we have the following result.

\begin{proposition}\label{prop:scforconv}
Let $\varepsilon>0$ and $\delta>0$. Under Assumption \ref{ass:convex} and \ref{ass:fullrank}, for any $t>0$, with probability at least $1-\delta$, Algorithm \ref{alg:case2} returns an $\varepsilon$-optimal policy for $\cM_t$ with samples
\begin{align}
    \cO\bigg(\frac{K}{\varepsilon^2(1-\gamma)^4}\log(\frac{1+K|\cS|}{\delta})\bigg).
\end{align}
\end{proposition}

The sample complexity in Proposition \ref{prop:scforconv} is irrelevant with $|\cS|$ which makes it significantly smaller than the lower bound $\Omega\big(\frac{N}{\varepsilon^2(1-\gamma)^3}\log(1/\delta)\big).$ Therefore, fast adaptation is achieved.

\cut{Using Lemma \ref{lemma:eigen}, we have the following result.
\begin{corollary}
Given $\varepsilon>0$ and $\delta>0$, in Case 2 with Assump. \ref{ass:fullrank}, for any $t>0$, the sample complexity for returning an $\varepsilon$-optimal policy for $M_t$ with probability at least $1-\delta$ is
\begin{align}
    \cO(\frac{1}{\varepsilon^2(1-\gamma)^4\lambda_{\text{min}}^2}\log(\frac{1+K|\cS|}{\delta})),
\end{align}
\end{corollary}}

\cut{\section{Time-varying Reinforcement Learning under New Assumptions}
In the previous section, we showed a sample complexity lower bound for any $(\varepsilon, \delta)$-correct RL algorithm to learn $M^t$, which indeed meets an upper bound of independent learning (i.e. without any information of previous models) as shown in \citealt{azar2013minimax}. This implies that when the accuracy requirement is high, i.e. when $\varepsilon$ is small enough, the information from an approximate model does not help reduce the sample complexity. There are several possible reasons:
\begin{itemize}
\item mean value being close or the transition probability being close does not imply a smaller range of transition samples. Thus, we are unable to have a tighter control on sample values.
\item Such a one-time information is not enough if one wants to try variance reduction technique, since the baseline value cannot be improved.
\end{itemize}

Due to the above reasons, we need to propose further assumptions on the time-varying RL problem such that the previous information can be effectively utilized and the sample complexity upper bound can be smaller than general scenarios.

\subsection{Finite MDPs}
The first assumption we propose is that all $M_t$ belongs to a finite set of MDPs $\cM:=\{M^1,M_2,\dots,M^m\}$, where $M^i:=(\cS,\cA,P^i,R^i,\gamma)$. Besides that $M^i$ and $M^j$ are $\beta-$close, we also have that any two MDPs in $\cM$ are $\alpha-$distant, where $\alpha>0$. At each time step $t$, $M_t=M^i$ with probability $p^i_t$, where $0<p_{\text{min}}\leq p^i_t\leq p_{\text{max}}<1$. Compared with the general setting, this new assumption provides more usable information and new ways of learning.
Depending on how much information we know about $\cM$, we separate the analysis in three cases:
\begin{enumerate}
\item For each $M^i$, we know $P^i$ and $R^i$.
\item For each $M^i$, we know an $\varepsilon/8$-optimal deterministic policy $\pi^i$;
\item We know nothing about the models except that they are $\alpha-$distant.
\end{enumerate}
Given $M_t$, we denote by $\cM(M_t)$ the model $M_t$ is inside of $\cM$.
\subsubsection{Case 1}
In this case, we only need to identify $\cM(M_t)$, then return a corresponding $\varepsilon$-optimal policy. Since any two models are $\alpha-$distant, given $M^i$ and $M^j$, there exists a state-action-state tuple $(s_{(i,j)}, a_{(i,j)}, s'_{(i,j)})$ such that
\begin{equation}\label{eq:alpha}
|P^i(s'_{(i,j)}\vert s_{(i,j)}, a_{(i,j)})-P^j(s'_{(i,j)}\vert s_{(i,j)}, a_{(i,j)})|\geq\alpha.
\end{equation}
Based on this, we can call $\cS\cO$ with input $(s_{(i,j)}, a_{(i,j)})$ enough times to construct an empirical probability and use concentration inequality to rule out wrong candidates. Specifically, given any state-action-state pair $(s,a,s')$, if calling $\cS\cO$ for $K$ times with input $(s,a)$, the empirical probability is defined as:
\begin{equation}\label{eq:Pbar}
\bar{P}(s' \vert s, a):=\frac{\text{\# getting } s' \text { as next state when calling } \cS\cO(s, a)}{K}.
\end{equation}
The Algorithm is presented in Algorithm \ref{alg:case1}.

\vspace*{.5cm}
\begin{algorithm}
  \caption{Case 1 of Finite MDPs}
  \label{alg:case1}
  \begin{algorithmic}[1]
  \State \textbf{Input:} $\cM, \varepsilon>0, \delta>0, K=\lceil\frac{32}{\alpha^2}\log(m/\delta)\rceil, T>0$.
  \State For each $M^i$, compute an $\varepsilon$-optimal policy $\pi^i$ with any planning algorithm
  \State Select $m(m-1)/2$ tuples $(s_{(i,j)}, a_{(i,j)}, s'_{(i,j)})_{1\leq i<j\leq m}$ such that \eqref{eq:alpha} holds.
  \While {$t < T$}
 \State $i\gets 1;~j\gets 2$.
 \While {$j\leq m$}
 \State Take $K$ transition samples from $\cS\cO$ with input $(s_{(i,j)},a_{(i,j)})$;
 \State Compute $\bar{P}(s'_{(i,j)} \vert s_{(i,j)}, a_{(i,j)})$ as in \eqref{eq:Pbar}.
 \If {$|\bar{P}(s'_{(i,j)} \vert s_{(i,j)}, a_{(i,j)})-P^i(s'_{(i,j)} \vert s_{(i,j)}, a_{(i,j)})|\leq \alpha/8$}
 \State $j=j+1$;
 \ElsIf {$|\bar{P}(s'_{(i,j)} \vert s_{(i,j)}, a_{(i,j)})-P^j(s'_{(i,j)} \vert s_{(i,j)}, a_{(i,j)})|\leq \alpha/8$}
 \State $i=j; ~j=j+1$;
 \Else
 \State $i=j+1$;~ $j=j+2;$
 \EndIf
 \EndWhile
 \State \textbf{Output:} $\pi^i$ for $M_t$. \label{lin:case1output}
\EndWhile
\end{algorithmic}
\end{algorithm}

Next, we show the correctness of Algorithm \ref{alg:case1} and calculate the sample complexity.

\begin{lemma}\label{lemma:case1}
Given $\varepsilon>0$ and $\delta>0$, in Algorithm \ref{alg:case1}, for each time step $t$, the output policy is $\varepsilon$-optimal for $M_t$ with probability at least $1-\delta$.
\begin{proof}
We only need to show that at Line \ref{lin:case1output} of Algorithm \ref{alg:case1}, index $i$ labels the correct model for $M_t$.
Denote by $P_t$ the transition probability of $M_t$. By Hoeffding's inequality,
\begin{align}\label{eq:hoeffdingcase1}
\mathbb{P}(|\bar{P}(s'_{(i,j)} \vert s_{(i,j)}, a_{(i,j)})-P_t(s'_{(i,j)} \vert s_{(i,j)}, a_{(i,j)})|\leq \frac{\alpha}{8})\geq 1-\exp(-2K\frac{\alpha^2}{64})\geq 1-\frac{\delta}{n}
\end{align}
Due to Equations \eqref{eq:alpha} and \eqref{eq:hoeffdingcase1}, at least one model among $M^i$ and $M^j$ will be at least $\alpha/4-$distant from $\bar{P}(s'_{(i,j)} \vert s_{(i,j)}, a_{(i,j)})$, which makes it not a candidate with high probability. During the inner loop, at each iteration, we pass at least one model which is not $\cM(M_t)$ with probability at least $1-\delta/n$. In total, by union bound, the final left model, which is labelled by $i$, is the correct model for $M_t$ with probability at least $1-\delta$
\end{proof}
\end{lemma}

\begin{proposition}\label{prop:case1}
Given $\varepsilon>0$ and $\delta>0$, for case 1 of Finite MDPs, it takes $\cO(\frac{m}{\alpha^2}\log(m/\delta))$ samples each time to return an $\varepsilon$-optimal policy with probability at least $1-\delta$ for $M_t$.
\begin{proof}
By Lemma \ref{lemma:case1}, the sample complexity is $\cO((m-1)K)=\cO(\frac{m}{\alpha^2}\log(m/\delta))$.
\end{proof}
\end{proposition}

\subsubsection{Case 2}
For case 2, identifying $\cM(M_t)$ is impossible without model knowledge. Thus, we take a different approach by directly evaluating all policies under $M_t$, then select the one with the highest value. Given a deterministic policy $\pi$, the value vector $V^{\pi}\in \RR^{|\cS|}$ satisfies the following Bellman equation:
\begin{align}\label{eq:bellman}
V^{\pi}=R^{\pi}+\gamma P^{\pi}V^{\pi},
\end{align}
where $R^{\pi}_j=R(s_j,\pi(s_j))$ and $P^{\pi}_{jw}=P(s_w\vert s_j,\pi(s_j))$. Due to the Equation \eqref{eq:bellman}, we first approximate $P^{\pi}$ by sample average, then solve the corresponding linear system to obtain $V^{\pi}$. The algorithm is summarized in Algorithm \ref{alg:case2} and the complexity is shown in Prop. \ref{prop:case2}.

\begin{algorithm}[h]
  \caption{Case 2 of Finite MDPs}
  \label{alg:case2}
  \begin{algorithmic}[1]
  \State \textbf{Input:} $ \varepsilon>0, \delta>0, \Pi:=\{\pi^i\}_{i=1}^m, K=\lceil\frac{192m|\cS|^2\gamma^2}{(1-\gamma)^4\varepsilon^2}\log\big(\frac{(1+|\cS|)|\cS|^2}{\delta}\big)\rceil, T>0$.
  \While {$t < T$}
  \State $\Pi'\gets\Pi$;
  \For {$i=1;i\leq m ;i=i+1$}
  \For {$j=1;j\leq|\cS|;j=j+1$}
 \State Take $K$ transition samples from $\cS\cO$ with input $(s_j,\pi^i(s_j))$;
 \State Record reward $R(s_j,\pi^i(s_j))$ and compute $\bar{P}^{\pi^i}_{j,w}= \frac{\text{\# next state is }s_w}{K}$;
 \EndFor
 \State Compute $\bar{V}^{\pi^i} := (I-\gamma\bar{P}^{\pi^i})^{-1}R^{\pi^i};$
 \EndFor
 \For {$j=1;j\leq |\cS|;j=j+1$}\label{lin:forcase2}
 \State $\bar{V}^{\text{max}}_j\gets\max_{\pi\in\Pi'} \bar{V}^{\pi}_j$;
 \State Check all policies in $\Pi'$ and eliminate $\pi\in\Pi'$ if $\bar{V}^{\pi}_j<\bar{V}^{\text{max}}_j-\varepsilon/2$;
 \EndFor\label{lin:endforcase2}
 \State \textbf{Output:} Any policy from $\Pi'$.
\EndWhile
\end{algorithmic}
\end{algorithm}
\begin{lemma}\label{lemma:nbound}
Given two transition matrix $P^1$ and $P^2$, if $\|P^1-P^2\|_2\leq\epsilon$, then $\|P^1^n-P^2^n\|_2\leq n\epsilon$.
\begin{proof}
where the second inequality comes from
\begin{align}
\|P^1^n-P^2^n\|_2& \leq\|P^1-P^2\|_2\|P^1^{n-1}+P^1^{n-2}P^2+\dots+P^2^{n-1}\|_2\\
&\leq \|P^1-P^2\|_2 \Big(\sum_{j=0}^{n-1} \| (P^1^jP_2^{n-1-j}\|_2\Big)\\
&\leq \|P^1-P^2\|_2 \Big(\sum_{j=0}^{n-1} \| P^1\|^j_2\|P^2\|^{n-1-j}_2\Big)\leq n\epsilon.
\end{align}
\end{proof}
\end{lemma}

\begin{lemma}\label{lemma:case2}
Given $\varepsilon\in(0,1)$, $\delta\in(0,1)$, and $m$ $\varepsilon/8$-optimal policies for each $M^i$ respectively, in Algorithm \ref{alg:case2}, for each time step $t$, the output policy is $\varepsilon$-optimal for $M_t$ with probability at least $1-\delta$.
\begin{proof}
By Bernstein's inequality, we know that with $K:=\lceil\frac{192m|\cS|^2\gamma^2}{(1-\gamma)^4\varepsilon^2}\log\big(\frac{(1+|\cS|)|\cS|m}{\delta}\big)\rceil$ transition samples to approximate the $j$th row of $P^{\pi^i}$, the following holds
\begin{align}
\mathbb{P}(\|\bar P^{\pi^i}(j,\cdot)-P^{\pi^i}(j,\cdot)\|_2\leq \frac{(1-\gamma)^2}{8\gamma|\cS|}\varepsilon)\geq 1- (1+|\cS|)\exp\Big(\frac{-K(1-\gamma)^4\varepsilon^2}{192\gamma^2|\cS|^2}\Big)\geq 1-\delta/(|\cS|m).
\end{align}
By taking union bound over all rows, we further have that
\begin{align}
\mathbb{P}(\|\bar P^{\pi^i}-P^{\pi^i}\|_2 \leq \varepsilon':=\frac{(1-\gamma)^2}{8\gamma\sqrt{|\cS|}}\varepsilon) \geq 1- \delta/m.
\end{align}
Denote by $\bar{V}^{\pi^i}:= (I-\gamma\bar{P}^{\pi^i})^{-1}R^{\pi^i}$, then by Lemma \ref{lemma:nbound},
\begin{align}\label{eq:vbound}
\|\bar{V}^{\pi^i}-V^{\pi^i}\|_{\infty}&= \|(I-\gamma P^{\pi^i})^{-1}R^{\pi^i}-(I-\gamma\bar{P}^{\pi^i})^{-1}R^{\pi^i}\|_{\infty}\\
&=\|(I+\gamma\bar{P}^{\pi^i}+(\gamma\bar{P}^{\pi^i})^2+\dots)R^{\pi^i}-(I+\gamma P^{\pi^i} + (\gamma P^{\pi^i})^2+\dots)R^{\pi^i}\|_{\infty}\\
&=\|\gamma(\bar{P}^{\pi^i}-P^{\pi^i})R^{\pi^i} + \gamma^2((\bar{P}^{\pi^i})^2-(P^{\pi^i})^2)R^{\pi^i} + \dots\|_{\infty}\\
&\leq \|\gamma(\bar{P}^{\pi^i}-P^{\pi^i})R^{\pi^i}\|_{\infty} + \|\gamma^2((\bar{P}^{\pi^i})^2-(P^{\pi^i})^2)R^{\pi^i}\|_{\infty} + \dots\\
&\leq \Big(\gamma\|\bar{P}^{\pi^i}-P^{\pi^i}\|_2+\gamma^2\|(\bar{P}^{\pi^i})^2-(P^{\pi^i})^2\|_{2}+\dots\Big)\|R^{\pi^i}\|_2\\
&\leq (\gamma\varepsilon' + 2\gamma^2\varepsilon' + \dots + n\gamma^n\varepsilon' \dots) \sqrt{|\cS|}\\
&\leq \frac{\sqrt{|\cS|}\gamma \varepsilon'}{(1-\gamma)^2}=\varepsilon/8 \text{ with prob. at least $1-\delta/m$}
\end{align}
Next, we show that after Line \ref{lin:endforcase2}, $\Pi'$ only contains $\varepsilon$-optimal policies for $M_t$. Define the following events:
 \begin{align}
 \cE &:=\{ \text{for all } i\in \{1,2,\dots,m\}, \|\bar{V}^{\pi^i}-V^{\pi^i}\|_{\infty} \leq \varepsilon/8\}\\
 \cE_j &:=\{\text{After } j\text{th iteration of Line \ref{lin:forcase2}}, \Pi' \text{ contains  an } \varepsilon/8-\text{optimal policy for } M_t\}.
  \end{align}
By Eqn \eqref{eq:vbound}, $\mathbb{P}(\cE)\geq 1-\delta$. Suppose right after the $j-1$th iteration of Line \ref{lin:forcase2}, $\Pi_{j-1}'=\{\pi^{j_1},\pi^{j_2},\dots, \pi^{j_k}\}$, where $k\leq m$, and that $\bar{V}^{\pi^{j_1}}_j = \bar{V}^{\text{max}}_j$. If a policy $\pi\in\Pi'_{j-1}$ satisfies that $\bar{V}^{\pi}_j<\bar{V}^{\text{max}}_j-\varepsilon/2$, then on $\cE$, we have that
\begin{align}
V^*_j-V^{\pi}_j&=V^*_j-V^{\pi^{j_1}}_j+V^{\pi^{j_1}}_j-\bar{V}^{\pi^{j_1}}_j+\bar{V}^{\pi^{j_1}}_j-\bar{V}^{\pi}_j+\bar{V}^{\pi}_j-V^{\pi}_j\\
&> 0 -\varepsilon/8 + \varepsilon/2 - \varepsilon/8 >\varepsilon/8.
\end{align}
The above equation tells that when $\cE$ happens, if a policy is eliminated from $\Pi'$, then it is not an $\varepsilon/8-optimal$ policy of $M_t$. It further implies that $\mathbb{P}(\cE_j|\cE)=1$, i.e. $\cE\subset \cE_j$ for any $j\in\{1,2,\dots,|\cS|\}$. If a policy $\pi\in\Pi'_{j-1}$ satisfies that $\bar{V}^{\pi}_j\geq \bar{V}^{\text{max}}_j-\varepsilon/2$, then on $\cE\cap\cE_{j-1}$, we have that
\begin{align}
V^*_j - V^{\pi}_j &=V^*_j-V^{\pi^{j_1}}_j+V^{\pi^{j_1}}_j-\bar{V}^{\pi^{j_1}}_j+\bar{V}^{\pi^{j_1}}_j-\bar{V}^{\pi}_j+\bar{V}^{\pi}_j-V^{\pi}_j\\
&\leq \max_{\pi'\in\Pi'} V^{\pi'}_j+\varepsilon/8-V^{\pi^{j_1}}_j+V^{\pi^{j_1}}_j-\bar{V}^{\pi^{j_1}}_j+\bar{V}^{\pi^{j_1}}_j-\bar{V}^{\pi}_j+\bar{V}^{\pi}_j-V^{\pi}_j\\
&\leq \max_{\pi'\in\Pi'} \bar{V}^{\pi'}_j +3\varepsilon/8 -V^{\pi^{j_1}}_j+V^{\pi^{j_1}}_j-\bar{V}^{\pi^{j_1}}_j+\bar{V}^{\pi^{j_1}}_j-\bar{V}^{\pi}_j+\bar{V}^{\pi}_j-V^{\pi}_j\\
&=\bar{V}^{\pi^{j_1}}_j + 3\varepsilon/8 -V^{\pi^{j_1}}_j+V^{\pi^{j_1}}_j-\bar{V}^{\pi^{j_1}}_j+\bar{V}^{\pi^{j_1}}_j-\bar{V}^{\pi}_j+\bar{V}^{\pi}_j-V^{\pi}_j\\
&\leq 3\varepsilon/8 + \varepsilon/2 + \varepsilon/8 <\varepsilon
\end{align}
where the second inequality is due to the existence of an $\varepsilon/8$-optimal policy. This implies that if a policy $\pi$ can last till the end of scanning, then for any state $s_j, V^*_j-V^{\pi}_j\leq\varepsilon$, so it is $\varepsilon$-optimal. Thus, once $\cE$ happening, $\Pi'$ only contains $\varepsilon$-optimal policies for $M_t$.
\end{proof}
\end{lemma}

\begin{proposition}\label{prop:case2}
Given $\varepsilon>0$ and $\delta>0$, for Algorithm \ref{alg:case2}, it takes $\cO(\frac{m^2|\cS|^3\gamma^2}{(1-\gamma)^4\varepsilon^2}\log\big(\frac{(1+|\cS|)|\cS|m}{\delta}\big))$ samples each time to return an $\varepsilon$-optimal policy with probability at least $1-\delta$ for $M_t$.
\end{proposition}

\subsubsection{Case 3}
In Case 3, a similar approach is taken as in Case 1. We first classify the model of $M_t$ then gathering the corresponding samples for training. See the algorithm in Algorithm \ref{alg:case3}.

\begin{algorithm}
  \caption{Case 3 of Finite MDPs}
  \label{alg:case3}
  \begin{algorithmic}[1]
  \State \textbf{Input:} $\varepsilon>0, \delta>0, K=\lceil\frac{192}{\alpha^2}\log(|\cS||\cA|/\delta)\rceil, L = \lceil\frac{|\cS||\cA|}{(1-\gamma)^3\varepsilon^2}\log(\frac{|\cS||\cA|}{\delta})/ K\rceil , L^i=0 ~(i\in{1,2,\dots,m}), T>0, $ a list of MDPs $\cM'\gets\emptyset$, and a policy $\pi^0$ with $\pi^0(s_i)=a_1$. 
 \While {$t < T$}
  \For {$i\gets1; i\leq |\cS|; i\gets i+1$}\label{lin:part1start}
  \For {$j\gets1; j\leq |\cA|; j\gets j+1$}
 \State Take $K$ transition samples from $\cS\cO$ with input $(s_i,a_j)$;
 \State Record reward $R(s_i,a_j)$ and compute $\bar{P}_{i,j,w}= \frac{\text{\#the next state is }s_w}{K}, w\in{1,\dots,|\cS|}$.
 \EndFor
 \EndFor
 \State Define $\bar{M}_t:=(\cS,\cA,\bar{P},R,\gamma)$ and let $q\gets 0$.\label{lin:part1end}
 \For {$p\gets 1;p\leq |\cM'|; p\gets p+1$} \label{lin:part2start}
 \If {the $p$th model $M^p$ in $\cM'$ is $\alpha/8-$close to $\bar{M_t}$ }
 \State $q\gets p;$
 \If {$L^q<N$}
 \State $M^q\gets (\cS,\cA,\frac{L^q}{L^q+1}P^q+\frac{1}{L^q+1}\bar{P}, R, \gamma)$;
 \ElsIf {$L^q=L$}
 \State Compute an $\varepsilon$-optimal policy $\pi^q$ of $M^q$ with any planning algorithm
 \EndIf
 \State $L^q\gets L^q+1$;
 \EndIf
 \EndFor
 \If {$q=0$ and $|\cM'|<m$}
 \State add $\bar{M_t}$ as the last element of $\cM'$ and let $L^{|\cM'|}=1$.
 \EndIf\label{lin:part2end}
 \State \textbf{Output:} $\pi^q$.
\EndWhile
\end{algorithmic}
\end{algorithm}

The algorithm can be separated into three parts: part 1 is from Line \ref{lin:part1start} to \ref{lin:part1end} where we construct an empirical approximation of $M_t$; part 2 is from Line \ref{lin:part2start} to \ref{lin:part2end} where we search over the empirical model collection $\cM'$ and check if there exists an old approximation which also corresponds to $\cM(M_t)$ and then merge them, if not, depending on the size of $\cM'$, we create a new model; part 3 is to output a policy. Next, we show that when $t>...$, the output policy is $\varepsilon$-optimal for $M_t$ with probability at least $1-\delta$.

\begin{lemma}\label{lemma:case3part1}
In Algorithm \ref{alg:case3}, for each time step $t$, the empirical approximation $\bar{M_t}$ is $\alpha/8-$close to $\cM(M_t)$ with probability at least $1-\delta$.
\begin{proof}
Denote by $P$ the transition probability of $\cM(M_t)$. By Bernstein's Inequality, for any state-action pair $(s_i,a_j)$, we have that
\begin{align}
\mathbb{P}(\|\bar{P}(\cdot\vert s_i,a_j)-P(\cdot\vert s_i,a_j)\|_2\leq \alpha/8) \geq 1-(1+|\cS|)\exp\big(\frac{-K\alpha^2}{192}\big)\geq 1-\frac{\delta}{|\cS||\cA|}.
\end{align}
By taking the union bound over all state-action pairs, we have that $\bar{M_t}$ is $\alpha/8-$close to $\cM(M_t)$ with probability at least $1-\delta$.
\end{proof}
\end{lemma}

Since we only take $K=\lceil\frac{192}{\alpha^2}\log(|\cS||\cA|/\delta)\rceil$ samples each time. To calculate an $\varepsilon$-optimal policy for $\cM(M_t)$, by Theorem \ref{thm:mdp}, we need to accumulate at least $L = \lceil\frac{|\cS||\cA|}{(1-\gamma)^3\varepsilon^2}\log(\frac{|\cS||\cA|}{\delta})/ K\rceil$ times. Next, we calculate the probability of successful classification.

Define event $\cE_t:=\{\text{for all }t'\leq t, M_{t'} \text{ is correctly classified}\}$. Here, being correctly classified means that if $\cM(M_{t_1})=\cM(M_{t_2})$, then $\bar{M}_{t_1}$ and $\bar{M}_{t_2}$ are merged in the same group in $\cM'$.

\begin{lemma}\label{lemma:case3part2}
$\mathbb{P}(\cE_t)\geq 1-t\delta$.
\begin{proof}
Since the models in $\cM$ are $\alpha-$distant, as long as all $\bar{M_{t'}}$ are $\alpha/8-$close to $\cM(M_{t'})$, $\cE_t$ will happen. By Lemma \ref{lemma:case3part1} and taking the union bound, we have the desired result.
\end{proof}
\end{lemma}

Next, we show that when $t$ is great enough, all models can collect samples for $N$ times with high probability. We first define a special integer $$T_0(\delta):=\min\{x\in \NN \vert x\geq \frac{L-1}{\log(1/(1-p_{\min}))}\log x + \frac{|\log(\delta/cm)|}{\log(1/(1-p_{\min}))}\}$$.
\begin{lemma}\label{lemma:case3part3}
When $t>T_0$, all models in $\cM$ have appeared for at least $L$ times with probability at least $1-\delta$.
\begin{proof}
Denote by $\cE^i$ the event $\{$ when $t>T_0$, $M^i\in\cM$ have appeared for at least $L$ times$\}$. Then for any $i\in{1,2,\dots,m}$, we have that
\begin{align}\label{eq:probET}
\mathbb{P}[\cE^i] &= 1-\sum_{n=0}^{L-1} {t\choose n}(p^i)^n(1-p^i)^{T_0-n}\geq 1-\sum_{n=0}^{N-1} {T_0\choose n}(p^i)^n(1-p^i)^{T_0-n}\\
&\geq 1-\sum_{n=0}^{L-1} \big(\frac{eT_0}{n}\big)^n (p^i)^n(1-p^i)^{T_0-n}\\
&\geq 1-(1-p_{\text{min}})^{T_0} \Big(\sum_{n=0}^{N-1} \big(\frac{ep_{\text{max}}T_0}{n(1-p_{\text{max}})}\big)^n\Big)\\
&\geq 1-c(1-p_{\text{min}})^{T_0} T_0^{N-1},\label{eq:probETLAST}
\end{align}
where $c:=L\Big(\frac{ep_{\text{max}}}{(1-p_{\text{max}})(L-1)}\Big)^{L-1}$. Thus, by the definition of $T_0$, we have that $\mathbb{P}[\cE^i]>1-\delta/m$ for any $i\in\{1,2,\dots,m\}$. Taking the union bound over all $\cE^i$, we get the desired result.
\end{proof}
\end{lemma}
\begin{proposition}\label{prop:case2}
Given $\varepsilon\in(0,1)$ and $\delta'\in(0,1)$, Let $\delta=\delta'/(T_0+1)$ in Algorithm \ref{alg:case3}, for time step $t>T_0$, the output policy is $\varepsilon$-optimal for $M_t$ with probability at least $1-\delta'$.
\begin{proof}
By Lemma \ref{lemma:case3part3}, we know that when $t>T_0$, with probability at least $1-\delta$, all models will appear at least $L$ times. By Lemma \ref{lemma:case3part2}, we know that with probability at least $1-T_0\delta$, all samples are classified correctly. Combining these two results, when $t>T_0$, with probability at least $1-(T_0+1)\delta = 1-\delta'$, the returned policy is $\varepsilon$-optimal for $M_t$.
\end{proof}
\end{proposition}

We compare the one-time sample complexity of three cases with the general setting:
\begin{center}
\begin{tabular}{  c | c }
\hline
\textbf{Setting} & \textbf{$N_t$}\\
\hline
General & $\cO(\frac{|\cS||\cA|}{(1-\gamma)^3\varepsilon^2}\log(|\cS||\cA|/\delta))$\\
\hline
Case 1 of Finite MDPs & $\cO(\frac{m}{\alpha^2}\log(m/\delta))$\\
\hline
Case 2 of Finite MDPs & $\cO(\frac{m^2|\cS|^3\gamma^2}{(1-\gamma)^4\varepsilon^2}\log\big((1+|\cS|)|\cS|m/\delta\big))$\\
\hline
Case 3 of Finite MDPs & $\cO(\frac{|\cS||\cA|}{\alpha^2}\log(|\cS||\cA|/\delta))$\\
\hline
\end{tabular}
\end{center}}

\bibliographystyle{apalike}
\bibliography{references}  

\end{document}

\cut{\section{Time-varying Reinforcement Learning under New Assumptions}
In the previous section, we showed a sample complexity lower bound for any $(\varepsilon, \delta)$-correct RL algorithm to learn $M^t$, which indeed meets an upper bound of independent learning (i.e. without any information of previous models) as shown in \citealt{azar2013minimax}. This implies that when the accuracy requirement is high, i.e. when $\varepsilon$ is small enough, the information from an approximate model does not help reduce the sample complexity. There are several possible reasons:
\begin{itemize}
\item mean value being close or the transition probability being close does not imply a smaller range of transition samples. Thus, we are unable to have a tighter control on sample values.
\item Such a one-time information is not enough if one wants to try variance reduction technique, since the baseline value cannot be improved.
\end{itemize}

Due to the above reasons, we need to propose further assumptions on the time-varying RL problem such that the previous information can be effectively utilized and the sample complexity upper bound can be smaller than general scenarios.

\subsection{Finite MDPs}
The first assumption we propose is that all $M_t$ belongs to a finite set of MDPs $\cM:=\{M^1,M_2,\dots,M^m\}$, where $M^i:=(\cS,\cA,P^i,R^i,\gamma)$. Besides that $M^i$ and $M^j$ are $\beta-$close, we also have that any two MDPs in $\cM$ are $\alpha-$distant, where $\alpha>0$. At each time step $t$, $M_t=M^i$ with probability $p^i_t$, where $0<p_{\text{min}}\leq p^i_t\leq p_{\text{max}}<1$. Compared with the general setting, this new assumption provides more usable information and new ways of learning.
Depending on how much information we know about $\cM$, we separate the analysis in three cases:
\begin{enumerate}
\item For each $M^i$, we know $P^i$ and $R^i$.
\item For each $M^i$, we know an $\varepsilon/8$-optimal deterministic policy $\pi^i$;
\item We know nothing about the models except that they are $\alpha-$distant.
\end{enumerate}
Given $M_t$, we denote by $\cM(M_t)$ the model $M_t$ is inside of $\cM$.
\subsubsection{Case 1}
In this case, we only need to identify $\cM(M_t)$, then return a corresponding $\varepsilon$-optimal policy. Since any two models are $\alpha-$distant, given $M^i$ and $M^j$, there exists a state-action-state tuple $(s_{(i,j)}, a_{(i,j)}, s'_{(i,j)})$ such that
\begin{equation}\label{eq:alpha}
|P^i(s'_{(i,j)}\vert s_{(i,j)}, a_{(i,j)})-P^j(s'_{(i,j)}\vert s_{(i,j)}, a_{(i,j)})|\geq\alpha.
\end{equation}
Based on this, we can call $\cS\cO$ with input $(s_{(i,j)}, a_{(i,j)})$ enough times to construct an empirical probability and use concentration inequality to rule out wrong candidates. Specifically, given any state-action-state pair $(s,a,s')$, if calling $\cS\cO$ for $K$ times with input $(s,a)$, the empirical probability is defined as:
\begin{equation}\label{eq:Pbar}
\bar{P}(s' \vert s, a):=\frac{\text{\# getting } s' \text { as next state when calling } \cS\cO(s, a)}{K}.
\end{equation}
The Algorithm is presented in Algorithm \ref{alg:case1}.

\vspace*{.5cm}
\begin{algorithm}
  \caption{Case 1 of Finite MDPs}
  \label{alg:case1}
  \begin{algorithmic}[1]
  \State \textbf{Input:} $\cM, \varepsilon>0, \delta>0, K=\lceil\frac{32}{\alpha^2}\log(m/\delta)\rceil, T>0$.
  \State For each $M^i$, compute an $\varepsilon$-optimal policy $\pi^i$ with any planning algorithm
  \State Select $m(m-1)/2$ tuples $(s_{(i,j)}, a_{(i,j)}, s'_{(i,j)})_{1\leq i<j\leq m}$ such that \eqref{eq:alpha} holds.
  \While {$t < T$}
 \State $i\gets 1;~j\gets 2$.
 \While {$j\leq m$}
 \State Take $K$ transition samples from $\cS\cO$ with input $(s_{(i,j)},a_{(i,j)})$;
 \State Compute $\bar{P}(s'_{(i,j)} \vert s_{(i,j)}, a_{(i,j)})$ as in \eqref{eq:Pbar}.
 \If {$|\bar{P}(s'_{(i,j)} \vert s_{(i,j)}, a_{(i,j)})-P^i(s'_{(i,j)} \vert s_{(i,j)}, a_{(i,j)})|\leq \alpha/8$}
 \State $j=j+1$;
 \ElsIf {$|\bar{P}(s'_{(i,j)} \vert s_{(i,j)}, a_{(i,j)})-P^j(s'_{(i,j)} \vert s_{(i,j)}, a_{(i,j)})|\leq \alpha/8$}
 \State $i=j; ~j=j+1$;
 \Else
 \State $i=j+1$;~ $j=j+2;$
 \EndIf
 \EndWhile
 \State \textbf{Output:} $\pi^i$ for $M_t$. \label{lin:case1output}
\EndWhile
\end{algorithmic}
\end{algorithm}

Next, we show the correctness of Algorithm \ref{alg:case1} and calculate the sample complexity.

\begin{lemma}\label{lemma:case1}
Given $\varepsilon>0$ and $\delta>0$, in Algorithm \ref{alg:case1}, for each time step $t$, the output policy is $\varepsilon$-optimal for $M_t$ with probability at least $1-\delta$.
\begin{proof}
We only need to show that at Line \ref{lin:case1output} of Algorithm \ref{alg:case1}, index $i$ labels the correct model for $M_t$.
Denote by $P_t$ the transition probability of $M_t$. By Hoeffding's inequality,
\begin{align}\label{eq:hoeffdingcase1}
\mathbb{P}(|\bar{P}(s'_{(i,j)} \vert s_{(i,j)}, a_{(i,j)})-P_t(s'_{(i,j)} \vert s_{(i,j)}, a_{(i,j)})|\leq \frac{\alpha}{8})\geq 1-\exp(-2K\frac{\alpha^2}{64})\geq 1-\frac{\delta}{n}
\end{align}
Due to Equations \eqref{eq:alpha} and \eqref{eq:hoeffdingcase1}, at least one model among $M^i$ and $M^j$ will be at least $\alpha/4-$distant from $\bar{P}(s'_{(i,j)} \vert s_{(i,j)}, a_{(i,j)})$, which makes it not a candidate with high probability. During the inner loop, at each iteration, we pass at least one model which is not $\cM(M_t)$ with probability at least $1-\delta/n$. In total, by union bound, the final left model, which is labelled by $i$, is the correct model for $M_t$ with probability at least $1-\delta$
\end{proof}
\end{lemma}

\begin{proposition}\label{prop:case1}
Given $\varepsilon>0$ and $\delta>0$, for case 1 of Finite MDPs, it takes $\cO(\frac{m}{\alpha^2}\log(m/\delta))$ samples each time to return an $\varepsilon$-optimal policy with probability at least $1-\delta$ for $M_t$.
\begin{proof}
By Lemma \ref{lemma:case1}, the sample complexity is $\cO((m-1)K)=\cO(\frac{m}{\alpha^2}\log(m/\delta))$.
\end{proof}
\end{proposition}

\subsubsection{Case 2}
For case 2, identifying $\cM(M_t)$ is impossible without model knowledge. Thus, we take a different approach by directly evaluating all policies under $M_t$, then select the one with the highest value. Given a deterministic policy $\pi$, the value vector $V^{\pi}\in \RR^{|\cS|}$ satisfies the following Bellman equation:
\begin{align}\label{eq:bellman}
V^{\pi}=R^{\pi}+\gamma P^{\pi}V^{\pi},
\end{align}
where $R^{\pi}_j=R(s_j,\pi(s_j))$ and $P^{\pi}_{jw}=P(s_w\vert s_j,\pi(s_j))$. Due to the Equation \eqref{eq:bellman}, we first approximate $P^{\pi}$ by sample average, then solve the corresponding linear system to obtain $V^{\pi}$. The algorithm is summarized in Algorithm \ref{alg:case2} and the complexity is shown in Prop. \ref{prop:case2}.

\begin{algorithm}[h]
  \caption{Case 2 of Finite MDPs}
  \label{alg:case2}
  \begin{algorithmic}[1]
  \State \textbf{Input:} $ \varepsilon>0, \delta>0, \Pi:=\{\pi^i\}_{i=1}^m, K=\lceil\frac{192m|\cS|^2\gamma^2}{(1-\gamma)^4\varepsilon^2}\log\big(\frac{(1+|\cS|)|\cS|^2}{\delta}\big)\rceil, T>0$.
  \While {$t < T$}
  \State $\Pi'\gets\Pi$;
  \For {$i=1;i\leq m ;i=i+1$}
  \For {$j=1;j\leq|\cS|;j=j+1$}
 \State Take $K$ transition samples from $\cS\cO$ with input $(s_j,\pi^i(s_j))$;
 \State Record reward $R(s_j,\pi^i(s_j))$ and compute $\bar{P}^{\pi^i}_{j,w}= \frac{\text{\# next state is }s_w}{K}$;
 \EndFor
 \State Compute $\bar{V}^{\pi^i} := (I-\gamma\bar{P}^{\pi^i})^{-1}R^{\pi^i};$
 \EndFor
 \For {$j=1;j\leq |\cS|;j=j+1$}\label{lin:forcase2}
 \State $\bar{V}^{\text{max}}_j\gets\max_{\pi\in\Pi'} \bar{V}^{\pi}_j$;
 \State Check all policies in $\Pi'$ and eliminate $\pi\in\Pi'$ if $\bar{V}^{\pi}_j<\bar{V}^{\text{max}}_j-\varepsilon/2$;
 \EndFor\label{lin:endforcase2}
 \State \textbf{Output:} Any policy from $\Pi'$.
\EndWhile
\end{algorithmic}
\end{algorithm}
\begin{lemma}\label{lemma:nbound}
Given two transition matrix $P^1$ and $P^2$, if $\|P^1-P^2\|_2\leq\epsilon$, then $\|P^1^n-P^2^n\|_2\leq n\epsilon$.
\begin{proof}
where the second inequality comes from
\begin{align}
\|P^1^n-P^2^n\|_2& \leq\|P^1-P^2\|_2\|P^1^{n-1}+P^1^{n-2}P^2+\dots+P^2^{n-1}\|_2\\
&\leq \|P^1-P^2\|_2 \Big(\sum_{j=0}^{n-1} \| (P^1^jP_2^{n-1-j}\|_2\Big)\\
&\leq \|P^1-P^2\|_2 \Big(\sum_{j=0}^{n-1} \| P^1\|^j_2\|P^2\|^{n-1-j}_2\Big)\leq n\epsilon.
\end{align}
\end{proof}
\end{lemma}

\begin{lemma}\label{lemma:case2}
Given $\varepsilon\in(0,1)$, $\delta\in(0,1)$, and $m$ $\varepsilon/8$-optimal policies for each $M^i$ respectively, in Algorithm \ref{alg:case2}, for each time step $t$, the output policy is $\varepsilon$-optimal for $M_t$ with probability at least $1-\delta$.
\begin{proof}
By Bernstein's inequality, we know that with $K:=\lceil\frac{192m|\cS|^2\gamma^2}{(1-\gamma)^4\varepsilon^2}\log\big(\frac{(1+|\cS|)|\cS|m}{\delta}\big)\rceil$ transition samples to approximate the $j$th row of $P^{\pi^i}$, the following holds
\begin{align}
\mathbb{P}(\|\bar P^{\pi^i}(j,\cdot)-P^{\pi^i}(j,\cdot)\|_2\leq \frac{(1-\gamma)^2}{8\gamma|\cS|}\varepsilon)\geq 1- (1+|\cS|)\exp\Big(\frac{-K(1-\gamma)^4\varepsilon^2}{192\gamma^2|\cS|^2}\Big)\geq 1-\delta/(|\cS|m).
\end{align}
By taking union bound over all rows, we further have that
\begin{align}
\mathbb{P}(\|\bar P^{\pi^i}-P^{\pi^i}\|_2 \leq \varepsilon':=\frac{(1-\gamma)^2}{8\gamma\sqrt{|\cS|}}\varepsilon) \geq 1- \delta/m.
\end{align}
Denote by $\bar{V}^{\pi^i}:= (I-\gamma\bar{P}^{\pi^i})^{-1}R^{\pi^i}$, then by Lemma \ref{lemma:nbound},
\begin{align}\label{eq:vbound}
\|\bar{V}^{\pi^i}-V^{\pi^i}\|_{\infty}&= \|(I-\gamma P^{\pi^i})^{-1}R^{\pi^i}-(I-\gamma\bar{P}^{\pi^i})^{-1}R^{\pi^i}\|_{\infty}\\
&=\|(I+\gamma\bar{P}^{\pi^i}+(\gamma\bar{P}^{\pi^i})^2+\dots)R^{\pi^i}-(I+\gamma P^{\pi^i} + (\gamma P^{\pi^i})^2+\dots)R^{\pi^i}\|_{\infty}\\
&=\|\gamma(\bar{P}^{\pi^i}-P^{\pi^i})R^{\pi^i} + \gamma^2((\bar{P}^{\pi^i})^2-(P^{\pi^i})^2)R^{\pi^i} + \dots\|_{\infty}\\
&\leq \|\gamma(\bar{P}^{\pi^i}-P^{\pi^i})R^{\pi^i}\|_{\infty} + \|\gamma^2((\bar{P}^{\pi^i})^2-(P^{\pi^i})^2)R^{\pi^i}\|_{\infty} + \dots\\
&\leq \Big(\gamma\|\bar{P}^{\pi^i}-P^{\pi^i}\|_2+\gamma^2\|(\bar{P}^{\pi^i})^2-(P^{\pi^i})^2\|_{2}+\dots\Big)\|R^{\pi^i}\|_2\\
&\leq (\gamma\varepsilon' + 2\gamma^2\varepsilon' + \dots + n\gamma^n\varepsilon' \dots) \sqrt{|\cS|}\\
&\leq \frac{\sqrt{|\cS|}\gamma \varepsilon'}{(1-\gamma)^2}=\varepsilon/8 \text{ with prob. at least $1-\delta/m$}
\end{align}
Next, we show that after Line \ref{lin:endforcase2}, $\Pi'$ only contains $\varepsilon$-optimal policies for $M_t$. Define the following events:
 \begin{align}
 \cE &:=\{ \text{for all } i\in \{1,2,\dots,m\}, \|\bar{V}^{\pi^i}-V^{\pi^i}\|_{\infty} \leq \varepsilon/8\}\\
 \cE_j &:=\{\text{After } j\text{th iteration of Line \ref{lin:forcase2}}, \Pi' \text{ contains  an } \varepsilon/8-\text{optimal policy for } M_t\}.
  \end{align}
By Eqn \eqref{eq:vbound}, $\mathbb{P}(\cE)\geq 1-\delta$. Suppose right after the $j-1$th iteration of Line \ref{lin:forcase2}, $\Pi_{j-1}'=\{\pi^{j_1},\pi^{j_2},\dots, \pi^{j_k}\}$, where $k\leq m$, and that $\bar{V}^{\pi^{j_1}}_j = \bar{V}^{\text{max}}_j$. If a policy $\pi\in\Pi'_{j-1}$ satisfies that $\bar{V}^{\pi}_j<\bar{V}^{\text{max}}_j-\varepsilon/2$, then on $\cE$, we have that
\begin{align}
V^*_j-V^{\pi}_j&=V^*_j-V^{\pi^{j_1}}_j+V^{\pi^{j_1}}_j-\bar{V}^{\pi^{j_1}}_j+\bar{V}^{\pi^{j_1}}_j-\bar{V}^{\pi}_j+\bar{V}^{\pi}_j-V^{\pi}_j\\
&> 0 -\varepsilon/8 + \varepsilon/2 - \varepsilon/8 >\varepsilon/8.
\end{align}
The above equation tells that when $\cE$ happens, if a policy is eliminated from $\Pi'$, then it is not an $\varepsilon/8-optimal$ policy of $M_t$. It further implies that $\mathbb{P}(\cE_j|\cE)=1$, i.e. $\cE\subset \cE_j$ for any $j\in\{1,2,\dots,|\cS|\}$. If a policy $\pi\in\Pi'_{j-1}$ satisfies that $\bar{V}^{\pi}_j\geq \bar{V}^{\text{max}}_j-\varepsilon/2$, then on $\cE\cap\cE_{j-1}$, we have that
\begin{align}
V^*_j - V^{\pi}_j &=V^*_j-V^{\pi^{j_1}}_j+V^{\pi^{j_1}}_j-\bar{V}^{\pi^{j_1}}_j+\bar{V}^{\pi^{j_1}}_j-\bar{V}^{\pi}_j+\bar{V}^{\pi}_j-V^{\pi}_j\\
&\leq \max_{\pi'\in\Pi'} V^{\pi'}_j+\varepsilon/8-V^{\pi^{j_1}}_j+V^{\pi^{j_1}}_j-\bar{V}^{\pi^{j_1}}_j+\bar{V}^{\pi^{j_1}}_j-\bar{V}^{\pi}_j+\bar{V}^{\pi}_j-V^{\pi}_j\\
&\leq \max_{\pi'\in\Pi'} \bar{V}^{\pi'}_j +3\varepsilon/8 -V^{\pi^{j_1}}_j+V^{\pi^{j_1}}_j-\bar{V}^{\pi^{j_1}}_j+\bar{V}^{\pi^{j_1}}_j-\bar{V}^{\pi}_j+\bar{V}^{\pi}_j-V^{\pi}_j\\
&=\bar{V}^{\pi^{j_1}}_j + 3\varepsilon/8 -V^{\pi^{j_1}}_j+V^{\pi^{j_1}}_j-\bar{V}^{\pi^{j_1}}_j+\bar{V}^{\pi^{j_1}}_j-\bar{V}^{\pi}_j+\bar{V}^{\pi}_j-V^{\pi}_j\\
&\leq 3\varepsilon/8 + \varepsilon/2 + \varepsilon/8 <\varepsilon
\end{align}
where the second inequality is due to the existence of an $\varepsilon/8$-optimal policy. This implies that if a policy $\pi$ can last till the end of scanning, then for any state $s_j, V^*_j-V^{\pi}_j\leq\varepsilon$, so it is $\varepsilon$-optimal. Thus, once $\cE$ happening, $\Pi'$ only contains $\varepsilon$-optimal policies for $M_t$.
\end{proof}
\end{lemma}

\begin{proposition}\label{prop:case2}
Given $\varepsilon>0$ and $\delta>0$, for Algorithm \ref{alg:case2}, it takes $\cO(\frac{m^2|\cS|^3\gamma^2}{(1-\gamma)^4\varepsilon^2}\log\big(\frac{(1+|\cS|)|\cS|m}{\delta}\big))$ samples each time to return an $\varepsilon$-optimal policy with probability at least $1-\delta$ for $M_t$.
\end{proposition}

\subsubsection{Case 3}
In Case 3, a similar approach is taken as in Case 1. We first classify the model of $M_t$ then gathering the corresponding samples for training. See the algorithm in Algorithm \ref{alg:case3}.

\begin{algorithm}
  \caption{Case 3 of Finite MDPs}
  \label{alg:case3}
  \begin{algorithmic}[1]
  \State \textbf{Input:} $\varepsilon>0, \delta>0, K=\lceil\frac{192}{\alpha^2}\log(|\cS||\cA|/\delta)\rceil, L = \lceil\frac{|\cS||\cA|}{(1-\gamma)^3\varepsilon^2}\log(\frac{|\cS||\cA|}{\delta})/ K\rceil , L^i=0 ~(i\in{1,2,\dots,m}), T>0, $ a list of MDPs $\cM'\gets\emptyset$, and a policy $\pi^0$ with $\pi^0(s_i)=a_1$. 
 \While {$t < T$}
  \For {$i\gets1; i\leq |\cS|; i\gets i+1$}\label{lin:part1start}
  \For {$j\gets1; j\leq |\cA|; j\gets j+1$}
 \State Take $K$ transition samples from $\cS\cO$ with input $(s_i,a_j)$;
 \State Record reward $R(s_i,a_j)$ and compute $\bar{P}_{i,j,w}= \frac{\text{\#the next state is }s_w}{K}, w\in{1,\dots,|\cS|}$.
 \EndFor
 \EndFor
 \State Define $\bar{M}_t:=(\cS,\cA,\bar{P},R,\gamma)$ and let $q\gets 0$.\label{lin:part1end}
 \For {$p\gets 1;p\leq |\cM'|; p\gets p+1$} \label{lin:part2start}
 \If {the $p$th model $M^p$ in $\cM'$ is $\alpha/8-$close to $\bar{M_t}$ }
 \State $q\gets p;$
 \If {$L^q<N$}
 \State $M^q\gets (\cS,\cA,\frac{L^q}{L^q+1}P^q+\frac{1}{L^q+1}\bar{P}, R, \gamma)$;
 \ElsIf {$L^q=L$}
 \State Compute an $\varepsilon$-optimal policy $\pi^q$ of $M^q$ with any planning algorithm
 \EndIf
 \State $L^q\gets L^q+1$;
 \EndIf
 \EndFor
 \If {$q=0$ and $|\cM'|<m$}
 \State add $\bar{M_t}$ as the last element of $\cM'$ and let $L^{|\cM'|}=1$.
 \EndIf\label{lin:part2end}
 \State \textbf{Output:} $\pi^q$.
\EndWhile
\end{algorithmic}
\end{algorithm}

The algorithm can be separated into three parts: part 1 is from Line \ref{lin:part1start} to \ref{lin:part1end} where we construct an empirical approximation of $M_t$; part 2 is from Line \ref{lin:part2start} to \ref{lin:part2end} where we search over the empirical model collection $\cM'$ and check if there exists an old approximation which also corresponds to $\cM(M_t)$ and then merge them, if not, depending on the size of $\cM'$, we create a new model; part 3 is to output a policy. Next, we show that when $t>...$, the output policy is $\varepsilon$-optimal for $M_t$ with probability at least $1-\delta$.

\begin{lemma}\label{lemma:case3part1}
In Algorithm \ref{alg:case3}, for each time step $t$, the empirical approximation $\bar{M_t}$ is $\alpha/8-$close to $\cM(M_t)$ with probability at least $1-\delta$.
\begin{proof}
Denote by $P$ the transition probability of $\cM(M_t)$. By Bernstein's Inequality, for any state-action pair $(s_i,a_j)$, we have that
\begin{align}
\mathbb{P}(\|\bar{P}(\cdot\vert s_i,a_j)-P(\cdot\vert s_i,a_j)\|_2\leq \alpha/8) \geq 1-(1+|\cS|)\exp\big(\frac{-K\alpha^2}{192}\big)\geq 1-\frac{\delta}{|\cS||\cA|}.
\end{align}
By taking the union bound over all state-action pairs, we have that $\bar{M_t}$ is $\alpha/8-$close to $\cM(M_t)$ with probability at least $1-\delta$.
\end{proof}
\end{lemma}

Since we only take $K=\lceil\frac{192}{\alpha^2}\log(|\cS||\cA|/\delta)\rceil$ samples each time. To calculate an $\varepsilon$-optimal policy for $\cM(M_t)$, by Theorem \ref{thm:mdp}, we need to accumulate at least $L = \lceil\frac{|\cS||\cA|}{(1-\gamma)^3\varepsilon^2}\log(\frac{|\cS||\cA|}{\delta})/ K\rceil$ times. Next, we calculate the probability of successful classification.

Define event $\cE_t:=\{\text{for all }t'\leq t, M_{t'} \text{ is correctly classified}\}$. Here, being correctly classified means that if $\cM(M_{t_1})=\cM(M_{t_2})$, then $\bar{M}_{t_1}$ and $\bar{M}_{t_2}$ are merged in the same group in $\cM'$.

\begin{lemma}\label{lemma:case3part2}
$\mathbb{P}(\cE_t)\geq 1-t\delta$.
\begin{proof}
Since the models in $\cM$ are $\alpha-$distant, as long as all $\bar{M_{t'}}$ are $\alpha/8-$close to $\cM(M_{t'})$, $\cE_t$ will happen. By Lemma \ref{lemma:case3part1} and taking the union bound, we have the desired result.
\end{proof}
\end{lemma}

Next, we show that when $t$ is great enough, all models can collect samples for $N$ times with high probability. We first define a special integer $$T_0(\delta):=\min\{x\in \NN \vert x\geq \frac{L-1}{\log(1/(1-p_{\min}))}\log x + \frac{|\log(\delta/cm)|}{\log(1/(1-p_{\min}))}\}$$.
\begin{lemma}\label{lemma:case3part3}
When $t>T_0$, all models in $\cM$ have appeared for at least $L$ times with probability at least $1-\delta$.
\begin{proof}
Denote by $\cE^i$ the event $\{$ when $t>T_0$, $M^i\in\cM$ have appeared for at least $L$ times$\}$. Then for any $i\in{1,2,\dots,m}$, we have that
\begin{align}\label{eq:probET}
\mathbb{P}[\cE^i] &= 1-\sum_{n=0}^{L-1} {t\choose n}(p^i)^n(1-p^i)^{T_0-n}\geq 1-\sum_{n=0}^{N-1} {T_0\choose n}(p^i)^n(1-p^i)^{T_0-n}\\
&\geq 1-\sum_{n=0}^{L-1} \big(\frac{eT_0}{n}\big)^n (p^i)^n(1-p^i)^{T_0-n}\\
&\geq 1-(1-p_{\text{min}})^{T_0} \Big(\sum_{n=0}^{N-1} \big(\frac{ep_{\text{max}}T_0}{n(1-p_{\text{max}})}\big)^n\Big)\\
&\geq 1-c(1-p_{\text{min}})^{T_0} T_0^{N-1},\label{eq:probETLAST}
\end{align}
where $c:=L\Big(\frac{ep_{\text{max}}}{(1-p_{\text{max}})(L-1)}\Big)^{L-1}$. Thus, by the definition of $T_0$, we have that $\mathbb{P}[\cE^i]>1-\delta/m$ for any $i\in\{1,2,\dots,m\}$. Taking the union bound over all $\cE^i$, we get the desired result.
\end{proof}
\end{lemma}
\begin{proposition}\label{prop:case2}
Given $\varepsilon\in(0,1)$ and $\delta'\in(0,1)$, Let $\delta=\delta'/(T_0+1)$ in Algorithm \ref{alg:case3}, for time step $t>T_0$, the output policy is $\varepsilon$-optimal for $M_t$ with probability at least $1-\delta'$.
\begin{proof}
By Lemma \ref{lemma:case3part3}, we know that when $t>T_0$, with probability at least $1-\delta$, all models will appear at least $L$ times. By Lemma \ref{lemma:case3part2}, we know that with probability at least $1-T_0\delta$, all samples are classified correctly. Combining these two results, when $t>T_0$, with probability at least $1-(T_0+1)\delta = 1-\delta'$, the returned policy is $\varepsilon$-optimal for $M_t$.
\end{proof}
\end{proposition}

We compare the one-time sample complexity of three cases with the general setting:
\begin{center}
\begin{tabular}{  c | c }
\hline
\textbf{Setting} & \textbf{$N_t$}\\
\hline
General & $\cO(\frac{|\cS||\cA|}{(1-\gamma)^3\varepsilon^2}\log(|\cS||\cA|/\delta))$\\
\hline
Case 1 of Finite MDPs & $\cO(\frac{m}{\alpha^2}\log(m/\delta))$\\
\hline
Case 2 of Finite MDPs & $\cO(\frac{m^2|\cS|^3\gamma^2}{(1-\gamma)^4\varepsilon^2}\log\big((1+|\cS|)|\cS|m/\delta\big))$\\
\hline
Case 3 of Finite MDPs & $\cO(\frac{|\cS||\cA|}{\alpha^2}\log(|\cS||\cA|/\delta))$\\
\hline
\end{tabular}
\end{center}}}

\bibliographystyle{apalike}
\bibliography{references}  
\newpage
\appendix
\section{Proofs of the Main Results}\label{app:proof}
\subsection{Proofs of the Upper Bound}
We first prove the upper bound result in Theorem \ref{thm:main_upper}. 
\cut{We update the definition of $\cA^s_{\cM}(c)$ in Equation \eqref{eq:As} as
\begin{align}\label{eq:As_new}
    \cA^s_{\cM}(c):=\begin{cases}
    \{a~|~\max_{a'} Q_{\cM}^*(s,a') - Q_{\cM}^*(s,a) < c\}, &c>0;\\
    \argmax_a Q^*_{\cM}(s,a), &c\leq 0.
    \end{cases}
\end{align}
Then for any value of $c$, $\cA^s_{\cM}(c)$ is well-defined and always contains all optimal actions for state $s$ in $\cM$. Now we prove the key lemmas for the upper bound result.}
\begin{proof}[Proof of Lemma \ref{lemma:Qbound}]
Since $R(s,a,s')\in[0,1]$, the bound $1/(1-\gamma)$ is a direct result. For any policy $\pi$, denote by $Q^{\pi}_0$ and $Q^{\pi}$ the action-value functions and $V^{\pi}_0$ and $V^{\pi}$ the state-value functions of running $\pi$ in $\cM_0$ and $\cM$, respectively. Then, by definition, for every $(s,a)\in\cS\times\cA$,
\begin{align}
    \big\vert Q_0^{\pi}(s,a)-Q^{\pi}(s,a)\big\vert &= \big\vert r_0(s,a) + \gamma p_0(\cdot|s,a)^\top V_0^{\pi} - r(s,a)-\gamma p(\cdot|s,a)^\top V^{\pi}\big\vert\\
    &\leq \big\vert r_0(s,a)-r(s,a)\big\vert + \gamma \big\vert p_0(\cdot|s,a)^{\top} V_0^{\pi}-p(\cdot|s,a)^\top V_0^{\pi}\big\vert \\
    &\quad + \gamma\cdot \big\vert p(\cdot|s,a)^{\top} V_0^{\pi}-p(\cdot|s,a)^\top V^{\pi}\big\vert\\
    &\leq \beta + \gamma\cdot \big\|p_0(\cdot|s,a)-p(\cdot|s,a)\big\|_1\cdot \big\|V_0^{\pi}\big\|_{\infty} + \gamma \cdot \big\|V_0^{\pi}-V^{\pi}\big\|_{\infty}\\
    &\leq \beta + \gamma\beta/(1-\gamma)+\gamma\big\|Q_0^{\pi}-Q^{\pi}\big\|_{\infty}.
\end{align}
Thus, $\big\|Q^{\pi}_0-Q^{\pi}\big\|_\infty\leq \beta/(1-\gamma)+\gamma\cdot \big\|Q^{\pi}_0-Q^{\pi}\big\|_\infty$ and $\big\|Q^{\pi}_0-Q^{\pi}\big\|_\infty\leq \beta/(1-\gamma)^2$. Denote by $\pi^*_0$ an optimal policy of $\cM_0$ and $\pi^*$ an optimal policy of $\cM$. Then
\begin{align}
    Q^{\pi^*}_0-Q^{\pi^*}\leq Q^*_0-Q^*\leq Q^{\pi^*_0}_0-Q^{\pi^*_0}.
\end{align}
Thus, $\big\|Q^*_0-Q^*\big\|_{\infty}\leq \max\Big\{~\big\|Q^{\pi^*}_0-Q^{\pi^*}\big\|_{\infty},~ \big\|Q^{\pi^*_0}_0-Q^{\pi^*_0}\big\|_{\infty}~\Big\}\leq \min\big\{\frac{1}{1-\gamma}, \frac{\beta}{(1-\gamma)^2}\big\}$.
\end{proof}

\begin{proof}[Proof of Lemma \ref{lemma:epsQ}]
Since $\pi(s)=a^s$ for all $s\in\cS$, we have
\begin{align}
V^*(s)-V^{\pi}(s) &= \max_{a\in\cA^s}Q^*(s,a) - Q^{\pi}(s,a^s)\\
&= \max_{a\in\cA^s}Q^*(s,a)-Q^*(s,a^s) + Q^*(s,a^s) - Q^{\pi}(s,a^s)\\
&\leq \epsilon(1-\gamma) + \gamma\sum_{s'\in\cS}p(s'|s,a^s)(V^*(s')-V^{\pi}(s'))\\
&\leq \epsilon(1-\gamma) + \gamma\cdot \epsilon(1-\gamma) + \dots\leq \sum_{n=0}^\infty\epsilon(1-\gamma)\gamma^n=\epsilon,
\end{align}
where the last line is an induction step. Note that the above inequality holds for all states. Thus, $\pi$ is an $\epsilon$-optimal policy for $\cM$.
\end{proof}
\begin{proof}[Proof of Lemma \ref{lemma:aselect}]
Following Lemma \ref{lemma:epsQ}, for every state $s$ we need to find an action $a^s\in\cA^s$ such that \begin{align}\label{eq:Qcondition}
    Q^*(s,a^s)\geq \max_a Q^*(s,a)-\epsilon(1-\gamma).
\end{align} 
Denote by $c_0:=\min\{1/(1-\gamma), \beta/(1-\gamma)^2\}$. If $$\max_{a\in\cA^s}Q^*(s,a)-\epsilon(1-\gamma)\leq \max_{a'\in\cA^s}Q^*_0(s,a')-c_0,$$
then by Lemma \ref{lemma:Qbound}, any optimal action for state $s$ in $\cM_0$ satisfies Equation \eqref{eq:Qcondition}. Since $\cA^s_{\cM_0}(C(\beta,\epsilon))$ contains all these optimal actions, it is a valid searching set. If 
$$\max_{a\in\cA^s}Q^*(s,a)-\epsilon(1-\gamma) > \max_{a'\in\cA^s}Q^*_0(s,a')-c_0,$$
then denoting by $a^*$ an action such that $Q^*(s,a^*)= \max_{a}Q^*(s,a)$, we have
$$Q^*_0(s,a^*)\geq Q^*(s,a^*)-c_0>\max_{a'}Q^*_0(s,a')-2c_0+\epsilon(1-\gamma),$$
i.e., $a^*\in\cA^s_{\cM_0}(C(\beta,\epsilon))$. Combining the above results, there exists an $\epsilon(1-\gamma)$-optimal action for state $s$ in $\cM$ in the set $\cA^s_{\cM_0}(C(\beta,\epsilon))$.
\end{proof}

Combining the above Lemmas, we can easily obtain Theorem \ref{thm:main_upper}. 

\subsection{Proofs of the Lower Bound}
Next, we prove the lower bound result.
We first introduce the following lemma.
\begin{lemma}\label{lemma:L_k}
Following the notation in Lemma \ref{lemma:checkM}, we have $$\big\{l\in[L]~\big\vert ~\vert~p_0^k+\alpha_2^k-p_0(x_k,a_l)~\vert\leq\beta/2\big\}=[L_k], ~\forall k\in[K].$$
\end{lemma}
\begin{proof}[Proof of Lemma \ref{lemma:L_k}]
Since $p_0(x_k,a_1)\geq p_0(x_k,a_2)\geq\dots\geq p_0(x_k,a_l)$, the set $\big\{l\in[L]~\big\vert ~\vert~p_0^k+\alpha_2^k-p_0(x_k,a_l)~\vert\leq\beta/2\big\}$ contains consecutive integers. Now we only need to show that $l=1$ is contained in the set. By definition of $p_0^k$, it holds that $ p_0(x_k,a_1)-\beta/2\leq p_0^k<p_0(x_k,a_1)$. When $\varepsilon \in (0, \varepsilon_0)$, we have $0<\alpha_2^k < \beta/2$. Thus, $l=1$ is in the set.
\end{proof}
Based on Lemma \ref{lemma:L_k}, we can prove Lemma \ref{lemma:checkM}.
\begin{proof}[Proof of Lemma \ref{lemma:checkM}]
We first verify $\cM_1\in B_{\text{TV}}(\cM_0, \beta)$. When $\varepsilon\in(0, \varepsilon_0)$, by definition, we have $0<\alpha_1^k<\alpha_2^k<\beta/2$ and $p_0(x_k,a_1)-\beta/2\leq p_0^k<p_0(x_k,a_1)$. Then it holds that $$p_0(x_k,a_1)-\beta/2<p_0^k+\alpha_1^k < p_0(x_k,a_1)+\alpha_1^k < p_0(x_k,a_1)+\beta/2.$$ Thus, $|p_0^k+\alpha_1^k-p_0(x_k,a_1)|\leq \beta/2.$
For $2\leq l \leq L_k$, if $p_0(x_k,a_l)\geq p_0^k$, then $p_0(x_k,a_l)-p_0^k\leq p_0(x_k,a_1)-p_0^k\leq \beta/2$; otherwise, $p_0^k-p_0(x_k,a_l)\leq p_0^k+\alpha^k_2-p_0(x_k,a_l)\leq \beta/2$ (Lemma \ref{lemma:L_k}). Hence, $\cM_1\in B_{\text{TV}}(\cM_0, \beta)$. For each $\cM_{k,l}$, the validity directly follows Lemma \ref{lemma:L_k} and $\cM_1\in B_{\text{TV}}(\cM_0, \beta)$.
\end{proof}
Now we prove Lemma \ref{lemma:newversion}. Before that, we need to give some definitions. Let 
$$t^* = \frac{c_1}{(1-\gamma)^3\varepsilon^2}\log\Big(\frac{1}{4\delta}\Big),$$
where $c_1>0$ is to be determined later. We denote by $T_{k,l}$ the number of samples that algorithm $\sA$ calls from the generative model with input state $y_1(x_{k},a_{l})$ till $\sA$ stops (these sample calls are not necessarily consecutive). For every $k\in[K], 1<l\leq L_k$, we define the following events:
\begin{align}
A_{k,l} &= \{T_{k,l}\leq 4t^*\}, \quad E_{k,l} = \{ \sA \text{ outputs a policy } \pi \text{ with } \pi(x_k) = a_1\},\\
C_{k,l} &= \Big\{\max_{1\leq T_{k,l}\leq 4t^*} \big\vert p_0^k\cdot T_{k,l}- S_{k,l}(T_{k,l})\big\vert \leq\sqrt{16t^* \cdot p_0^k\cdot (1-p_0^k)\log(1/4\delta)}\Big\},
\end{align}
where $S_{k,l}(T_{k,l})$ is the sum of rewards (non-discounted) by calling the generative model $T_{k,l}$ times with input state $y_1(x_k,a_l)$. For these events, we have the following lemmas.

\begin{lemma}\label{lemma:eventA}
For any $k\in[K], 1<l\leq L_k$, if ~$\mathbb{E}_1[T_{k,l}]\leq t^*$, $\mathbb{P}_1(A_{k,l})> 3/4$.
\end{lemma}
\begin{proof}
$$t^*\geq \mathbb{E}_1[T_{k,l}]> 4t^*\mathbb{P}_1(T_{k,l}>4t^*)=4t^*(1-\mathbb{P}_1(T_{k,l}\leq 4t^*)).$$
Thus, $\mathbb{P}_1(A_{k,l})> 3/4$.
\end{proof} 

\begin{lemma}\label{lemma:eventC}
For any $k\in[K], 1<l\leq L_k$, $\mathbb{P}_1(C_{k,l})> 3/4$.
\end{lemma}
\begin{proof}
Let $\epsilon:=\sqrt{16t^* \cdot p_0^k\cdot (1-p_0^k)\log(1/4\delta)}$. When $1<l\leq L_k$, under hypothesis $\cM_1$, $p_{\cM_1}(x_k,a_l)=p_0^k$. By definition, the instant rewards from state $y_1(x_k,a_l)$ are i.i.d. Bernoulli$(p_0^k)$ random variables and
$p_0^k\cdot T_{k,l}-S_{k,l}(T_{k,l})$ is a martingale. Using Doob's inequality (\cite[Theorem 4.4.2]{durrett2019probability}), we have the following bound:
\begin{align}
    &\mathbb{P}_1\Big(\max_{1\leq T_{k,l}\leq 4t^*} \big\vert p_0^k T_{k,l}- S_{k,l}(T_{k,l})\big\vert \geq \sqrt{16t^*p_0^k (1-p_0^k)\log\frac{1}{4\delta}}\Big)
    \leq\frac{\mathbb{E}_1\Big[\big(4t^*\cdot p_0^k- S_{k,l}(4t^*)\big)^2\Big]}{16t^* \cdot p_0^k(1-p_0^k)\log(1/4\delta)}.
\end{align}
Since $\mathbb{E}_1[(4t^*\cdot p_0^k- S_{k,l}(4t^*))^2]=4t^*p_0^k(1-p_0^k)$ and $\delta<1/40$, we obtain that
\begin{align}
    \mathbb{P}_1(C_{k,l})\geq 1-1/(4\log(1/4\delta))>3/4.
\end{align}
\end{proof}

Since $\sA$ is $(\cM_0, \beta, \varepsilon, \delta)$-correct, it should return a policy $\pi$ such that when $\cM=\cM_1$, $\pi(x_{k})=a_1$ for every $k\in[K]$ with probability at least $1-\delta$, i.e. $\mathbb{P}_1\big(~E_{k,l}, ~\text{for all } k\in[K], 1<l\leq L_k~ \big)\geq 1-\delta>3/4$. We define the event $\cE_{k,l}:=A_{k,l}\cap E_{k,l} \cap C_{k,l}$. Combining the results above, it holds that
\begin{align}
\mathbb{P}_1(\cE_{k,l})>1-3/4=1/4, \quad \forall~ k\in[K],~ 1<l\leq L_k.
\end{align}
Based on the above results, we show the correctness of Lemma \ref{lemma:newversion}.
\begin{proof}[Proof of Lemma \ref{lemma:newversion}]
Given $k\in[K]$ and $1<l\leq L_k$, we denote by $W$ the length-$T_{k,l}$ random sequence of the instant rewards by calling the generative model $T_{k,l}$ times with the input state $y_1(x_k,a_l)$. If $\cM=\cM_1$, this is an i.i.d. Bernoulli$(p_0^k)$ sequence; if $\cM=\cM_{k,l}$, this is an i.i.d Bernoulli$(p_0^k+\alpha_2^k)$ sequence. We define the likelihood function $L_{k,l}$ as
$$L_{k,l}(w) = \mathbb{P}_{k,l}(W=w)$$
for every possible realization $w$. We simplify the previous notation $S_{k,l}(T_{k,l})$ as $S_{k,l}$. Then we compute the following likelihood ratio
\begin{align}
\frac{L_{k,l}(W)}{L_1(W)}=& \frac{(p_0^k+\alpha_2^k)^{S_{k,l}}(1-p_0^k-\alpha_2^k)^{T_{k,l}-S_{k,l}}}{(p_0^k)^{S_{k,l}}(1-p_0^k)^{T_{k,l}-S_{k,l}}}= \left(1+\frac{\alpha_2^k}{p_0^k}\right)^{S_{k,l}}\left(1-\frac{\alpha_2^k}{1-p_0^k}\right)^{T_{k,l}-S_{k,l}}\\
=& \left(1+\frac{\alpha_2^k}{p_0^k}\right)^{S_{k,l}}\left(1-\frac{\alpha_2^k}{1-p_0^k}\right)^{S_{k,l}\frac{1-p_0^k}{p_0^k}}\left(1-\frac{\alpha_2^k}{1-p_0^k}\right)^{T_{k,l}-S_{k,l}/p_0^k}.
\end{align}
Since $\gamma>0.4$ and $p_0^k \geq \frac{4\gamma-1}{3\gamma}$, we have $p_0^k>1/2$. By our choice of $\alpha_2^k$ and $\varepsilon$, it holds that $\alpha_2^k/(1-p_0^k)\leq \beta/4\in(0, 1/2]$ and $\alpha_2^k/p_0^k\leq \beta(1-p_0^k)/(4p_0^k)\in(0,1/2)$. With the fact that $\log (1-u) \geq-u-u^{2}$ for $u\in[0,1/2]$ and $\exp (-u) \geq 1-u$ for $u\in[0,1]$, we have that
\begin{align}
\bigg(1-\frac{\alpha_2^k}{1-p_0^k}\bigg)^{\frac{1-p_0^k}{p_0^k}}& \geq \exp \left(\frac{1-p_0^k}{p_0^k}\left(-\frac{\alpha_2^k}{1-p_0^k}-\Big(\frac{\alpha_2^k}{1-p_0^k}\Big)^{2}\right)\right)\\ &\geq\left(1-\frac{\alpha_2^k}{p_0^k}\right)\left(1-\frac{(\alpha_2^k)^{2}}{p_0^k(1-p_0^k)}\right).
\end{align}
Thus,
\begin{align} \frac{L_{k,l}(W)}{L_{1}(W)} & \geq\left(1-\frac{(\alpha_2^k)^2}{(p_0^k)^2}\right)^{S_{k,l}}\left(1-\frac{(\alpha_2^k)^{2}}{p_0^k\cdot(1-p_0^k)}\right)^{S_{k,l}}\bigg(1-\frac{\alpha_2^k}{1-p_0^k}\bigg)^{T_{k,l}-S_{k,l}/p_0^k} \\ & \geq\left(1-\frac{(\alpha_2^k)^2}{(p_0^k)^2}\right)^{T_{k,l}}\left(1-\frac{(\alpha_2^k)^{2}}{p_0^k\cdot (1-p_0^k)}\right)^{T_{k,l}}\bigg(1-\frac{\alpha_2^k}{1-p_0^k}\bigg)^{T_{k,l}-S_{k,l}/p_0^k}
\end{align}
due to $S_{k,l}\leq T_{k,l}$. Next, we proceed on the event $\cE_{k,l}$. By definition, if $\cE_{k,l}$ occurs, $A_{k,l}$ also occurs. Using $\log(1-u)\geq -2u$ for $u\in[0,1/2]$, it follows that
\begin{align}
   \left(1-\frac{(\alpha_2^k)^2}{(p_0^k)^2}\right)^{T_{k,l}} \geq& \left(1-\frac{(\alpha_2^k)^2}{(p_0^k)^2}\right)^{4t^*} \geq \exp \left(-8t^* \frac{(\alpha_2^k)^2}{(p_0^k)^2}\right)\\
   = &\exp \left(\frac{-8c_1\log(1/4\delta)}{(1-\gamma)^3\varepsilon^2}\cdot\frac{16(1-\gamma p_0^k)^4\varepsilon^2}{\gamma^2(p_0^k)^2}\right)\\
\geq &\exp \Big(-128 c_1 \log(1/4\delta) \frac{(1-\frac{4\gamma-1}{3})^4}{(1-\gamma)^3\gamma^2(\frac{4\gamma-1}{3\gamma})^2} \Big)\\
= &\exp \Big(-128 c_1 \log(1/4\delta)\frac{256(1-\gamma)}{9(4\gamma-1)^2} \Big)\\
\geq &\exp \Big(-128 c_1 \log(1/4\delta)\frac{256}{9*0.6} \Big)\geq\left(4\delta\right)^{6100c_1},
\end{align}
where the second line follows $p_0^k\geq \frac{4\gamma-1}{3\gamma}$. Using $\log(1-u)\geq -2u$ for $u\in[0,1/2]$, we also obtain
\begin{align}
    \left(1-\frac{(\alpha_2^k)^{2}}{p_0^k\cdot(1-p_0^k)}\right)^{T_{k,l}} &\geq \left(1-\frac{(\alpha_2^k)^{2}}{p_0^k\cdot(1-p_0^k)}\right)^{4t^*}\geq \exp \left(-8t^* \frac{(\alpha_2^k)^{2}}{p_0^k(1-p_0^k)}\right)\\
    &=\exp \left(-8\frac{c_1}{(1-\gamma)^3\varepsilon^2}\log(1/4\delta)\frac{16(1-\gamma p_0^k)^4\varepsilon^2}{\gamma^2p_0^k(1-p_0^k)}\right)\\
    &\geq\exp \Big(-128 c_1 \log(1/4\delta) \frac{(1-\frac{4\gamma-1}{3})^4}{(1-\gamma)^3\gamma^2(\frac{4\gamma-1}{3\gamma})\min\{\frac{1-\gamma}{3\gamma}, \beta/2\}} \Big)\\
    &=\exp \Big(-128 c_1 \log(1/4\delta)\frac{256}{27\gamma(4\gamma-1)}\cdot \frac{1-\gamma}{\min\{\frac{1-\gamma}{3\gamma}, \beta/2\}}\Big)\\
    &\geq \exp \Big(-128 c_1 \log(1/4\delta)\frac{256\cdot 20}{27\gamma(4\gamma-1)}\Big)\\
    &\geq \exp \Big(-128 c_1 \log(1/4\delta)\frac{5120}{27*0.4*0.6} \Big)\geq\left(4\delta\right)^{102000c_1},
\end{align}
where the third line follows $1-p_0^k\geq \min\{1-\frac{4\gamma-1}{3\gamma}, 1-p_0(x_k,a_1)+\beta/2\}\geq \min\{\frac{1-\gamma}{3\gamma}, \beta/2\}$ and the fifth line is due to $\frac{1-\gamma}{\min\{\frac{1-\gamma}{3\gamma}, \beta/2\}}\leq\max\{3\gamma, 20\}=20$ (since $\gamma> 1-2\beta$).
Further, when $\cE_{k,l}$ occurs, $A_{k,l}$ and $C_{k,l}$ both occur. Therefore, following similar steps, we have
\begin{align}
\left(1-\frac{\alpha_2^k}{1-p_0^k}\right)^{T_{k,l}-S_{k,l}/{p_0^k}} & \geq\left(1-\frac{\alpha_2^k}{1-p_0^k}\right)^{\max_{1\leq T_{k,l}\leq 4t^*} |T_{k,l}-S_{k,l}/p_0^k|}\\
&\geq\left(1-\frac{\alpha_2^k}{1-p_0^k}\right)^{\sqrt{16t^*\frac{1-p_0^k}{p_0^k}\log(1/4\delta)}} \\
& \geq \exp \left(-\sqrt{64\frac{(\alpha_2^k)^2}{p_0^k(1-p_0^k)}t^*\log(1/4\delta)}\right)
\geq\left(4\delta \right)^{\sqrt{810000c_1}}.
\end{align}
In total, we have $L_{k,l}(W)/L_{1}(W)\geq(4\delta)^{108100c_1+\sqrt{810000c_1}}$.
By taking $c_1$ small enough, e.g. $c_1=5e^{-7}$, we have $L_{k,l}(W)/L_{1}(W) > 4\delta.$
By a change of measure,
\begin{equation}\label{eq:p2}
\mathbb{P}_{k,l}(E_{k,l})\geq \mathbb{P}_{k,l}(\cE_{k,l})=\mathbb{E}_{k,l}[\mathbf{1}_{\cE_{k,l}}]=\mathbb{E}_1\left[\frac{L_{k,l}(W)}{L_1(W)}\mathbf{1}_{\cE_{k,l}}\right]> 4\delta*1/4=\delta.
\end{equation}
\end{proof}
Finally, we can show Theorem \ref{thm:main_lower}.
\begin{proof}[Proof of the Lower Bound in Theorem \ref{thm:main_lower}]
Since $\sA$ is $(\cM^0,\beta,\varepsilon, \delta)$-correct, under hypothesis $\cM_{k,l}$, $\sA$ should produce a policy $\pi$ such that $\pi(x_k)=a_l$ with probability $\geq 1-\delta$. Thus, we should have $\mathbb{P}_{k,l}(E_{k,l})<\delta$ for all $k\in[K], 1<l\leq L_k$. From Lemma~\ref{lemma:newversion}, it requires $\mathbb{E}_1[T_{k,l}]>t^*$ for all $k\in[K], 1<l\leq L_k$. In total, we need $\Omega\left(\frac{\sum_{k\in[K]}L_k}{(1-\gamma)^3\varepsilon^2}\log(1/\delta)\right)$ samples. By definition of $L_k$, for every $k\in[K]$, we have
\begin{align}\label{eq:L_k}
\{a_l, l\leq L_k\} &= \cA^{x_k}_{\cM_0}\Big(\frac{1}{1-\gamma p_0(x_k,a_1)}-\frac{1}{1-\gamma (p_0^k+\alpha_2^k-\beta/2)}\Big)\\
&=\cA^{x_k}_{\cM_0}\Big(V^*(x_k) - \frac{1}{1-\gamma p_0^k - 4(1-\gamma p_0^k)^2\varepsilon+\beta\gamma/2}\Big).
\end{align}
If $p_0^k=\frac{4\gamma-1}{3\gamma}$, i.e. $p_0(x_k,a_1)-\beta/2\leq\frac{4\gamma-1}{3\gamma}$, then 
\begin{align}\label{eq:Lk1}
    L_k &= \Big\vert\cA^{x_k}_{\cM_0}\Big(V^*(x_k) - \frac{9}{12(1-\gamma)-64(1-\gamma)^2\varepsilon+4.5\beta\gamma}\Big) \Big\vert.
    \end{align}
If $p_0^k=p_0(x_k,a_1)-\beta/2$, i.e. $p_0(x_k,a_1)-\beta/2 > \frac{4\gamma-1}{3\gamma}$, then
 \begin{align}\label{eq:Lk2}
    L_k &= \Big\vert\cA^{x_k}_{\cM_0}\Big(V^*(x_k) - \frac{1}{1-\gamma p_0(x_k,a_1)+\gamma\beta - 4\varepsilon(1-\gamma p_0(x_k,a_1)+\gamma\beta/2)^2}\Big)\Big\vert\\
    &=\Big\vert\cA^{x_k}_{\cM_0}\Big(V^*(x_k) - \frac{V^*(x_k)^2}{V^*(x_k)+\gamma\beta (V^*(x_k))^2 - 4\varepsilon(1+\gamma\beta V^*(x_k)/2)^2}\Big)\Big\vert
    \end{align}   
  Combining Equation \eqref{eq:Lk1} and \eqref{eq:Lk2}, we have
$$\Omega\left(\frac{\sum_{k\in[K]}L_k}{(1-\gamma)^3\varepsilon^2}\log(1/\delta)\right)=\Omega\left(\frac{\sum_{s\in\cS'}|\cA^{s}_{\cM_0}(\underline{C}_s)|}{(1-\gamma)^3\varepsilon^2}\log(1/\delta)\right),$$
which concludes our proof of the lower bound result in Theorem \ref{thm:main_lower}. 
\end{proof}
In Corollary \ref{coro:worst}, we have the worst scenario, where $L_k=|\cA|$. This is possible if there is no gap between the values in $\cM_0$. Then, for any $c>0$, $|\cA^{x_k}_{\cM_0}(c)|=|\cA|$.

\end{document}